\documentclass{article}
\ifdefined\pdfminorversion\pdfminorversion=7\fi
\PassOptionsToPackage{table,dvipsnames}{xcolor}
\usepackage{iclr2027_conference,times}
\iclrfinalcopy
\usepackage[utf8]{inputenc} % allow utf-8 input
\usepackage[T1]{fontenc}    % use 8-bit T1 fonts
\usepackage{hyperref}       % hyperlinks
\usepackage{url}            % simple URL typesetting
\usepackage{booktabs}       % professional-quality tables
\usepackage[table,dvipsnames]{xcolor} % colors, \rowcolor, table shading
\usepackage{pifont}        % ding symbols for check/cross
\usepackage{amsfonts}       % blackboard math symbols
\usepackage{nicefrac}       % compact symbols for 1/2, etc.
\usepackage{microtype}      % microtypography
\usepackage{xspace}
\usepackage{adjustbox}
\usepackage{bm}
\usepackage{physics}
\usepackage{amssymb}
\usepackage{makecell}
\usepackage{multirow}
\usepackage{caption}
\usepackage{graphicx}
\usepackage{algorithm}
\usepackage{algpseudocode}
\usepackage[most]{tcolorbox}
\usepackage{amsthm}
\usepackage{thmtools}
\usepackage{thm-restate}
\usepackage{xspace}
\usepackage{wrapfig}
\usepackage{needspace}
\usepackage{pgfplots}
\usepackage{subcaption}
\usepackage{tikz}
\usepackage{wrapfig}
\usepackage{subcaption}

\usepackage{enumitem}

\usepackage[normalem]{ulem}

\pgfplotsset{compat=1.18}

\definecolor{codebg}{RGB}{245,245,245}
\definecolor{codecomment}{RGB}{74,135,135}
\definecolor{codekw}{RGB}{215,48,120}
\definecolor{codeop}{RGB}{40,80,255}

\definecolor{citecolor}{HTML}{0071BC}
\definecolor{linkcolor}{HTML}{ED1C24}

\definecolor{curveblue}{HTML}{314A7A}
\definecolor{curveyellow}{HTML}{EFD54A}
\definecolor{revisionblue}{RGB}{220,0,0}

\hypersetup{
colorlinks=true,
linkcolor=linkcolor,
citecolor=citecolor,
}

\newcommand{\algline}[2]{%
  \noindent\makebox[\linewidth][l]{%
    #1%
    \hfill
    \textcolor{codecomment}{\# #2}%
  }\par
}
\newcommand{\codekw}[1]{\textcolor{codekw}{#1}}
\newcommand{\codeop}[1]{\textcolor{codeop}{#1}}
\newcolumntype{x}[1]{>{\centering\arraybackslash}p{#1pt}}
\newcolumntype{y}[1]{>{\raggedright\arraybackslash}p{#1pt}}
\usepackage{amsmath,amsfonts,bm,bbm}

\def\eqref#1{Eq.~(\ref{#1})}
\def\1{\bm{1}}

\def\vmu{{\bm{\mu}}}
\def\vtheta{{\bm{\theta}}}
\def\vphi{{\bm{\phi}}}

\def\vc{{\bm{c}}}

\def\vf{{\bm{f}}}

\def\vu{{\bm{u}}}
\def\vv{{\bm{v}}}

\def\vx{{\bm{x}}}

\def\vz{{\bm{z}}}
\def\vF{{\bm{F}}}
\def\veps{{\bm{\varepsilon}}}

\DeclareMathAlphabet{\mathsfit}{\encodingdefault}{\sfdefault}{m}{sl}
\SetMathAlphabet{\mathsfit}{bold}{\encodingdefault}{\sfdefault}{bx}{n}

\newcommand{\E}{\mathbb{E}}

\newcommand{\lrp}[1]{\left(#1\right)}

\newcommand{\sg}{\operatorname{sg}}
\newcommand{\jvp}{\operatorname{JVP}}
\newcommand{\ie}{\textit{i.e.}\xspace}
\newcommand{\eg}{\textit{e.g.}\xspace}

\newcommand{\xmark}{\ding{55}}

\makeatletter
\DeclareRobustCommand\onedot{\futurelet\@let@token\@onedot}
\def\@onedot{\ifx\@let@token.\else.\null\fi\xspace}
 
\def\eg{\emph{e.g}\onedot} 
\def\ie{\emph{i.e}\onedot}

\makeatother

\title{Consistent Distribution Matching for Data-Free Diffusion Distillation}

\author{Yuxiang Fu$^{1,2}$ \quad
Qi Yan$^{1,2}$ \quad
Zike Wu$^{1,2}$ \quad
Yongxing Zhang$^{1,2}$ \\[2pt]
\textbf{Purang Abolmaesumi$^{1}$\quad 
Lele Wang$^{1}$\quad  
Renjie Liao$^{1,2,3}$}\\[3pt]
$^{1}$University of British Columbia, $^{2}$Vector Institute for AI, $^{3}$Canada CIFAR AI Chair\\
\vspace{-1cm}
}

\newcommand{\model}{\text{CDMD} }
\newcommand{\modelfull}{Consistent Distribution Matching Distillation }

\newcommand{\stdist}{q_t^{\vtheta}}

\theoremstyle{plain}
\newtheorem{theorem}{Theorem}[section]
\begin{document}

\maketitle
\fancyhead[L]{}

\begin{figure}[h]
    \centering

    \begin{minipage}{0.38\textwidth}
        \centering
        \includegraphics[
            width=\linewidth
        ]{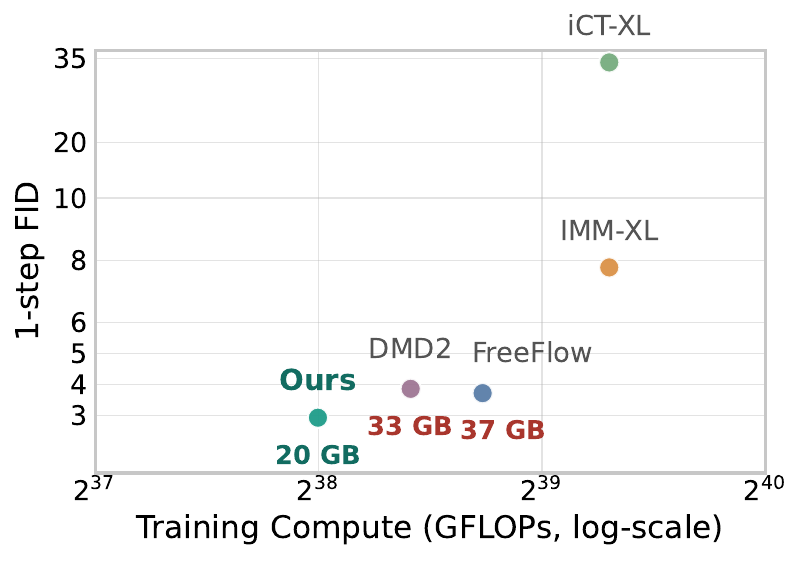}
    \end{minipage}%
    \begin{minipage}{0.6\textwidth}
        \centering
        \includegraphics[width=\textwidth]{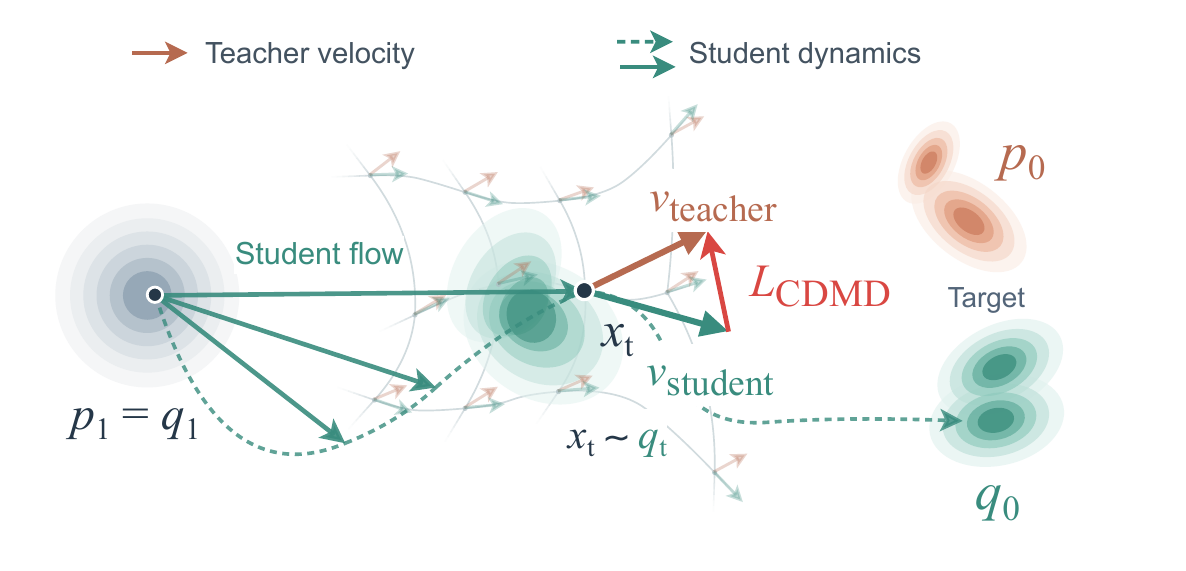}
    \end{minipage}
    \vspace{-1em}

    \caption{Left: compute versus one-step FID-50K on ImageNet 256$\times$256. The training memory below is measured with the same configuration on a single A100 80GB GPU. Right: Consistent Distribution Matching overview. 
    % Our method aligns the teacher and student marginal probability paths without data or simulation.
    Our method aligns the marginal probability paths of the teacher $p_t$ and the student $q_t$ without requiring curated distillation data or computationally intensive simulations.}
    \label{fig:compute-fid-and-1nfe-data-free-distillation}
\end{figure}
\begin{abstract}
Flow and diffusion models suffer from slow inference due to computationally expensive numerical integration. Distillation provides a promising way for a student model to learn from a teacher's dynamics, enabling one-step or few-step generation. However, existing methods often depend on curated distillation datasets, costly teacher rollouts, or auxiliary proxy networks, which complicate model training and scaling. In this work, we propose Consistent Distribution Matching, a simulation-free and data-free distillation method for accelerating diffusion and flow models while preserving strong generative capacity. Our key insight is to unify sample generation and score estimation with one student network. Thus, our framework uses only \textbf{two} models, a frozen teacher and a trainable student, and optimizes \textbf{one} objective. We prove that minimizing our objective indicates Wasserstein convergence of the student flow-map pushforwards to the teacher marginals. On ImageNet 256$\times$256, our method attains an FID of \textbf{2.04} with a single function evaluation (1-NFE) and a 4-NFE FID of \textbf{1.37} within 40 epochs of training, surpassing the state-of-the-art distillation baselines without data. Our code code and model are available at \url{https://consistentdmd.github.io/}.
\end{abstract}

\section{Introduction}

Diffusion and flow-based generative models~\citep{albergo2023stochastic, lipman2023flow, liu2023flow, dhariwal2021diffusion, song2020score, edm, ho2020denoising, rombach2021highresolution, tong2025flow} have become a leading framework for high-fidelity image and video synthesis. By learning a continuous transport from a simple prior distribution to a complex data distribution, these models provide stable training and strong sample quality across a wide spectrum of generative tasks~\citep{fu2025moflow,yan2023swingnn, wu2023fast, li2025latenthoi, zhang2025symmetricdiffusers, yan2025trajflow, yin2025diverse}. Despite these advantages, their inference cost remains non-trivial. The primary bottleneck lies in the iterative denoising procedure during inference. Generating a single sample requires numerically solving the differential equations governing the generative dynamics, which often involves hundreds of neural function evaluations.

To address this challenge, diffusion distillation~\citep{geng2024consistency, cm, ict, scm, meng2023distillation, Frans2024shortcut, sabour2025align, fu2025moflow} provides a natural route to accelerate the generation with a new model. Given a powerful teacher diffuser, the student is trained to reproduce the teacher’s generative behavior while compressing the inference process in one or few steps. Existing distillation methods have made substantial progress, yet their training pipelines often introduce additional complexity. Some depend on heavy backward simulation~\citep{liu2023flow,zhao2026meanflow, gu2023boot} of a teacher diffusion model, auxiliary fake score networks~\citep{yin2024one,yin2024improved, xie2024distillation, zhou2024score, zhou2025adversarial, luo2025tdm, ge2026senseflow}, adversarial objectives~\citep{zhou2025adversarial,ctm, sauer2024fast, sauer2025adversarial}, or multi-stage training~\citep{lee2024truncated}. Although these components enable high-quality generation in few steps, they incur substantial memory overhead, impose an egregiously high optimization burden, and ultimately hinder the scalability of the distillation procedure.

% ~\qi{This empirical downsides are about computation efficiency and have nothing to do with `enforcing consistent behavior between the teacher and student' mentioned before. It is unclear why more complex distillation process cannot deliver consistent student behavior.}

\begin{figure}[t]
	\centering
	\includegraphics[width=\linewidth]{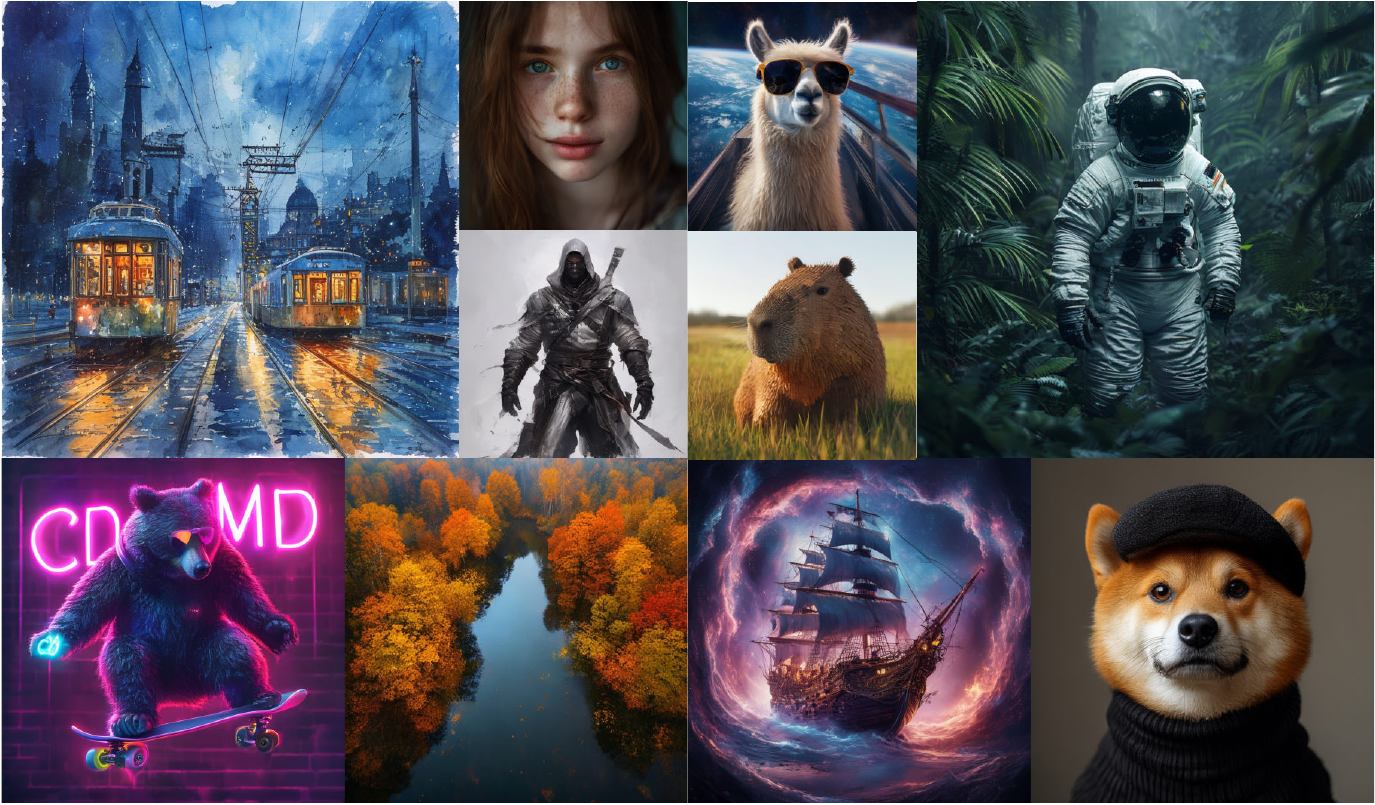}
	\caption{Curated 1024$\times$1024 samples produced by our 4-step \model generator distilled from SANA1.6B LORA~\citep{xie2024sana}. Please zoom in for details.}
	\label{fig:t2i_cdmd_sana_figs}
	\vspace{-1em}
\end{figure}

Beyond this procedural complexity, reliance on external distillation data~\citep{yin2024one, yin2024improved, peng2025facm, cm, scm, meng2023distillation, zhou2024score} introduces a fundamental source of mismatch between data and teacher probability path. Data-based methods construct intermediate states by noising real data samples, often drawn from a distillation dataset that differs from the data used to train the teacher. Consequently, these states may deviate from those actually visited by the teacher during generation, causing the student to be supervised on a surrogate state distribution, instead of along the teacher’s probability path. This mismatch of the supervision states~\citep{tong2025flow} becomes pronounced when fine-tuning tilts the teacher's distribution away from the original data~\citep{xie2024distillation, chen2025pi, liu2025flow} or when the teacher adopts classifier-free guidance~\citep{ho2022classifier} or autoguidance~\citep{karras2024auto}. In these settings, noising samples from the original data distribution provides progressively less faithful coverage of the states visited by the teacher. Moreover, when a teacher model is trained on proprietary data~\citep{rombach2021highresolution, wu2025qwen, seedream2025seedream, flux-2-2025}, constructing such a data-anchored supervision path may be infeasible.
% ~\qi{How are these severe cases connected to the reliance on external distillation data discussed in the opening sentence?} 
Direct supervision on teacher trajectories would address the mismatch, but obtaining such states requires backward simulation, thereby reintroducing the aforementioned computational burden. This trade-off motivates our central question: \textit{can we distill a diffusion or flow teacher into a few-step student without external data, teacher rollouts, or auxiliary fake score models?}

We propose Consistent Distribution Matching Distillation (CDMD), a data-free and simulation-free framework for distilling diffusion and flow models using only a frozen teacher and a trainable student. Our framework aims to circumvent the distribution mismatch issues. Our central idea is to unify sample generation and score estimation within a single flow map model, eliminating the need for a separately trained proxy network. The student predicts average velocities over arbitrary denoising time intervals, supporting generation in one or a few steps. We bootstrap intermediate training states from student predictions and analytic forward noising, avoiding real data and teacher rollouts while allowing the training distribution to evolve with the student. 

Although these states do not necessarily coincide with those visited by the teacher during generation, our objective progressively aligns the corresponding student probability path with the teacher probability path. This enables the teacher to provide corrective velocity supervision directly on states relevant to the student's generative dynamics. At these states, we regress the student velocity surrogate toward the teacher's instantaneous velocity through a single objective. Moreover, as the states become noisier, they increasingly resemble the distributions encountered during teacher training, making the teacher's velocity supervision more reliable. This trend is consistently observed across existing distribution matching baselines~\citep{yin2024improved, yin2024one}. The quality of the supervision also improves as the student distribution moves closers to the teacher distribution. 
%\qi{[These supervision states may still differ from those visited by the teacher during generation, which was a key criticism of the baselines. How do we address this concern?]}

We further establish a theoretical connection between local transport consistency and distribution matching. Under appropriate assumptions, minimizing population loss implies that the distributions induced by the student flow map converge to the teacher marginals in squared Wasserstein-2 distance averaged over time. Importantly, this result does not require the supervision states to be sampled from the teacher's probability path. Next, the minimizer of our objective becomes the teacher average velocity. Overall, these results explain how local consistency can identify the teacher’s transport behavior and achieve distributional alignment without auxiliary score networks.

% We also analyze the Jacobian-vector-product correction in the data-free regime. Because the correction is evaluated on student-generated states, its variance depends on both the tangent direction used in the JVP and the discrepancy between the student and teacher paths. We study these effects through a variance diagnostic and validate them on synthetic datasets where the path geometry is directly observable. These experiments support the theoretical picture: stable low-variance corrections and better student--teacher path alignment lead to more reliable distribution matching.

Our contributions are threefold:
\begin{enumerate}
	\item We introduce Consistent Distribution Matching Distillation, a data-free and simulation-free distillation framework for diffusion and flow-based generative models. The method uses only a frozen teacher and a trainable student, avoiding external distillation data, teacher rollouts and auxiliary proxy networks.
	\item We provide a distribution-matching analysis showing that minimizing the proposed \model objective leads to narrowing the gap between the teacher distribution and the student distribution under regularity assumptions. The analysis also identifies the teacher's average velocity field as the unique global minimizer for our objective.
	\item We analyze the variance of the JVP correction in the data-free setting and validate the analysis on synthetic datasets. These results clarify when the correction improves stability and when student-teacher path mismatch can degrade training. On ImageNet256$\times$256, \modelfull achieves a 1-NFE FID of \textbf{2.04} and a 4-NFE FID of \textbf{1.37}, achieving state-of-the-art performance among data-free distillation methods. In the text-to-image comparison when training data is inaccessible, our method leads most preference and alignment metrics and FID.
\end{enumerate}

% Since our objective is fully data-free, it avoids the data-induced teacher--student mismatch identified in FreeFlow~\citep{tong2025flow}. Instead of matching the student to teacher trajectories anchored at external or degraded samples, we directly align the teacher and student through their induced velocity fields and distributions. This leads to a simpler and more memory-efficient distillation procedure, while retaining a principled connection to distribution matching.

% Our main contributions are summarized as follows:
% \begin{enumerate}
	%     \item We introduce a data-free distillation objective that replaces the DMD pipeline with a direct teacher--student training procedure. The proposed objective requires only the pretrained teacher and the student model, eliminating the need of external dataset and proxy model and thereby improving training memory efficiency.
	
	%     \item We provide a theoretical analysis showing that, under suitable regularity and support conditions, minimizing our objective aligns the student distribution with the teacher distribution. This result gives a distribution-matching interpretation to the proposed trajectory-level distillation objective, bridging trajectory distillation and distribution matching in a single framework.
	
	%     \item By modeling the data-free average velocity field as $\vu_\theta(\vx_t, t, r)$, this enable us the freedom to explore inference-time scaling effect. 
	% \end{enumerate}

% \todo{experiments contributions here for the last item, add results}

\section{Background}
\begin{figure}[t]
    \centering
    \includegraphics[width=\linewidth]{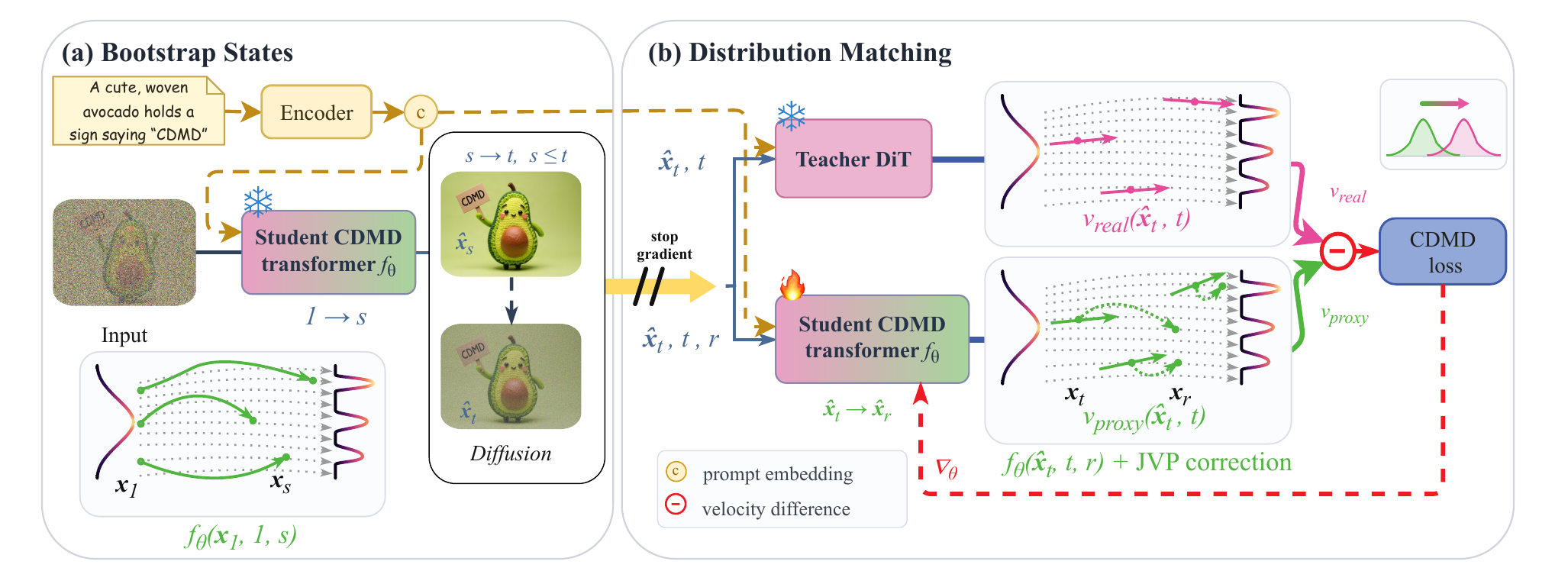}
    \caption{Overview of our Consistent Distribution Matching method. Panel (a) illustrates an effective data-free construction of $\hat{\vx}_t$. Note that $s$ is the balancing time between student transition and forward path. In (b), we demonstrate how the student proxy velocity matches the target teacher velocity. $t,r$ are the start and end time, respectively. Note that the teacher model is the only supervision source. We include the architectural detail of our CDMD transformer in the Appendix~\ref{app:endpoint_conditioning}. 
    %\qi{[Add proper reference for appendix materials.]}
    }
    \label{fig:main_pipeline}
    \vspace{-1em}
\end{figure}

\paragraph{Diffusion and Flow Matching.} Let $p_0$ denote the data distribution and $p_1$ a prior distribution. For each $\vx_0 \sim p_0$ and $ \vx_1 \sim p_1$, Flow Matching (FM) \citep{albergo2023stochastic, lipman2023flow, liu2023flow} defines a continuous flow path between $\vx_0$ and $\vx_1$ via an affine transformation 
$\vx_t := \alpha_t\vx_0 + \sigma_t\vx_1,\ t \in [0,1],$ where $\alpha_t, \sigma_t:[0,1]\to [0,1]$ are prescribed schedules. These schedules are smooth, monotone functions with the boundary conditions $\alpha_0=\sigma_1=1, \alpha_1=\sigma_0=0$. One standard approach constructs a path
$\vx_t = (1-t)\vx_0 + t\vx_1$ between a noise sample $\vx_1$ and a data sample $\vx_0$, which induces a constant conditional velocity \citep{lipman2023flow, tong2023improving}: $\vv_t(\vx_t \mid \vx_0, \vx_1) = \dv{\vx_t}{t} = \vx_1 - \vx_0$.
By conditioning over all possible data and prior pairs $(\vx_0,\vx_1)$, we obtain a marginal probability path $\vx_t \sim p_t$ and the corresponding target velocity field $\vv : \mathbb{R}^D \times [0,1] \to \mathbb{R}^D,$
\begin{equation}
\label{eq:marginal_cond}
    \vv(\vx_t,t) := \mathbb{E}_{p(\vx_0,\vx_1\mid\vx_t)}[\vv_t(\vx_t \mid \vx_0, \vx_1)].
\end{equation}
Here $\vv(\cdot,t)$ denotes the marginal velocity field at time $t$ and Eq.~(\ref{eq:marginal_cond}) relates the marginal velocity field to the conditional velocity $\vv_t$. 
Due to the intractability of the marginal velocity field $\vv(\vx_t,t)$,  
FM therefore learns a neural velocity field $\vv_{\vtheta}(\cdot):\mathbb{R}^D \times [0,1]\to \mathbb{R}^D$ by minimizing the conditional FM (CFM) objective
\begin{equation}
\mathcal{L}_{\mathrm{CFM}}(\vtheta)
=
\mathbb{E}_{\vx_t,\vv_t,t}
\left[
\|\vv_\vtheta(\vx_t,t)-\vv_t(\vx_t\mid \vx_0,\vx_1)\|_2^2
\right], \label{eq:cfm_objective}
\end{equation}
for all $t \in [0,1]$. 
% If we choose a standard Gaussian prior $p_1$, the conditional path is given by
% \[
%     p_t(\vx_t \mid \vx_0)
%     =
%     \mathcal{N}\!\left(\vx_t;\alpha_t \vx_0,\sigma_t^2 \mI\right).
% \]
The minimizer $\vtheta^\star$ of~\eqref{eq:cfm_objective} provably recovers the true marginal velocity field at optimum, \ie, ${\vv}_{\vtheta^\star}(\vx_t,t) = \vv(\vx_t,t)$~\citep{lipman2023flow, tong2023improving}. 
During inference, we solve an ordinary differential equation (ODE) $\dv{\vx_t}{t} = \vv_{\vtheta^\star}(\vx_t,t)$ via integration.

\paragraph{Score Distillation.}
Score distillation aligns the student distribution $q_0$ with the teacher distribution $p_0$ by minimizing their reverse KL divergence, \ie
$D_{\mathrm{KL}}(q_0\|p_0)=
\E_{\hat{\vx}\sim q_0}
\left[\log q_0(\hat{\vx})-\log p_0(\hat{\vx})\right]$.
In practice, Distribution Matching Distillation~\citep{yin2024improved,yin2024one} (DMD) applies this objective to diffused distributions $q_t$ and $p_t$. Given a generated sample $\hat{\vx}_0=G_\vtheta(\vz)$ from the student, DMD constructs $\hat{\vx}_t=\alpha_t\hat{\vx}_0+\sigma_t\veps$ and updates the generator using
\begin{equation}
\nabla_\vtheta\mathcal{L}_{\mathrm{SD}}
=\E_t\left[\nabla_\vtheta D_{\mathrm{KL}}(q_t\|p_t)\right]\!\approx\E_{\vz,t,\veps}\left[
w_t
\bigl(s_{\mathrm{fake}}(\hat{\vx}_t,t)
-s_{\mathrm{real}}(\hat{\vx}_t,t)\bigr)
^{\!\top}\left(\frac{\partial G_\vtheta(\vz)}{\partial\vtheta}\right)\right],
\label{eq:dmd}
\vspace{-1em}
\end{equation}

where $\vz, \veps\sim\mathcal{N}(0,\mathbf{I})$, and $w_t$ is an adaptive reweighting term. The frozen teacher provides the real score $s_{\mathrm{real}}\approx\nabla_{\vx}\log p_t$. In addition, an auxiliary network estimates the fake score $s_{\mathrm{fake}}\approx\nabla_{\vx}\log q_t$ through standard diffusion training, using student generated clean samples as targets. Following variational score distillation~\citep{wang2023prolificdreamer}, this procedure alternates between updating the generator and training the fake score model with separate objectives. Similar to GANs~\citep{huang2024gan, brock2018large, kang2023scaling, sauer2025adversarial, Sauer2021ARXIV}, this coupled two-player optimization makes DMD intrinsically difficult to train. Under the two-timescale update rule, the fake score model is updated substantially more often than the generator, consuming a large fraction of the training compute to track the generator’s evolving distribution, despite being discarded entirely at inference time. Although subsequent methods, including Score identity Distillation (SiD), FreeFlow, and SenseFlow~\citep{zhou2024score,tong2025flow,ge2026senseflow}, improve the optimization and training objective, they still follow the same paradigm by devoting substantial computation to training an auxiliary fake score model. In particular, SiD extends reverse KL to Fisher divergence, \ie $D_{\mathrm{F}}(q_t\|p_t)=
\E_{\hat{\vx}\sim q_t}
\left\|\nabla_{\hat \vx}\log q_t(\hat{\vx})-\nabla_{\hat \vx}\log p_t(\hat{\vx})\right\|^2$.

\begin{figure}[t]
    \centering
    \includegraphics[width=\linewidth]{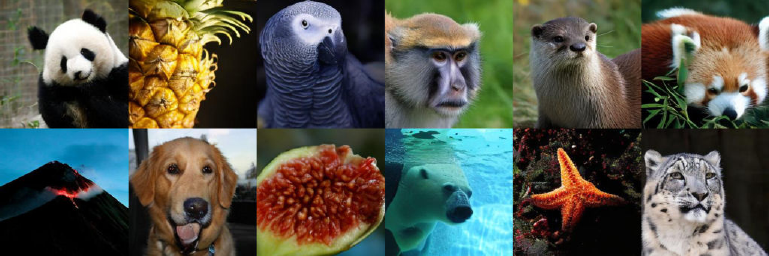}
    \caption{Uncurated qualitative Results on ImageNet 256$\times$256 dataset. Samples are generated by our CDMD method with \textbf{one} NFE conditioned on the class labels.
    }
    \label{fig:qual_generated_samples}
\end{figure}
\section{\modelfull}

This section develops \modelfull as a distillation method without external data or teacher simulation, as illustrated in Fig.~\ref{fig:main_pipeline}. We first motivate supervision on states bootstrapped from the student model. Beyond the tradition KL divergence, we develop Consistent Distribution Matching from a Fisher-divergence criterion using a velocity surrogate derived from the student network. Finally, we relate the resulting consistency residual to distribution matching under regularity assumptions.
% Notice that inherits the notation from score distillation section. 

\subsection{Minimal design on Data-Free Distillation}

Current distillation methods require either online training with external data or expensive offline generation of teacher samples. In the former setting, training states are drawn from a forward FM path $\tilde{p}_t$, while the teacher follows its own probability path $p_t$. Since $\tilde{p}_t \neq p_t$ in general, the student is supervised on a mismatched state distribution. Training directly on $p_t$ avoids this mismatch, but sampling intermediate teacher states requires simulating the teacher dynamics, introducing substantial computational overhead.

\noindent \textbf{Unifying proxy model and student generator.}
Motivated by SiD~\citep{zhou2024score}, we therefore adopt a data-free construction based on the student rollout. However, evaluating this objective typically introduces an auxiliary model to estimate the student's marginal velocity, which violates our design principle of involving only the teacher and student. To eliminate this extra model, the student must be capable of both generating clean samples and providing a surrogate for its own marginal velocity within a unified framework.

We recast the Fisher-divergence objective in velocity form under the FM framework\footnote{We show the equivalence between the score and velocity formulations of reverse KL in the Appendix~\ref{app:dmd_equivalence}, allowing us to use the velocity formulation directly. The same applies for Fisher divergence. The precise form depends on the parameterization of the diffusion or flow-based model.}:
\begin{equation}
\mathcal{L}_{\mathrm{VD}}(\vtheta)
=
\E_t\left[w_tD_{\mathrm{F}}(q_t\|p_t)\right]
=
\E_{t,\vz,\veps}
\left[
\bar{w}_t
\left\|
\vv_{\mathrm{fake}}(\hat{\vx}_t,t)
-
\vv_{\mathrm{real}}(\hat{\vx}_t,t)
\right\|_2^2
\right],
\label{eq:vel_score_distillation}
\end{equation}
where $\bar{w}_t=(1-t)^2t^{-2}w_t$ is a weighting function for $0<t<1$. Consequently, the problem reduces to jointly modeling global one-step generation and local instantaneous velocity. This calls for a model that maps between arbitrary times along an ODE trajectory by representing the integrated velocity, \ie a flow map model~\citep{albergo2023building,Boffi2024FlowMM,geng2025mean,geng2025improved,zhang2026alphaflow, lee2026decoupled, sabour2025align}. We therefore opt to model the average velocity field. Define the average velocity $\vu:\mathbb{R}^D\times[0,1]\times[0,1]\to\mathbb{R}^D$, for $0\leq r<t\leq1$, as $\vu(\vx_t,t,r)
:=
\frac{\vx_t-\vx_r}{t-r}
=
\frac{1}{t-r}\int_r^t \vv(\vx_\tau,\tau)\,\dd{\tau}.$

If $r$ is independent of $t$, we recover the instantaneous velocity along the underlying ODE trajectory by differentiating this definition with respect to $t$. This identity motivates replacing the auxiliary proxy model with a velocity surrogate derived from the student itself. In particular, one-step generation and the velocity surrogate take the forms
$G_\vtheta(\vz)
=
\vz-\vu_\vtheta(\vz,1,0), \
\vv_{\mathrm{fake}}(\vx_t,t)
\approx
\vu_\vtheta(\vx_t,t,r)
+
(t-r)\frac{\dd}{\dd t}\vu_\vtheta(\vx_t,t,r),$
respectively, where we unify the proxy model and the generator with $\vu_\vtheta(\vx_t,t,r)$. For the surrogate, the total derivative is evaluated in the teacher's velocity direction, holding $r$ fixed.
Note that the average-velocity quotient is undefined at $r=t$. Hence, we define its continuous extension as
$\vu(\vx_t,t,t):=\lim_{r\to t}\vu(\vx_t,t,r)=\vv(\vx_t,t)$,
assuming the underlying velocity is continuous.

One advantage of modeling average velocity is that we can recover instantaneous velocity from its definition. The same parameterization also allows us to generate a clean sample in one forward pass. Thus, a single student network provides both one-step generation and instantaneous velocity modeling in tandem, eliminating the auxiliary network. Substituting the one-step generation and the velocity surrogate into \eqref{eq:vel_score_distillation} yields the following surrogate for the Fisher-divergence objective:
\begin{equation}
\label{eq:vd_fisher}
\mathcal{L}_{\mathrm{VD}}(\vtheta)
\approx
\E_{t,r,\vz,\veps}
\left[
\bar{w}_t
\left\|
\left[
\vu_\vtheta(\hat{\vx}_t,t,r)
+
(t-r)\frac{\dd}{\dd t}\vu_\vtheta(\hat{\vx}_t,t,r)
\right]
-
\vv_{\mathrm{real}}(\hat{\vx}_t,t)
\right\|_2^2
\right].  
\end{equation}
This substitution is exact when the velocity surrogate coincides with the marginal velocity of $q_t$.

\noindent \textbf{Training objective.}
In the absence of external data, the central question becomes how to construct the intermediate noisy state $\vx_t$. Our construction is motivated by two desiderata. First, we aim to perform distillation without access to real training data, requiring the intermediate states to be generated from the student dynamic path. Second, to keep the procedure simulation-free, each intermediate state involves only a single evaluation of the student model. Let $\vf_\vtheta(\vx_t,t,r):=\vx_t - (t-r)\vu_\vtheta(\vx_t, t, r)$ be the endpoint of the student transport, given initial state at time $t$ and terminal time $r$. Leveraging the closed-form forward noising path, we therefore construct the target $\hat{\vx}_t$ by first denoising the prior sample $\vz\sim p_1$ to an earlier time $s$, followed by analytically renoising the resulting state from $s$ to $t$:
\begin{equation}
    \hat{\vx}_t=\dfrac{1-t}{1-s}\vf_\vtheta(\vz,1,s)+\dfrac{t-s}{1-s}\veps \quad \vz,\veps\sim p_1,\quad 0\leq s\leq t<1.
    \label{eq:bootstrapped_states}
\end{equation}
The two extremes of $s$ recover familiar constructions. At $s=t$, the state lies on the student path, $\hat{\vx}_t=\vf_\vtheta(\vz,1,t)$. At $s=0$, it is a noised one-step student sample, $\hat{\vx}_t=(1-t)\hat{\vx}_0+t\veps$, as in score distillation. We denote the resulting law of $\hat{\vx}_t$ by $\stdist$ and refer to it as the student probability path. As the student approaches the teacher, $\stdist$ moves toward the teacher path and supplies more informative supervision states. We train the student with
\begin{equation}\label{eq:loss}
\mathcal{L}_{\text{\model}}(\vtheta) :=
\mathbb{E}_{t,r,\, \hat{\vx}_t \overset{\sg}{\sim}\, \stdist}
\norm{
\vu_\vtheta(\hat{\vx}_t,t,r)
+
(t-r)\cdot\sg\!\left[\frac{\dd}{\dd t}\,\vu_\vtheta(\hat{\vx}_t,t,r)\right]
-
\vv(\hat{\vx}_t,t)
}^2,
\end{equation}
where $\sg$ is the stop-gradient operator, and we overload this notation $\overset{\sg}{\sim}$ to indicate the sampled state is detached from the computation graph. The total derivative
$\frac{\dd}{\dd t}\vu_\vtheta
= D_{\hat{\vx}_t}\vu_\vtheta\,\frac{\dd \hat{\vx}_t}{\dd t} + \partial_t\vu_\vtheta
= \jvp\lrp{\vu_\vtheta; \lrp{\frac{\dd \hat{\vx}_t}{\dd t}, 1, 0}}$
is computed by one Jacobian-vector product (JVP) in the forward mode. We study the choice of the tangent $\frac{\dd \hat{\vx}_t}{\dd t}$ in Sec.~\ref{sec:jvp-variance}. Detaching the JVP prevents higher-order gradients through the total derivative, which stabilizes training. Our objective uses the consistency residual motivated by \eqref{eq:vd_fisher}, aiming to reduce the Fisher divergence between student and teacher distributions.

% When $s=0, r=0$, the gradient of \eqref{eq:loss} matches the velocity-distillation gradient in \eqref{eq:vel_score_distillation}. In score distillation, the gradient must bypass the fake score model and flow through the generator alone. In contrast, our single student can be trained on any time pair $(t,r)$, which generalizes the score-distillation gradient. 

Let $p_t := \big(\vf(\cdot,1,t)\big)_{\#}p_1$ be the teacher marginal, \ie, the law of $\vf(\vz,1,t)$ with $\vz\sim p_1$. Let the consistency residual of our \model objective be $\mathcal{R}_{\vtheta}(\vx_t,t,r) := \vu_{\vtheta}(\vx_t,t,r) + (t-r)\cdot\frac{\dd}{\dd t}\,\vu_{\vtheta}(\vx_t,t,r) - \vv(\hat{\vx}_t,t).$ Theorem~\ref{thm:main} compares the teacher joint law $\mathsf P$ with the training law $\mathsf Q_\vtheta$ of $(r,t,\hat{\vx}_t)$. Part~(a) shows that if $\mathsf P$ is contiguous with respect to $(\mathsf Q_{\vtheta_n})$, then $\mathcal{L}_{\text{\model}}(\vtheta_n)\to 0$ implies that the $W_2$ distance between the student pushforwards and the teacher marginals, averaged over $r$, also goes to zero. Part~(b) shows that if $\mathsf P$ and $\mathsf Q_{\bar\vtheta}$ are mutually absolutely continuous on denoising intervals, then every global minimizer recovers the teacher average velocity. Student-bootstrapped states are therefore a principled substitute for teacher rollouts, provided the student path stays close to the teacher path. The detailed theorem and proof are in Appendix~\ref{app:proofs}.

\begin{theorem}[Distributional consistency and identification
(simplified)]
\label{thm:main}
Let $p_t$ denote the teacher marginals and $\nu$ the reference-time
law on $[0,1)$. Under the respective regularity, coverage, and
integrability assumptions detailed in Appendix~\ref{app:proofs}:
\begin{itemize}
\item[(a)]
For any parameter sequence $(\vtheta_n)$, vanishing population loss
$\mathcal L_{\text{\model}}(\vtheta_n)\to 0$ implies
$\int_{[0,1)}
W_2^2\!\left(
(\vf_{\vtheta_n}(\cdot,1,r))_{\#}p_1,\,
p_r
\right)\nu(\dd r)
\to 0.$

\item[(b)]
If an admissible student can represent the teacher average
velocity $\vu$, the minimum population loss is zero, and every
global minimizer recovers $\vu$ almost surely under its
training distribution.
\end{itemize}
\end{theorem}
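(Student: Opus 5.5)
Part (a) rests on one idea: the consistency residual is the derivative, in the start time, of a transport defect. For a fixed end time $r$, I would consider the student endpoint map $\vf_\vtheta(\vx_t,t,r)=\vx_t-(t-r)\vu_\vtheta(\vx_t,t,r)$ evaluated along a teacher trajectory $\vx_t=\vf(\vz,1,t)$, where $\vf(\vz,1,t)$ is the teacher flow map, so $\dd\vx_t/\dd t=\vv(\vx_t,t)$. Differentiating in $t$ gives
\[
\frac{\dd}{\dd t}\vf_\vtheta(\vx_t,t,r)=\vv(\vx_t,t)-\vu_\vtheta(\vx_t,t,r)-(t-r)\frac{\dd}{\dd t}\vu_\vtheta(\vx_t,t,r)=-\mathcal R_\vtheta(\vx_t,t,r),
\]
with the total derivative taken in the teacher direction, as in the definition of the surrogate. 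At $t=r$ we have $\vf_\vtheta(\vx_r,r,r)=\vx_r=\vf(\vz,1,r)$. Integrating from $r$ to $1$ therefore yields
\[
\vf_\vtheta(\vz,1,r)-\vf(\vz,1,r)=-\int_r^1\mathcal R_\vtheta(\vx_t,t,r)\,\dd t .
\]
Coupling the two pushforwards through the common noise $\vz$ and applying Cauchy--Schwarz gives
\[
W_2^2\big((\vf_\vtheta(\cdot,1,r))_\#p_1,\,p_r\big)\le(1-r)\int_r^1\E_{\vz}\|\mathcal R_\vtheta(\vx_t,t,r)\|^2\dd t .
\]
Integrating against $\nu(\dd r)$, the right-hand side becomes the expected squared residual under the teacher joint law $\mathsf P$ of $(r,t,\vx_t)$, provided $\mathsf P$ is built from $\nu$ and $t$ uniform (or with density bounded below) on $(r,1)$. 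I would assume exactly this as the coverage assumption.

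The second step transfers this bound from $\mathsf P$ to the training law $\mathsf Q_{\vtheta_n}$. The loss is $\mathcal L(\vtheta_n)=\E_{\mathsf Q_{\vtheta_n}}\|\mathcal R_{\vtheta_n}\|^2$; the stop-gradient does not change the population value. So vanishing loss gives $\|\mathcal R_{\vtheta_n}\|^2\to0$ in $L^1(\mathsf Q_{\vtheta_n})$, hence in $\mathsf Q_{\vtheta_n}$-probability. Contiguity of $\mathsf P$ with respect to $(\mathsf Q_{\vtheta_n})$ then gives convergence to zero in $\mathsf P$-probability. To upgrade this to convergence of $\E_{\mathsf P}\|\mathcal R_{\vtheta_n}\|^2$, I would add an integrability assumption: uniform integrability of $\|\mathcal R_{\vtheta_n}\|^2$ under $\mathsf P$, for example a uniform $L^{2+\delta}$ bound coming from Lipschitz and growth bounds on $\vu_\vtheta$, its derivatives, and $\vv$. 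Vitali's theorem then finishes (a).

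I expect this transfer to be the main obstacle. Contiguity alone gives only convergence in probability, not of expectations. The residual is also evaluated at $\hat\vx_t\sim\stdist$ in training but along teacher states in the bound, so the assumptions must make $\mathcal R_\vtheta$ a fixed measurable function of $(r,t,\vx)$. In particular, the JVP tangent must equal $\vv(\vx,t)$ rather than a student-path tangent. I would state this explicitly, or bound the tangent mismatch by an extra term vanishing under the same assumptions. The regularity needed to differentiate $\vf_\vtheta$ along the trajectory (a $C^1$ student, a Lipschitz teacher velocity so the flow exists) would be stated alongside.

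For part (b), if $\vu_{\bar\vtheta}=\vu$, then the teacher average velocity satisfies the MeanFlow identity $\vu+(t-r)\tfrac{\dd}{\dd t}\vu=\vv$. This follows by differentiating $(t-r)\vu(\vx_t,t,r)=\int_r^t\vv$ along teacher trajectories, and it holds for every $\vx$ because teacher trajectories pass through each point. Hence the residual vanishes identically and the minimum loss is $0$. Conversely, a global minimizer $\vtheta^\star$ has zero loss, so $\mathcal R_{\vtheta^\star}=0$ $\mathsf Q_{\vtheta^\star}$-a.s. Mutual absolute continuity with $\mathsf P$ on denoising intervals transfers this to $\mathsf P$-a.s. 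Then the difference $\vf_{\vtheta^\star}(\vx_t,t,r)-\vf(\vx_t,t,r)$ has zero $t$-derivative along teacher trajectories and vanishes at $t=r$, so it is zero. This gives $\vu_{\vtheta^\star}=\vu$ $\mathsf P$-a.s., and hence almost surely under the training law $\mathsf Q_{\vtheta^\star}$ by equivalence. Continuity of the networks lets us pass from almost-every $t$ to every point on the trajectory.
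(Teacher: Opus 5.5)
Your proposal is correct and follows essentially the same route as the paper. The paper also writes the residual along teacher trajectories as $\frac{\mathrm d}{\mathrm dt}\bigl[(t-r)(\vu_{\vtheta}-\vu)\bigr]=-\frac{\mathrm d}{\mathrm dt}\vf_{\vtheta}(\vx_t,t,r)$, integrates it from $r$ to $1$ using a boundary condition at $t=r$, and couples the two pushforwards through the shared noise with Jensen; it then integrates against the same uniform-in-$t$ law $\mathsf P$ and transfers from $\mathsf Q_{\vtheta_n}$ to $\mathsf P$ by Markov, contiguity and uniform integrability, while part (b) uses $\mathsf P\ll\mathsf Q_{\bar\vtheta}$, the same constant-of-integration argument, and $\mathsf Q_{\bar\vtheta}|_{\{r<t\}}\ll\mathsf P$. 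One detail to add: this reverse absolute continuity does not cover the diagonal $r=t$, which has positive training mass but no $\mathsf P$-mass, and the paper handles it directly from zero residual, $\vu_{\bar\vtheta}(\vx,t,t)=\vv(\vx,t)=\vu(\vx,t,t)$.
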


\paragraph{Sampling.}
Once $\vu_\vtheta$ is learned, the corresponding flow map is recovered by $\vf_\vtheta(\vx_t,t,r)=\vx_t-(t-r)\vu_\vtheta(\vx_t,t,r)$. In particular, one-step generation over the full interval $(r,t)=(0,1)$ is given by $\hat{\vx}_0=\vf_\vtheta(\vx_1,1,0)=\vx_1-\vu_\vtheta(\vx_1,1,0)$, where $\vx_1\sim p_1$. Few-step sampling can be obtained by applying the same map over a sequence of consecutive intervals. Therefore, this parameterization $\vu_\vtheta(\vx_t,t,r)$ naturally enables inference-time scaling, as shown by the ablation in Fig.~\ref{fig:ablation}.

% comment Felix
% okay here is the storyline:
% 1. we have a ground truth marginal v(xt,t)
% 2. there exists an associated average flow map to describe the average velocity field u(xt, t, r) for all t to r
% 3. manipulate with the MeanFlow identity, we arrive at the iMF objective
% 4. we argue that we make the iMF a data-free version so that the data comes from our student model

% In Tab.~\ref{tab:method_comparison} highlights the favorable efficiency and training properties of our \model method. While several prior methods are either data-free or simulation-free, they typically require three models, two optimizers, adversarial training, or access to real data. In contrast, \model is the only compared approach that is simultaneously data-free and simulation-free while using only two models and a single optimizer, without relying on a GAN objective. Note that when a distillation method invokes a GAN objective, it reintroduces data dependence because the discriminator requires external real data. 

Tab.~\ref{tab:method_comparison} shows that \model uniquely combines data-free and simulation-free distillation with only two models and a single optimizer, avoiding both adversarial training and the external real data required by GAN objectives~\citep{huang2024gan, zhou2025adversarial, sauer2025adversarial}.

\subsection{Variance of the JVP correction}
\label{sec:jvp-variance}

The detached JVP correction in $\mathcal{L}_{\text{\model}}$ is sensitive to the tangent direction used to evaluate it. In our notation, this direction is the input direction
$\lrp{\dd \hat{\vx}_t/\dd t,1,0}$ of the JVP, rather than the derivative vector
$\lrp{\partial_{\vx}\vu_\vtheta,\partial_t\vu_\vtheta,\partial_r\vu_\vtheta}$ itself. Hence, in the data-free setting, where supervision is obtained only from the teacher on student-generated states, the choice of $\dd \hat{\vx}_t/\dd t$ directly affects the stochasticity and bias of the detached target. For any realization of $\dd \hat{\vx}_t/\dd t$, we define the corresponding JVP correction $\mathcal{G}_\vtheta
\lrp{
\hat{\vx}_t,t,r;
\frac{\dd \hat{\vx}_t}{\dd t}
}
:=
\jvp\lrp{
\vu_\vtheta;
\lrp{
\frac{\dd \hat{\vx}_t}{\dd t},
1,
0
}
}.$

To diagnose the stochasticity of the correction under data-free supervision, we first define the cross-centered projected JVP second moment
\[
\mathcal{M}_{\mathrm{JVP}}^{q,p}
\lrp{
\frac{\dd \hat{\vx}_t}{\dd t};
t,r
}
:=
\E_{\hat{\vx}_t\sim \stdist}
\left[
\norm{
\mathcal{G}_\vtheta
\lrp{
\hat{\vx}_t,t,r;
\frac{\dd \hat{\vx}_t}{\dd t}
}
-
\E_{\vx_t\sim p_t}
\left[
\mathcal{G}_\vtheta
\lrp{
\vx_t,t,r;
\frac{\dd \vx_t}{\dd t}
}
\right]
}^2
\right].
\]
This term measures the deviation of the JVP correction computed on student-generated states from the average correction on the target path. It captures both the intrinsic fluctuation of the correction under $\stdist$ and the distributional mean shift between $\stdist$ and $p_t$. 
Note the intrinsic JVP variance is recovered as the special case $\mathcal{V}_{\mathrm{JVP}}=\mathcal{M}_{\mathrm{JVP}}^{q,q}$, and the decomposition can be found in Appendix~\ref{app:variance-jvp}. 
We evaluate these statistics for the term $\dd \hat{\vx}_t/\dd t$ with three viable options, \ie $\dd \hat{\vx}_t/\dd t \in
\left\{
\vv(\hat{\vx}_t,t),\;
\vu_\vtheta(\hat{\vx}_t,t,t),\;
\vu_\vtheta(\vz,1,0)
\right\}$. The first two terms are the teacher velocity and the student limiting velocity, while the last term is commonly known as the conditional velocity field $\vx_1 - \hat{\vx}_0$.
We find that using $\vv(\hat{\vx}_t,t)$ works best, shown in Fig.~\ref{fig:synthetic_diagnostics} (c) the on synthetic dataset. 
\vspace{-1em}

\begin{table*}[t]
\centering
\caption{
\textbf{Data-free distillation on class-conditional ImageNet256$\times$256 dataset.}
\textbf{Left}: Comparison using LightningDiT-XL/2 as the teacher.
\textbf{Right}: Comparison using SiT-XL/2 and SiT-XL/2+VA-REPA as teachers.
All numbers are reported with CFG when applicable, and $\times 2$ in NFE indicates that CFG doubles the number of function evaluations at inference time.
$\dagger$ denotes results reproduced by us on ImageNet256$\times$256, while $\star$ denotes methods adapted by us to the data-free setting.
Bold and underlined values indicate the best and second-best results, respectively.
}
\vspace{-1em}
\label{tab:imagenet_generation}

\noindent
\begin{minipage}[t]{0.52\linewidth}
\vspace{0pt}
\centering
\vspace{2pt}

\resizebox{\linewidth}{!}{%
\setlength{\tabcolsep}{2pt}
\renewcommand{\arraystretch}{1.08}
\small
\begin{tabular}{@{}lccc@{}}
\toprule
\textbf{Method}
& \textbf{NFE}
& \textbf{FID$\downarrow$}
& \textbf{IS$\uparrow$} \\
\midrule

\multicolumn{4}{@{}l}{\textit{\textbf{Teacher Diffusion / Flow Models}}} \\
\midrule
SiT-XL/2~\citep{ma2024sit}
    & 250$\times$2 & 2.06 & 277.5 \\
SiT-XL/2+VA-REPA~\citep{yang2026stablevelocityvarianceperspective}
    & 250$\times$2 & 1.31 & 309.7 \\
LightningDiT~\citep{yao2025reconstruction}
    & 250$\times$2 & 1.35 & 295.3 \\

\midrule
\multicolumn{4}{@{}l}{\textit{\textbf{Data-free Distillation}}} \\
\midrule

\rowcolor{gray!12}
\multicolumn{4}{@{}l}{
\textit{Teacher: LightningDiT-XL/2 (FID = 1.35)}
} \\[1.5pt]

FreeFlow~\citep{tong2025flow}
    & 1 & 4.28 & 264.9 \\
DMD2$^\dagger$~\citep{yin2024improved}
    & 1 & 3.43 & 317.2 \\
MFD$^\dagger$~\citep{zhao2026meanflow}
    & 1 & 6.67 & 281.6 \\
MeanFlow$^\star$~\citep{geng2025mean}
    & 1 & 4.19 & 213.8  \\
sCM$^\star$~\citep{scm}
    & 1 & 4.21 & 266.9 \\
FACM$^\star$~\citep{peng2025facm}
    & \makecell{1\\2}
    & \makecell{2.15\\1.79}
    & \makecell{303.8\\\textbf{319.6}} \\

\arrayrulecolor{gray!50}
\midrule
\arrayrulecolor{black}

\textbf{Ours}
    & \makecell{1\\2\\4}
    & \makecell{2.04\\\underline{1.62}\\\textbf{1.37}}
    & \makecell{294.3\\\underline{317.3}\\306.9} \\

\bottomrule
\end{tabular}
}
\end{minipage}
\hfill
\begin{minipage}[t]{0.46\linewidth}
\vspace{0pt}
\centering
\vspace{2pt}

\resizebox{\linewidth}{!}{%
\setlength{\tabcolsep}{2pt}
\renewcommand{\arraystretch}{1.08}
\small
\begin{tabular}{@{}lccc@{}}
\toprule
\textbf{Method}
& \textbf{NFE}
& \textbf{FID$\downarrow$}
& \textbf{IS$\uparrow$} \\
\midrule
\multicolumn{4}{@{}l}{\textit{\textbf{Data-free Distillation}}} \\
\midrule
\rowcolor{gray!12}
\multicolumn{4}{@{}l}{
\textit{Teacher: SiT-XL/2 (FID = 2.06)}
} \\[1.5pt]

DMD2$^\dagger$~\citep{yin2024improved}
    & 1 & 3.93 & \textbf{273.2} \\
FreeFlow w.o. DMD
    & 1 & 14.5 & 182.4 \\
FreeFlow~\citep{tong2025flow}
    & 1 & 4.13 & 268.7 \\
MeanFlow$^\star$~\citep{geng2025mean}
    & 1 & 4.07 & 251.5 \\

\arrayrulecolor{gray!50}
\midrule
\arrayrulecolor{black}

\textbf{Ours}
    & \makecell{1\\2}
    & \makecell{\underline{3.47}\\\textbf{2.87}}
    & \makecell{251.4\\\underline{271.0}} \\

\midrule

\rowcolor{gray!12}
\multicolumn{4}{@{}l}{
\textit{Teacher: SiT-XL/2+VA-REPA (FID = 1.31)}
} \\[1.5pt]

DMD2$^\dagger$~\citep{yin2024improved}
    & 1 & 3.86 & 239.1 \\
FreeFlow w.o. DMD
    & 1 & 15.4 & 171.3 \\
FreeFlow~\citep{tong2025flow}
    & 1 & 3.72 & 266.4 \\
MeanFlow$^\star$~\citep{geng2025mean}
    & 1 & 4.47 & 214.5 \\

\arrayrulecolor{gray!50}
\midrule
\arrayrulecolor{black}

\textbf{Ours}
    & \makecell{1\\2}
    & \makecell{\underline{2.93}\\\textbf{2.60}}
    & \makecell{\textbf{286.4}\\\underline{272.8}} \\

\bottomrule
\end{tabular}
}
\end{minipage}
\vspace{-1.5em}
\end{table*}
\section{Experiments}
\label{sec:experiments}

Below we test \model on a synthetic 2D benchmark, class-conditional ImageNet 256$\times$256, and text-to-image generation with SANA-1.6B. We then ablate guidance, the training-state distribution, and the number of sampling steps.

\noindent \textbf{Setup.} On ImageNet 256$\times$256, we distill three teachers: SiT-XL/2~\citep{ma2024sit}, SiT-XL/2 trained with VA-REPA~\citep{yang2026stablevelocityvarianceperspective}, and LightningDiT-XL/2 with a VA-VAE autoencoder~\citep{yao2025reconstruction}. The student uses the teacher architecture, with extra embeddings for $r$ and $w$ (Appendix~\ref{app:implementation_details}). We report FID-50K and IS. All baselines use the same training budget and the same evaluation pipeline. For text-to-image, we follow the protocol of MFD~\citep{zhao2026meanflow} with a SANA-1.6B teacher~\citep{xie2024sana}. Full hyperparameters are listed in Tab.~\ref{tab:training_config}.

\begin{figure}[t]
\centering
\begin{subfigure}{0.32\textwidth}
\centering
\includegraphics[width=\linewidth]{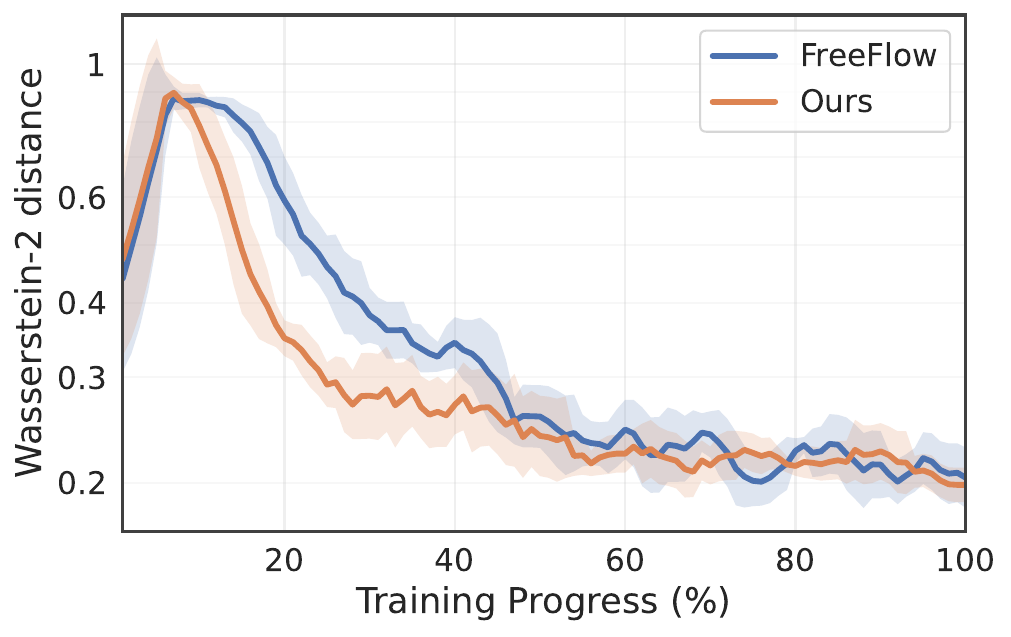}
\caption{Endpoint $W_2$ distance.}
\end{subfigure}
\hfill
\begin{subfigure}{0.32\textwidth}
\centering
\includegraphics[width=\linewidth]{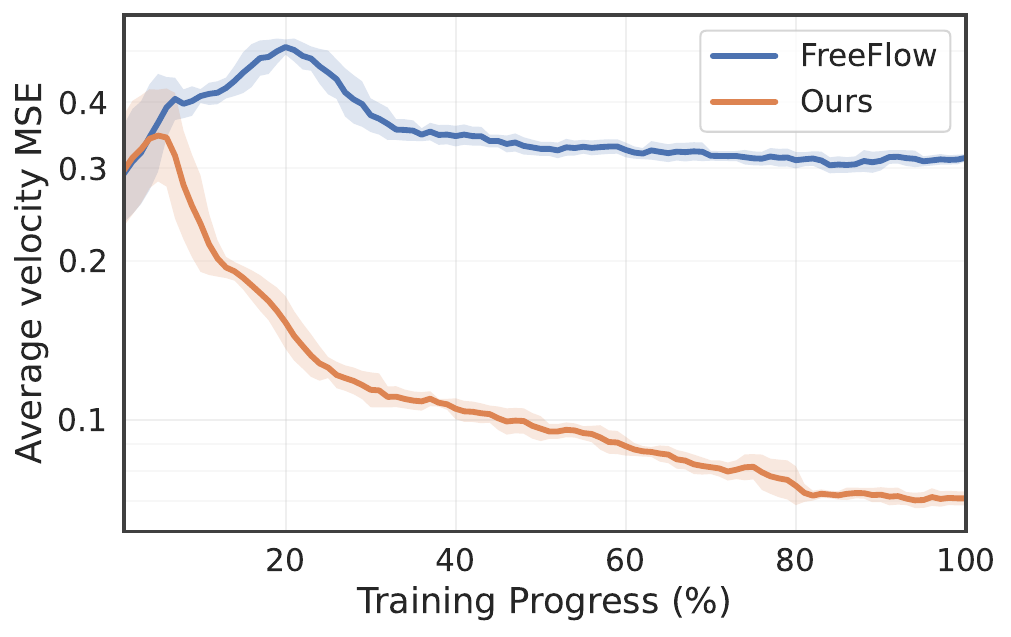}
\caption{Teacher average-velocity MSE.}
\end{subfigure}
\hfill
\begin{subfigure}{0.32\textwidth}
\centering
\includegraphics[width=\linewidth]{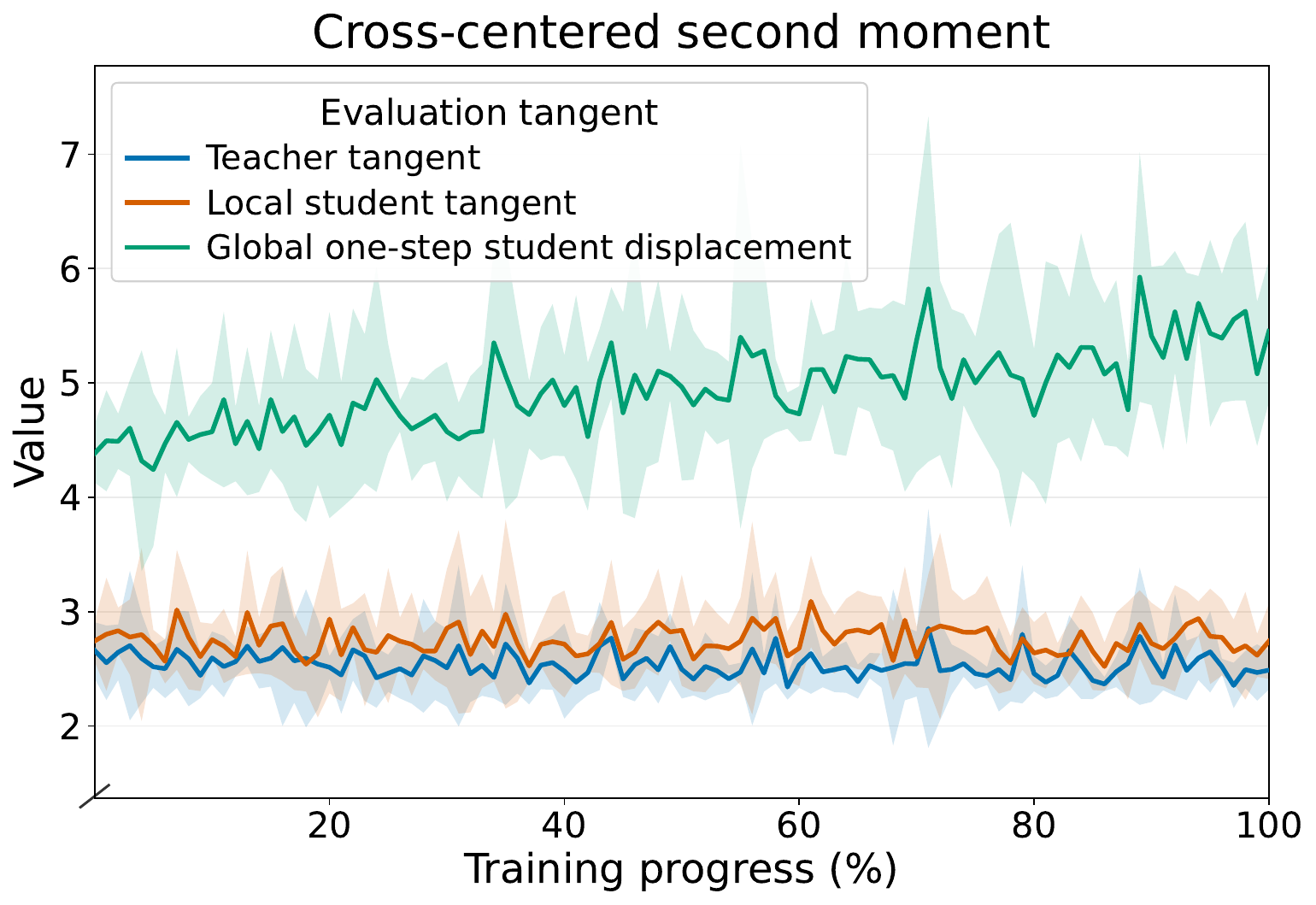}
\caption{Second moment of the JVP term.}
\end{subfigure}
\vspace{-5pt}
\caption{
\textbf{Endpoint matching does not imply transport matching.}
Results on the 2D spiral dataset, where the teacher is the exact velocity field.
(a) \model and FreeFlow reach similar endpoint distances to the target, showing comparable marginal matching ability.
(b) \model learns the true teacher average velocity accurately, while FreeFlow has a higher error.
(c) Among the three tangent choices in Sec.~\ref{sec:jvp-variance}, the teacher tangent provides the lowest and most stable JVP second moment.}
\vspace{-1em}
\label{fig:synthetic_diagnostics}
\end{figure}

\noindent \textbf{Synthetic benchmark.} We first use a 2D spiral, where both the target distribution and the ground-truth average velocity are known. The exact velocity field serves as the teacher. This removes the effect of a learned teacher checkpoint and of image metrics. We track two quantities during training (Fig.~\ref{fig:synthetic_diagnostics}). First, \model and FreeFlow reduce the endpoint $W_2$ distance at the same rate. Endpoint matching therefore cannot distinguish the two methods. Second, only \model keeps reducing the teacher average-velocity MSE, while FreeFlow plateaus at a much higher value. This suggests that \emph{matching the endpoint marginal does not imply learning the teacher transport map}. The full metrics in Tab.~\ref{tab:spiral_benchmark} show the same pattern. Marginal Wasserstein scores are tied, but \model has lower coupled endpoint error and lower teacher average-velocity error. Appendix~\ref{app:implementation_details} provides the full setup. In Fig.~\ref{fig:synthetic_diagnostics} (c), we observe that JVP yields the most stable behavior when computed in the teacher direction, even on student-bootstrapped states.

\noindent \textbf{ImageNet 256$\times$256.}
Tab.~\ref{tab:imagenet_generation} groups data-free methods by teacher under matched training budgets. \model achieves the best 1-NFE FID across all three teachers: 3.47 with SiT-XL/2, 2.93 with SiT-XL/2+VA-REPA, and 2.04 with LightningDiT-XL/2. It also benefits more from stronger teachers: replacing SiT-XL/2 with VA-REPA improves \model by 0.54 FID, compared with 0.41 for FreeFlow and 0.07 for DMD2, while MeanFlow degrades. Without retraining, the same student further improves with additional NFEs, reaching FIDs of 2.60 with VA-REPA and 1.62 with LightningDiT-XL/2 at 2 NFEs, and 1.37 at 4 NFEs, close to the teacher's 1.35. We omit FreeFlow without DMD in the LightningDiT block because it fails to converge within the training budget. Baselines marked with $\star$ are one-step methods that we adapt to the data-free setting. Specifically, we construct training states by noising clean samples from the current student generator, corresponding to the $s=0$ case of CDMD. Notice that, these adaptations remain worse comparing to CDMD.

\begin{figure}[t]
    \centering
    \begin{minipage}[t]{0.58\textwidth}
        \vspace{0pt}
        \centering
        \begin{subfigure}[t]{0.49\linewidth}
            \centering
            \includegraphics[width=\linewidth]{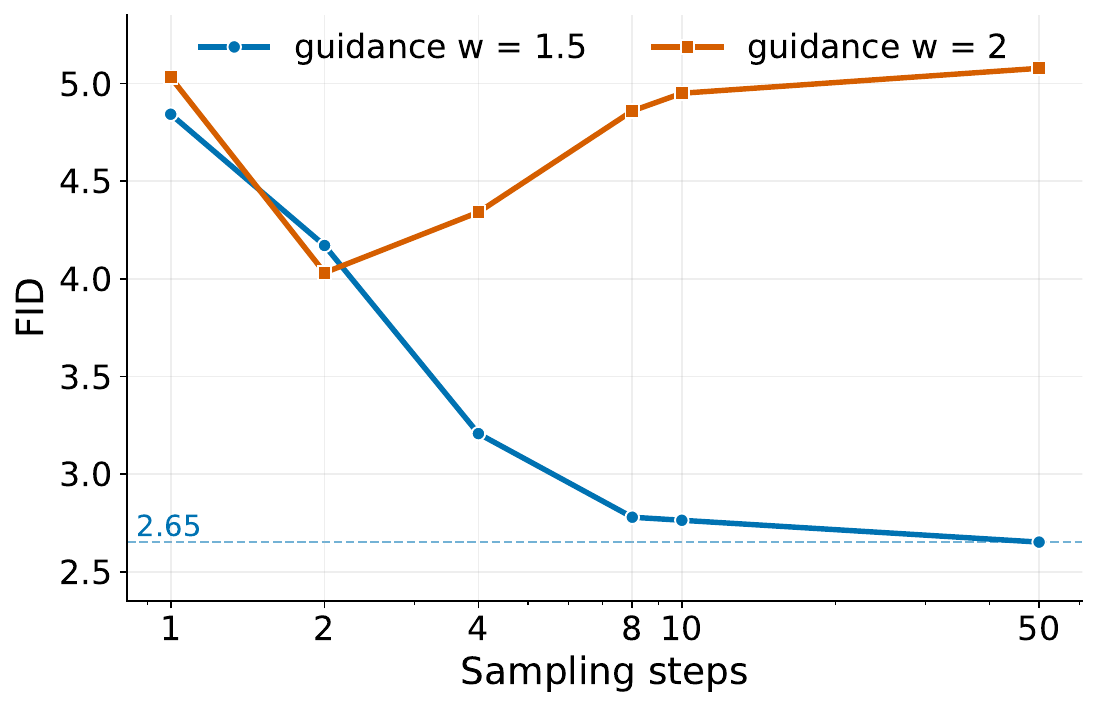}
            \caption{}
            \label{fig:ablation_fid_vs_steps}
        \end{subfigure}
        \hfill
        \begin{subfigure}[t]{0.49\linewidth}
            \centering
            \includegraphics[width=\linewidth]{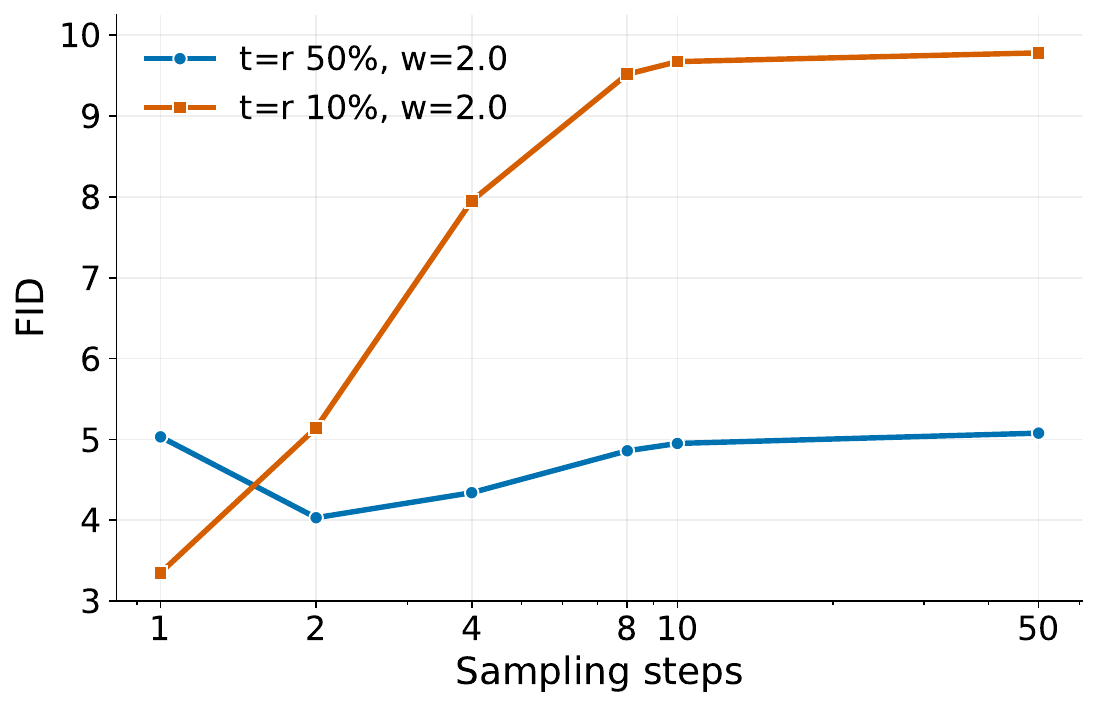}
            \caption{}
            \label{fig:ablation_tr_scaling_w2}
        \end{subfigure}
        \vspace{-0.5em}
        \caption{
    \textbf{Multi-step gains depend on guidance and on the fraction of $t{=}r$ pairs.}
    (a) FID ($\downarrow$) against sampling steps for two guidance scales.
    (b) FID ($\downarrow$) against sampling steps for two $t{=}r$ ratios at $w=2.0$.}
        \label{fig:ablation}
    \end{minipage}
    \hfill
    \begin{minipage}[t]{0.39\textwidth}
        \vspace{0pt}
        \centering
        \small
        \setlength{\tabcolsep}{4pt}
        \renewcommand{\arraystretch}{1.0}
        \begin{tabular}{@{}cccc@{}}
            \toprule
            $m$ & Peak step & FID $\downarrow$ & IS $\uparrow$ \\
            \midrule
            0     & 41K & 3.62 & 251.5 \\
            0.25  & 43K & 3.67 & 239.9 \\
            0.50  & 41K & \cellcolor{gray!18}\textbf{3.41} & 254.8 \\
            0.75  & 44K & 3.51 & 261.4 \\
            0.875 & 42K & 3.57 & \cellcolor{gray!18}\textbf{262.7} \\
            1.00  & 44K & 3.84 & 259.6 \\
            \bottomrule
        \end{tabular}
        \captionsetup{font=small}
        \captionof{table}{\textbf{Mixing both noisy state types beats either alone.} 1-NFE results at the best checkpoint for each mixing ratio $m$, trained for 50K iterations.}
        \label{tab:m_ablation}
    \end{minipage}
    \vspace{-1em}
\end{figure}

\noindent \textbf{Inference-time scaling.} FreeFlow~\citep{tong2025flow} is anchored to the prior. Its student only takes noise at $t=1$, so it cannot be applied over sub-intervals. In contrast, \model learns $\vu_\vtheta(\vx_t,t,r)$ for any interval, so extra steps are available at inference time. Fig.~\ref{fig:ablation} (a) shows that the benefit depends on guidance. With $w=1.5$, FID keeps improving with more steps and approaches the dashed reference. With $w=2.0$, FID is best at very few steps and worsens as steps increase. This suggests that strong guidance helps one-step generation but amplifies the student--teacher mismatch over multi-step sampling. Fig.~\ref{fig:sampler_comparison} in the appendix shows that the gain from extra steps also depends on the sampler.

\noindent \textbf{Training-state construction.} Alg.~\ref{alg:train} samples $m\in[0,1]$ to linearly interpolate the two states in \eqref{eq:bootstrapped_states}: larger $m$ emphasizes one-step student samples, while smaller $m$ emphasizes shared-noise interval states. Tab.~\ref{tab:m_ablation} shows that intermediate values work best, with the lowest FID at $m=0.5$ and highest IS at $m=0.875$, while convergence speed remains similar. Fig.~\ref{fig:ablation_tr_scaling_w2} shows a related trade-off for the fraction of $t=r$ pairs: 10\% improves 1-NFE FID but degrades multi-step sampling, whereas 50\% yields more stable few-step performance. This suggests that sufficient exposure to local intervals is important for multi-step sampling.

\noindent \textbf{Text-to-image generation.} We distill SANA-1.6B~\citep{xie2024sana}, a flow-matching text-to-image model. The teacher and student are initialized from the same checkpoint. We follow the protocol of MFD~\citep{zhao2026meanflow}, training and evaluating on LAION-Aesthetics 6.5+~\citep{schuhmann2022laion}. We report Aesthetic Score, PickScore~\citep{kirstain2023pickapic}, HPSv2~\citep{wu2023hpsv2}, ImageReward~\citep{xu2023imagereward}, CLIPScore~\citep{hessel2021clipscore}, and FID against COCO-5k 2017~\citep{lin2014microsoft}.
% \todo{beat MFD in terms of FID, tune the checkpoint, MFD and DMD2 rerun the experiments for t2i, update the MFD in imagenet256 table, put all the resources to train this well, pick some baselines to manully evaluate myself, pick the other baseline as directly copy from the other paper, tried UniPC but does not work}
\begin{table}[t]
    \centering
    \caption{\textbf{\model leads on aesthetics, preference, and FID at 4 NFEs.}
    Text-to-image generation with SANA-1.6B~\citep{xie2024sana}. Baseline numbers follow \citep{zhao2026meanflow}. Bold marks the best result among learned 4-NFE methods. $\dagger$ denotes methods reproduced by us. FID is computed against the COCO-5K 2017 validation set~\citep{lin2014microsoft}.}
    \label{tab:sana_t2i}
    \resizebox{\textwidth}{!}{%
    \renewcommand{\arraystretch}{1.08}
    \begin{tabular}{@{}lrrrrrrr@{}}
        \toprule
        Model & NFE $\downarrow$ & Aesthetic $\uparrow$ & PS $\uparrow$
        & HPSv2 $\uparrow$ & IR $\uparrow$ & CLIPScore $\uparrow$ & FID $\downarrow$ \\
        \midrule
        SANA teacher~\citep{xie2024sana} & 20 & 6.335 & 0.2021 & 0.2417 & 0.8363 & 30.717 & 30.39\\
        SANA teacher~\citep{xie2024sana} & 4  & 5.976 & 0.1872 & 0.2173 & -0.1008 & 28.251 & 35.65\\
        \midrule
        SenseFlow~\citep{ge2026senseflow} & 4 & 6.482 & 0.1998 & \textbf{0.2445} & 0.7925 & 29.321 &--\\
        AYF~\citep{sabour2025align} & 4 & 6.231 & 0.1963 & 0.2298 & 0.7022 & 29.176 &--\\
        Diff-Instruct~\citep{luo2023diffinstruct} & 4 & 6.453 & 0.1985 & 0.2434 & 0.7677 & 29.393 &--\\
        Consistency Distillation$^\dagger$~\citep{cm} & 4 & 6.248 & 0.1906 & 0.2263 & 0.6743 & 28.890 & 54.38\\
        DMD2$^\dagger$~\citep{yin2024improved} & 4 & 6.489 & 0.2006 & 0.2439 & 0.8042 & 29.103 &32.34\\
        MFD$^\dagger$~\citep{zhao2026meanflow} & 4 & 6.541 & 0.2002 & 0.2442 & 0.8483 & \textbf{29.430} & 28.77\\
        \midrule
        \textbf{CDMD (ours)} & 4 & \textbf{6.906} & \textbf{0.2011} & 0.2387
        & \textbf{0.8864} & 28.992 & \textbf{27.83}\\
        % & 2 & -- & -- & --
        % & -- & -- & --\\
        % & 1 & -- & -- & --
        % & -- & -- & --\\
        \bottomrule
    \end{tabular}%
    }
    \vspace{-20pt}
\end{table}

Tab.~\ref{tab:sana_t2i} gives three findings. First, among learned 4-NFE methods, \model has the best Aesthetic Score, PickScore (PS), and ImageReward (IR). Its FID of 27.83 is also below MFD's 28.77. Second, \model beats the teacher run at 4 NFEs on all six metrics, and it beats the 20-NFE teacher on Aesthetic Score, IR, and FID. Third, \model trails on HPSv2 and CLIPScore, where four and five of the six baselines score higher, respectively. These results show that \model works in one large-scale text-to-image setting. They do not establish statistical superiority. \model is also cheap to train. It uses about 7 GPU hours, compared with 22 for Diff-Instruct, 25 for MFD, and 30 for DMD2 (Tab.~\ref{tab:sana_training_cost}). Only consistency distillation is cheaper ($\sim$4 GPU hours), yet it scores lower on all five point metrics. All costs are measured on one NVIDIA RTX PRO 6000 GPU, the same setup as the baseline measurements of MFD~\citep{zhao2026meanflow}.

\section{Related work}

\noindent \textbf{Trajectory Distillation.}
Distillation accelerates diffusion sampling by training students to reproduce the teacher's dynamics in fewer steps. Knowledge distillation~\citep{zheng2023fast} learns a direct mapping from noise to data, while progressive distillation~\citep{salimans2022progressive,meng2023distillation} repeatedly halves the number of sampling steps. Consistency models~\citep{scm,ict,cm,Luo2023LatentCM} map states along the same ODE trajectory to a shared clean sample. Flow map distillation~\citep{boffi2026how,geng2025mean,geng2025improved,sabour2025align,lee2026decoupled} extends this idea to transitions between arbitrary times along a trajectory. BOOT~\citep{gu2023boot}, $\pi$-Flow~\citep{chen2025pi}, and rectified flow~\citep{liu2023flow} explore ways to reduce reliance on real training data. For methods that require teacher reference trajectories, generating them during training or precomputing them can limit scalability as tasks become more complex. Our method avoids numerically integrating the teacher during training. Compared with FreeFlow~\citep{tong2025flow}, which anchors its flow map at noise, our model supports transitions from arbitrary input states and times.

\section{Conclusion}
We presented CDMD, a data-free and simulation-free framework for distilling pretrained diffusion and flow-based teachers into efficient one-step or few-step flow map students. Our method eliminates the need for external distillation data, teacher rollouts, backward simulation, and auxiliary proxy networks by training on a student-induced distribution with a single objective. Motivated by the velocity form Theoretically, we showed that, under regularity assumptions, minimizing our \model objective leads to convergence of student sampling distributions
to teacher distributions in Wasserstein distance, averaged over
reference times, and every global minimizer recovers the teacher
average velocity on sampled states. Empirically, \model achieves state-of-the-art ImageNet256$\times$256 generation quality among data-free distillation methods and further benefits from inference-time scaling through its parameterization. Our method also achieves the best overall performance in text-to-image generation, leading on both preference and alignment metrics as well as image-quality metrics. These results suggest that strong data-free distillation can be achieved with a simpler two-model training pipeline, while preserving the flexibility of multi-step sampling. Future work includes improving the stability of the JVP term, developing initialization strategies when no model weights are available, and extending the framework to large-scale generative tasks where data scarcity problem dominates.

\bibliographystyle{iclr2027_conference}
\bibliography{refs}

\clearpage
%%%%%%%%%%%%%%%%%%%%%%%%%%%%%%%%%%%%%%%%%%%%%%%%%%%%%%%%%%%%

\appendix
\renewcommand{\floatpagefraction}{0.8}
\makeatletter
\setlength{\@fptop}{0pt}
\setlength{\@fpsep}{12pt plus 2pt}
\setlength{\@fpbot}{0pt plus 1fil}
\makeatother

\definecolor{darkred}{RGB}{139,0,0}

\newcommand{\appsecline}[2]{%
  \noindent\hyperref[#1]{\textbf{\ref*{#1}\quad #2}}%
  \leaders\hbox to 0.7em{\hss.\hss}\hfill\textbf{\pageref*{#1}}\par\vspace{3pt}}
\newcommand{\appsubsecline}[2]{%
  \noindent\hspace{1.8em}\hyperref[#1]{\ref*{#1}\quad #2}%
  \leaders\hbox to 0.7em{\hss.\hss}\hfill\pageref*{#1}\par\vspace{1pt}}

\begingroup
\section*{Appendix Contents}

\appsecline{app:proofs}{Proofs}
\appsecline{app:variance-jvp}{More on the Variance of the Target}
\appsecline{app:prior_methods}{Relationship with Prior Methods}
\appsubsecline{app:rel_freeflow}{FreeFlow}
\appsubsecline{app:rel_flowmaps}{Flow Maps}
\appsubsecline{app:dmd_equivalence}{Equivalence with Original DMD Updates}
\appsecline{app:implementation_details}{Implementation Details}
\appsubsecline{app:impl_imagenet}{Class-conditional ImageNet}
\appsubsecline{app:impl_synthetic}{Synthetic 2D Benchmarks}
\appsubsecline{app:impl_t2i}{Text-to-image Dataset}
\appsecline{app:endpoint_conditioning}{Ablation on Endpoint Conditioning}
\appsecline{app:inference_scaling}{Ablation on Inference-time Scaling}
\appsubsecline{app:sampler_varepa}{Sampler Comparison with the SiT-XL/2+VA-REPA Teacher}
\appsubsecline{app:inference_ldit}{Inference-time Scaling with the LightningDiT-XL/2 Teacher}
\appsecline{app:guidance}{\model with Guidance}
\appsecline{app:limitations}{Limitations and Broader Impact}
\appsecline{app:prompts}{Image Prompts}
\appsecline{app:qualitative}{Qualitative Samples}

\endgroup
\clearpage

\section{Proofs}\label{app:proofs}

\begin{theorem}[Distributional consistency and identification]
\label{thm:main-full}
Assume the characteristic regularity conditions in Appendix~A.
Let $p_t$ be the teacher marginal probability path, $\nu$ a probability law on $[0,1)$.
Define $
\mathsf P(\mathrm dr,\mathrm dt,\mathrm d\vx)
:=\nu(\mathrm dr)\,
\frac{\mathbf 1_{\{r<t<1\}}}{1-r}\,
\mathrm dt\,p_t(\mathrm d\vx),$ to be the joint state measure 
and let $\mathsf Q_{\vtheta}$ be the training state measure of the student model on
$(r,t,\hat{\vx}_t)$.
\begin{itemize}[
leftmargin=1.6em,
topsep=-1.5em,
itemsep=-0.5em]
\item[(a)]
Consider a sequence $(\vtheta_n)$.
Suppose $\mathsf P$ is contiguous with respect to
$(\mathsf Q_{\vtheta_n})$ and
$\bigl(\|\mathcal R_{\vtheta_n}\|^2\bigr)$
is uniformly integrable under $\mathsf P$.
Then $\mathcal L_{\text{\model}}(\vtheta_n)=\E_{\mathsf Q_{\vtheta_n}}\|\mathcal R_{\vtheta_n}\|^2\to 0$ implies
\[\int_{[0,1)}
W_2^2\!\left(
(\vf_{\vtheta_n}(\cdot,1,r))_{\#}p_1,\,
p_r
\right)\nu(\dd r)
\to 0.\]
where 
\[\mathcal{R}_{\vtheta}(\vx_t,t,r) := \vu_{\vtheta}(\vx_t,t,r) + (t-r)\cdot\frac{\dd}{\dd t}\,\vu_{\vtheta}(\vx_t,t,r) - \vv(\hat{\vx}_t,t).\]

\item[(b)]
Suppose an admissible student represents the teacher average velocity
$\vu$ as a differentiable field on the teacher-flow domain used by
the sampler.
For every global minimizer $\bar{\vtheta}$, assume $
\mathsf P\ll\mathsf Q_{\bar{\vtheta}},\mathsf Q_{\bar{\vtheta}}\big|_{\{r<t\}}\ll\mathsf P.$
Then the minimum population loss is zero, and every global minimizer
satisfies
$\vu_{\bar{\vtheta}}(\vx,t,r)=\vu(\vx,t,r)$ and is unique $Q_{\bar{\vtheta}}$-almost surely.

\end{itemize}
\end{theorem}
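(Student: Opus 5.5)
Part (a) is a transfer-then-integrate argument; part (b) follows once the teacher's own average velocity is shown to zero the residual.

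\emph{Step 1: move the loss from the training law to the teacher law.} Write $g_n:=\|\mathcal R_{\vtheta_n}\|^2$. The hypothesis $\E_{\mathsf Q_{\vtheta_n}} g_n\to 0$ gives, by Markov's inequality, $\mathsf Q_{\vtheta_n}(g_n>\varepsilon)\to 0$ for every $\varepsilon>0$. Contiguity of $\mathsf P$ with respect to $(\mathsf Q_{\vtheta_n})$ then gives $\mathsf P(g_n>\varepsilon)\to 0$, so $g_n\to 0$ in $\mathsf P$-probability. Uniform integrability of $(g_n)$ under $\mathsf P$ upgrades this, by Vitali's theorem, to $\E_{\mathsf P} g_n\to 0$.

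\emph{Step 2: identify the residual as a defect along teacher trajectories.} Fix $r$ and $\vz\sim p_1$, and let $\vx_t=\vf(\vz,1,t)$ be the teacher trajectory, so $\vx_t\sim p_t$. Since the total derivative is taken in the teacher direction, we have $\frac{\dd}{\dd t}\vx_t=\vv(\vx_t,t)$. Define the student endpoint along the teacher path,
\[
\vy_t:=\vf_{\vtheta}(\vx_t,t,r)=\vx_t-(t-r)\vu_\vtheta(\vx_t,t,r).
\]
Differentiating in $t$ gives
\[
\frac{\dd}{\dd t}\vy_t=\vv(\vx_t,t)-\vu_\vtheta-(t-r)\frac{\dd}{\dd t}\vu_\vtheta=-\mathcal R_\vtheta(\vx_t,t,r).
\]
At $t=r$ we have $\vy_r=\vx_r$. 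At $t\to 1$ we have $\vy_1=\vf_\vtheta(\vz,1,r)$, using continuity of the student at $t=1$, which is part of the regularity assumptions. Hence
\[
\vf_\vtheta(\vz,1,r)-\vf(\vz,1,r)=-\int_r^1\mathcal R_\vtheta(\vx_t,t,r)\,\dd t .
\]
This coupling of the two laws through the same $\vz$ yields, by Cauchy--Schwarz,
\[
W_2^2\big((\vf_\vtheta(\cdot,1,r))_\#p_1,p_r\big)\le \E_{\vz}\Big\|\int_r^1\mathcal R_\vtheta\,\dd t\Big\|^2\le (1-r)^2\int_r^1\frac{\E_{p_t}\|\mathcal R_\vtheta(\cdot,t,r)\|^2}{1-r}\,\dd t .
\]
Integrating against $\nu(\dd r)$ and using $(1-r)^2\le 1$, the right-hand side is at most $\E_{\mathsf P}\|\mathcal R_\vtheta\|^2$. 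Here the normalization $\mathbf 1_{\{r<t<1\}}/(1-r)$ in the definition of $\mathsf P$ is exactly what makes the time integral match. Combined with Step 1, this proves (a).

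The main obstacle is the justification in Step 2. It needs $t\mapsto \vy_t$ to be absolutely continuous on $[r,1)$, with a well-defined limit at $t=1$ matching the one-step map. It also needs the teacher flow $\vf(\cdot,1,t)$ to be well defined and measurable, so that $\vx_t\sim p_t$ holds jointly in $t$ and Fubini applies. I would take all of these from the regularity assumptions and check them carefully, since any singularity of the student at $t=1$ would break the endpoint identification.

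For (b), let $\vu$ be the teacher average velocity. Then $(t-r)\vu(\vx_t,t,r)=\vx_t-\vf(\vx_t,t,r)$, with $\vf(\vx_t,t,r)=\vx_r$ independent of $t$ along the teacher trajectory. Differentiating in $t$ gives $\vu+(t-r)\frac{\dd}{\dd t}\vu=\vv$, so $\mathcal R\equiv 0$ on the teacher-flow domain, and at $t=r$ this is the continuous extension $\vu(\vx,t,t)=\vv(\vx,t)$. An admissible student realizing $\vu$ therefore has zero loss, so the minimum population loss is zero.

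Conversely, a global minimizer $\bar\vtheta$ has $\mathcal R_{\bar\vtheta}=0$ $\mathsf Q_{\bar\vtheta}$-a.s., hence $\mathsf P$-a.s. by $\mathsf P\ll\mathsf Q_{\bar\vtheta}$. Step 2 then gives $\vy_t$ constant in $t$ along $\mathsf P$-almost every teacher trajectory. Therefore $\vf_{\bar\vtheta}(\vx_t,t,r)=\vx_r=\vf(\vx_t,t,r)$, i.e.\ $\vu_{\bar\vtheta}=\vu$ $\mathsf P$-a.s. on $\{r<t\}$. The assumption $\mathsf Q_{\bar\vtheta}|_{\{r<t\}}\ll\mathsf P$ transfers this equality back to $\mathsf Q_{\bar\vtheta}$-a.s. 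On $\{t=r\}$, $\mathcal R=0$ reduces directly to $\vu_{\bar\vtheta}(\vx,t,t)=\vv=\vu(\vx,t,t)$. This gives uniqueness $\mathsf Q_{\bar\vtheta}$-a.s.
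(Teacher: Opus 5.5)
Your proposal is correct and follows the paper's proof essentially step for step. In (a) you use Markov's inequality, contiguity and uniform integrability to pass from $\mathsf Q_{\vtheta_n}$ to $\mathsf P$; you derive the identity $\vf_{\vtheta}(\vz,1,r)-\vf(\vz,1,r)=-\int_r^1\mathcal R_{\vtheta}(\vx_t,t,r)\,\dd t$ along teacher trajectories, where your $\bm y_t$ is $\vx_r-(t-r)\Delta_{\vtheta}(\vx_t,t,r)$ in the paper's notation; and you finish with Jensen and the synchronous coupling. In (b) you show that zero residual plus the boundary condition forces $\vu_{\bar\vtheta}=\vu$ $\mathsf P$-a.s., and you transfer this back via the two absolute-continuity hypotheses, exactly as the paper does.

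The regularity you flag is the paper's characteristic-regularity assumption: $(t-r)\vu_{\vtheta}(\vx_t,t,r)$ extends absolutely continuously to $[r,1]$ with value $0$ at $t=r$. Note that the delicate endpoint is $t\downarrow r$, not $t=1$: there you need $(t-r)\vu_{\vtheta}(\vx_t,t,r)\to 0$ to get $\bm y_t\to\vx_r$.
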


\paragraph{Characteristic regularity.}
Throughout parts (a) and (b), assume that the teacher flow is measurable
and that, for $p_1$-almost every $\vx_1$, the trajectory
$t\mapsto\vx_t$ is absolutely continuous on $[0,1]$ and satisfies
$\dv{t}\vx_t=\vv(\vx_t,t)$ almost everywhere.
For every admissible parameter $\vtheta$, assume that
$\vu_{\vtheta}(\vx,t,r)$ is jointly measurable and continuously
differentiable in $(\vx,t)$ for each fixed $r$ and $t>r$.
In addition, for $\nu\otimes p_1$-almost every $(r,\vx_1)$, assume that
\begin{equation}\label{eq:characteristic-regularity}
k_{\vtheta,r}(t):=(t-r)\vu_{\vtheta}(\vx_t,t,r),
\qquad t\in(r,1],
\end{equation}
extends to an absolutely continuous function on $[r,1]$ with
$k_{\vtheta,r}(r)=0$.
These assumptions apply to every $\vtheta_n$ in part (a) and every
admissible field considered in part (b).
We also assume $p_r\in\mathcal P_2(\mathbb R^D)$ for
$\nu$-almost every $r$.
\begin{proof}

\begin{itemize}

\item[(a)]

Let $\vx_1\sim p_1$ and let
$\vx_t=\vf(\vx_1,1,t)$ denote the corresponding teacher trajectory.
Then $\vx_t\sim p_t$ and
$\dv{t}\vx_t=\vv(\vx_t,t)$
for almost every $t$.
Throughout the proof, $r$ is held fixed when differentiating in $t$.
More precisely, the residual is the function
\begin{align*}
\mathcal R_{\vtheta}(\vx,t,r)
&:=\vu_{\vtheta}(\vx,t,r)-\vv(\vx,t)+(t-r)\left(
\partial_t\vu_{\vtheta}(\vx,t,r)
+D_{\vx}\vu_{\vtheta}(\vx,t,r)\vv(\vx,t)
\right).
\end{align*}
Along a teacher trajectory, the expression in parentheses equals
$\frac{\mathrm d}{\mathrm dt}\vu_{\vtheta}(\vx_t,t,r)$.
At a student-sampled state $\hat{\vx}_t$, the same residual is evaluated
at $\vx=\hat{\vx}_t$; the derivative still uses the teacher velocity
$\vv(\hat{\vx}_t,t)$.

Let $\nu$ be the reference-time probability law on $[0,1)$, and define
the teacher joint measure by
\[
\mathsf P(\mathrm dr,\mathrm dt,\mathrm d\vx)
:=
\nu(\mathrm dr)\,
\frac{\mathbf 1_{\{r<t<1\}}}{1-r}\,
\mathrm dt\,p_t(\mathrm d\vx).
\]
Let $\mathsf Q_{\vtheta_n}$ denote the actual joint sampling law of
$(r,t,\hat{\vx}_t)$ used in the population loss, and write
\[
G_n(r,t,\vx)
:=\bigl\|\mathcal R_{\vtheta_n}(\vx,t,r)\bigr\|^2,
\qquad
\mathcal L_{\text{\model}}(\vtheta_n)
=\mathbb E_{\mathsf Q_{\vtheta_n}}G_n.
\]
If the sampler includes $r=t$, the residual there is defined directly as
$\mathcal R_{\vtheta}(\vx,t,t)
=\vu_{\vtheta}(\vx,t,t)-\vv(\vx,t)$.

We first relate the residual along teacher trajectories to the
transport error.
For $\nu\otimes p_1$-almost every $(r,\vx_1)$, define
\[
\Delta_{\vtheta_n}(\vx_t,t,r)
:=\vu_{\vtheta_n}(\vx_t,t,r)-\vu(\vx_t,t,r).
\]
The teacher average velocity satisfies
\[
(t-r)\vu(\vx_t,t,r)=\vx_t-\vx_r.
\]
Differentiating this identity and subtracting it from the student
residual gives, for almost every $t\in(r,1)$,
\begin{align*}
\mathcal R_{\vtheta_n}(\vx_t,t,r)
&=\Delta_{\vtheta_n}(\vx_t,t,r)
 +(t-r)\frac{\mathrm d}{\mathrm dt}
       \Delta_{\vtheta_n}(\vx_t,t,r)\\
&=\frac{\mathrm d}{\mathrm dt}
  \left[(t-r)\Delta_{\vtheta_n}(\vx_t,t,r)\right].
\end{align*}
Since
\[
(t-r)\Delta_{\vtheta_n}(\vx_t,t,r)
=k_{\vtheta_n,r}(t)-(\vx_t-\vx_r),
\]
the characteristic assumptions imply that this function extends
absolutely continuously to $[r,1]$ with value zero at $t=r$.
The fundamental theorem of calculus therefore gives
\[
(1-r)\Delta_{\vtheta_n}(\vx_1,1,r)
=\int_r^1
\mathcal R_{\vtheta_n}(\vx_t,t,r)\,\mathrm dt.
\]
Using the definitions of the student and teacher maps, we obtain
\begin{align*}
\vf_{\vtheta_n}(\vx_1,1,r)-\vf(\vx_1,1,r)
&=-(1-r)\Delta_{\vtheta_n}(\vx_1,1,r)\\
&=-\int_r^1
\mathcal R_{\vtheta_n}(\vx_t,t,r)\,\mathrm dt.
\end{align*}

By Jensen's inequality for the uniform probability measure on $[r,1]$,
\[
\bigl\|\vf_{\vtheta_n}(\vx_1,1,r)-\vf(\vx_1,1,r)\bigr\|^2
\leq
(1-r)\int_r^1
\bigl\|\mathcal R_{\vtheta_n}(\vx_t,t,r)\bigr\|^2\,\mathrm dt.
\]
Note that
$\vf_{\vtheta_n}(\vx_1,1,r)
\sim(\vf_{\vtheta_n}(\cdot,1,r))_{\#}p_1$
and $\vf(\vx_1,1,r)=\vx_r\sim p_r$.
Coupling these outputs through the same $\vx_1$, and using Tonelli's
theorem, yields
\begin{align*}
&W_2^2\!\left(
(\vf_{\vtheta_n}(\cdot,1,r))_{\#}p_1,p_r
\right)\\
&\quad\leq
\mathbb E_{\vx_1\sim p_1}
\bigl\|\vf_{\vtheta_n}(\vx_1,1,r)-\vf(\vx_1,1,r)\bigr\|^2\\
&\quad\leq
(1-r)\int_r^1
\mathbb E_{\vx_t\sim p_t}
\bigl\|\mathcal R_{\vtheta_n}(\vx_t,t,r)\bigr\|^2\,\mathrm dt.
\end{align*}
Uniform integrability implies $\mathbb E_{\mathsf P}G_n<\infty$,
so the last bound is finite for $\nu$-almost every $r$.
Together with $p_r\in\mathcal P_2(\mathbb R^D)$, this also ensures that
the student pushforward has finite second moment for such $r$.
Integrating over $r$ gives
\begin{align}
&\int_{[0,1)}
W_2^2\!\left(
(\vf_{\vtheta_n}(\cdot,1,r))_{\#}p_1,p_r
\right)\nu(\mathrm dr)
\notag\\
&\quad\leq
\int_{[0,1)}(1-r)\int_r^1
\mathbb E_{\vx_t\sim p_t}
\bigl\|\mathcal R_{\vtheta_n}(\vx_t,t,r)\bigr\|^2
\,\mathrm dt\,\nu(\mathrm dr)
\notag\\
&\quad=
\mathbb E_{\mathsf P}\bigl[(1-r)^2G_n\bigr]
\leq \mathbb E_{\mathsf P}G_n.
\label{eq:part-a-averaged-bound}
\end{align}

It remains to show that $\mathbb E_{\mathsf P}G_n\to0$.
For every $\varepsilon>0$, Markov's inequality gives
\[
\mathsf Q_{\vtheta_n}(G_n>\varepsilon)
\leq
\frac{\mathcal L_{\text{\model}}(\vtheta_n)}{\varepsilon}
\longrightarrow0.
\]
By contiguity, applied to the measurable sets
$\{G_n>\varepsilon\}$, we also have
$\mathsf P(G_n>\varepsilon)\to0$.
For every $M>\varepsilon$, splitting the expectation gives
\[
\mathbb E_{\mathsf P}G_n
\leq
\varepsilon
+M\,\mathsf P(G_n>\varepsilon)
+\mathbb E_{\mathsf P}
 \left[G_n\mathbf 1_{\{G_n>M\}}\right].
\]
Taking $\limsup_{n\to\infty}$, then letting $M\to\infty$
using uniform integrability, and finally letting
$\varepsilon\downarrow0$, proves
$\mathbb E_{\mathsf P}G_n\to0$.
Substituting this convergence into
\eqref{eq:part-a-averaged-bound} proves the desired result.

\item[(b)]

In this part, we use the same residual $\mathcal R_{\vtheta}$ as in
part (a). Since stop-gradient operators leave its numerical value
unchanged, our population loss is
\begin{equation}\label{eq:proof-b-population-loss}
\mathcal L_{\text{\model}}(\vtheta)
=\mathbb E_{(r,t,\hat{\vx}_t)\sim\mathsf Q_{\vtheta}}
\bigl\|\mathcal R_{\vtheta}(\hat{\vx}_t,t,r)\bigr\|^2.
\end{equation}

We first show that zero residual and the characteristic boundary
condition uniquely determine the field values along a teacher
trajectory.
Fix an admissible field $\vu^*$, and choose $(r,\vx_1)$ from the
full-measure set on which its characteristic regularity conditions hold.
Let $\vx_t=\vf(\vx_1,1,t)$ denote the corresponding teacher trajectory.
Suppose that $\vu^*$ has zero residual for almost every $t\in(r,1)$.
Then
\begin{align*}
0
&=\vu^*(\vx_t,t,r)
  +(t-r)\frac{\mathrm d}{\mathrm dt}\vu^*(\vx_t,t,r)
  -\frac{\mathrm d\vx_t}{\mathrm dt}\\
&=\frac{\mathrm d}{\mathrm dt}
\left[(t-r)\vu^*(\vx_t,t,r)-(\vx_t-\vx_r)\right]
\end{align*}
for almost every $t\in(r,1)$.
By characteristic regularity, the expression in brackets is
absolutely continuous on $[r,1]$. Therefore,
\begin{equation}\label{eq:proof-b-integration-constant}
(t-r)\vu^*(\vx_t,t,r)-(\vx_t-\vx_r)=C,
\qquad r<t\leq1,
\end{equation}
where $C\in\mathbb R^D$ is independent of $t$.
We then show that $C=0$. The characteristic boundary condition and
continuity of the teacher trajectory give
\begin{equation}\label{eq:proof-b-boundary}
\lim_{t\downarrow r}(t-r)\vu^*(\vx_t,t,r)=0,
\qquad
\lim_{t\downarrow r}(\vx_t-\vx_r)=0.
\end{equation}
Taking $t\downarrow r$ in \eqref{eq:proof-b-integration-constant}
therefore gives $C=0$. It follows that
\begin{equation}\label{eq:proof-b-characteristic-uniqueness}
\vu^*(\vx_t,t,r)
=\frac{\vx_t-\vx_r}{t-r}
=\vu(\vx_t,t,r),
\qquad r<t\leq1.
\end{equation}
Here we used the teacher identity
$(t-r)\vu(\vx_t,t,r)=\vx_t-\vx_r$.

We now consider the population loss. By the realizability assumption
in part (b), there exists an admissible parameter $\vtheta^{\mathrm T}$
representing the teacher average velocity and satisfying
\begin{equation}\label{eq:proof-b-zero-loss-realizability}
\mathcal R_{\vtheta^{\mathrm T}}=0
\qquad\mathsf Q_{\vtheta^{\mathrm T}}\text{-almost surely}.
\end{equation}
Consequently, $\mathcal L_{\text{\model}}(\vtheta^{\mathrm T})=0$.
Since the loss is nonnegative, its global minimum is zero and
$\vtheta^{\mathrm T}$ is a global minimizer.

Now suppose $\bar{\vtheta}$ is another global minimizer. We have
\begin{equation}\label{eq:proof-b-zero-loss}
0=\mathcal L_{\text{\model}}(\bar{\vtheta})
=\mathbb E_{\mathsf Q_{\bar{\vtheta}}}
\bigl\|\mathcal R_{\bar{\vtheta}}\bigr\|^2.
\end{equation}
Nonnegativity implies that $\mathcal R_{\bar{\vtheta}}=0$
$\mathsf Q_{\bar{\vtheta}}$-almost surely.
By the coverage assumption $\mathsf P\ll\mathsf Q_{\bar{\vtheta}}$,
the residual also vanishes $\mathsf P$-almost surely.
The definition of $\mathsf P$ and Tonelli's theorem therefore give
\begin{equation}\label{eq:proof-b-teacher-residual}
\mathbb E_{\vx_1\sim p_1}\int_r^1
\bigl\|\mathcal R_{\bar{\vtheta}}(\vx_t,t,r)\bigr\|^2
\,\mathrm dt=0
\qquad\text{for $\nu$-almost every $r$}.
\end{equation}
Hence, for $\nu\otimes p_1$-almost every $(r,\vx_1)$, the residual
vanishes for almost every $t\in(r,1)$.
Applying the uniqueness argument above with
$\vu^*=\vu_{\bar{\vtheta}}$ yields
$\vu_{\bar{\vtheta}}(\vx_t,t,r)=\vu(\vx_t,t,r)$ for every
$r<t\leq1$ along these trajectories, and consequently
$\vu_{\bar{\vtheta}}=\vu$ $\mathsf P$-almost surely.

Finally, the reverse coverage assumption
$\mathsf Q_{\bar{\vtheta}}|_{\{r<t\}}\ll\mathsf P$
transfers this equality to the off-diagonal training law.
On any sampled diagonal, zero residual gives
$\vu_{\bar{\vtheta}}(\vx,t,t)=\vv(\vx,t)=\vu(\vx,t,t)$ directly.
Therefore, the teacher average velocity is realized by a global
minimizer, and every global minimizer satisfies
\begin{equation}\label{eq:proof-b-identification}
\vu_{\bar{\vtheta}}(\vx,t,r)=\vu(\vx,t,r)
\qquad\mathsf Q_{\bar{\vtheta}}\text{-almost surely},
\end{equation}
as desired.

\end{itemize}
\end{proof}

\section{More on the variance of the target}
\label{app:variance-jvp}

Sec.~\ref{sec:jvp-variance} compares three candidate tangents $\dd\hat{\vx}_t/\dd t$ for the JVP in the detached correction: $\vv(\hat{\vx}_t,t)$, $\vu_\vtheta(\hat{\vx}_t,t,t)$, and $\vu_\vtheta(\vz,1,0)$. This section defines the statistics used to compare them; Fig.~\ref{fig:jvp-variance-tangent-comparison} in Appendix~\ref{app:impl_synthetic} reports them on the checkerboard and spiral datasets.
The first uses the teacher tangent, the second uses the local student tangent, and the third uses the global one-step student displacement.

In practice, for a minibatch
$\{\hat{\vx}_t^{(i)},t^{(i)},r^{(i)}\}_{i=1}^B$, we estimate
\[
\widehat{\mathcal{V}}_{\mathrm{JVP}}
=
\frac{1}{B}
\sum_{i=1}^B
\norm{
\mathcal{G}_\vtheta^{(i)}
-
\bar{\mathcal{G}}_\vtheta
}^2,
\qquad
\bar{\mathcal{G}}_\vtheta
=
\frac{1}{B}
\sum_{i=1}^B
\mathcal{G}_\vtheta^{(i)},
\]
where
\[
\mathcal{G}_\vtheta^{(i)}
=
\mathcal{G}_\vtheta
\lrp{
\hat{\vx}_t^{(i)},t^{(i)},r^{(i)};
\frac{\dd \hat{\vx}_t^{(i)}}{\dd t}
}.
\]
Similarly, the ground-truth projected error is
\[
\widehat{\mathcal{D}}_{\mathrm{JVP}}
=
\frac{1}{B}
\sum_{i=1}^B
\norm{
\mathcal{G}_\vtheta^{(i)}
-
\mathcal{G}_{\vtheta,\mathrm{T}}^{(i)}
}^2,
\]
where
\[
\mathcal{G}_{\vtheta,\mathrm{T}}^{(i)}
=
\mathcal{G}_\vtheta
\lrp{
\hat{\vx}_t^{(i)},t^{(i)},r^{(i)};
\vv(\hat{\vx}_t^{(i)},t^{(i)})
}.
\]
The projected variance measures the stability of the detached correction, while the ground-truth projected error measures its deviation from the teacher-consistent tangent.

To diagnose whether the JVP correction on student-generated states is aligned with the correction on the target path, we also consider a cross-centered projected JVP quantity. For a fixed tangent realization $\dd \hat{\vx}_t/\dd t$, define
\[
\mathcal{G}_\vtheta
\lrp{
\hat{\vx}_t,t,r;
\frac{\dd \hat{\vx}_t}{\dd t}
}
:=
\jvp\lrp{
\vu_\vtheta;
\lrp{
\frac{\dd \hat{\vx}_t}{\dd t},
1,
0
}
}.
\]
Let
\[
\mu_{q}
\lrp{
\frac{\dd \hat{\vx}_t}{\dd t};
t,r
}
:=
\E_{\hat{\vx}_t\overset{\sg}{\sim}\,\stdist}
\left[
\mathcal{G}_\vtheta
\lrp{
\hat{\vx}_t,t,r;
\frac{\dd \hat{\vx}_t}{\dd t}
}
\right]
\]
and
\[
\mu_{p}
\lrp{
\frac{\dd \vx_t}{\dd t};
t,r
}
:=
\E_{\vx_t\sim p_t}
\left[
\mathcal{G}_\vtheta
\lrp{
\vx_t,t,r;
\frac{\dd \vx_t}{\dd t}
}
\right].
\]
Here, $\mu_{q}$ is the mean JVP correction on student-generated states, while $\mu_p$ is the mean JVP correction on the target path. We define the cross-centered projected JVP second moment as
\[
\mathcal{M}_{\mathrm{JVP}}^{q, p}
\lrp{
\frac{\dd \hat{\vx}_t}{\dd t};
t,r
}
:=
\E_{\hat{\vx}_t \overset{\sg}{\sim}\,\stdist}
\left[
\norm{
\mathcal{G}_\vtheta
\lrp{
\hat{\vx}_t,t,r;
\frac{\dd \hat{\vx}_t}{\dd t}
}
-
\mu_p
\lrp{
\frac{\dd \vx_t}{\dd t};
t,r
}
}^2
\right].
\]
This quantity is not a variance under a single distribution. Instead, it decomposes as
\[
\mathcal{M}_{\mathrm{JVP}}^{q, p}
\lrp{
\frac{\dd \hat{\vx}_t}{\dd t};
t,r
}
=
\mathcal{V}_{\mathrm{JVP}}^{q}
\lrp{
\frac{\dd \hat{\vx}_t}{\dd t};
t,r
}
+
\norm{
\mu_q
\lrp{
\frac{\dd \hat{\vx}_t}{\dd t};
t,r
}
-
\mu_p
\lrp{
\frac{\dd \vx_t}{\dd t};
t,r
}
}^2,
\]
where
\[
\mathcal{V}_{\mathrm{JVP}}^{q}
\lrp{
\frac{\dd \hat{\vx}_t}{\dd t};
t,r
}
:=
\E_{\hat{\vx}_t\overset{\sg}{\sim}\,\stdist}
\left[
\norm{
\mathcal{G}_\vtheta
\lrp{
\hat{\vx}_t,t,r;
\frac{\dd \hat{\vx}_t}{\dd t}
}
-
\mu_{q}
\lrp{
\frac{\dd \hat{\vx}_t}{\dd t};
t,r
}
}^2
\right].
\]
Therefore, $\mathcal{M}_{\mathrm{JVP}}^{q, p}$ measures both the intrinsic stochasticity of the JVP correction on student-generated states and the mismatch between the average correction under $\stdist$ and the average correction under the target path $p_t$.

\begin{table}[t]
\centering
\caption{Comparison of representative distillation methods. Some methods that can be trained from scratch, we consider their distillation variants, as our focus is exclusively on the distillation task. Among the compared methods, our method is the only approach that is simultaneously data-free and simulation-free while requiring only two models and a single optimizer. $\dagger$ indicates that the property may vary depending on the training setup. $\star$ indicates that these models can also be trained from scratch, whereas we only consider the distillation setting here.}
\label{tab:method_comparison}
\small
\setlength{\tabcolsep}{5pt}
\renewcommand{\arraystretch}{1.12}
\begin{tabular}{@{}>{\raggedright\arraybackslash}p{0.49\linewidth}ccccc@{}}
\toprule
Method & \makecell{\#\\Models} & \makecell{\#\\Opts} & \makecell{Data-\\Free} & \makecell{Simulation-\\Free} & GAN \\
\midrule
BOOT~\citep{gu2023boot}  & 2         & 1         & \checkmark           & \xmark &   \xmark    \\
DMD2~\citep{yin2024improved}     & 3         & 2         & \checkmark$^\dagger$ & \checkmark$^\dagger$  &  \checkmark$^\dagger$   \\
SiD-LSG~\citep{zhou2025guided}  & 3         & 2         & \checkmark           & \checkmark &   \xmark    \\
SiDA~\citep{zhou2025adversarial}     & 3         & 2         & \xmark          & \checkmark &   \checkmark     \\
FreeFlow~\citep{tong2025flow} & 3         & 2         & \checkmark           & \checkmark &    \xmark   \\
CTM~\citep{ctm} & 3         & 2         & \xmark           & \checkmark &  \checkmark      \\
FACM~\citep{peng2025facm} & 2         & 1         & \xmark           & \checkmark &  \xmark      \\
MF\citep{geng2025mean}/\allowbreak CM\citep{scm,cm}/\allowbreak IMM\citep{zhou2025inductive}    & 2$^\star$ & 1         & \xmark               & \checkmark &    \xmark$^\dagger$   \\
\midrule
\rowcolor{gray!12}
\textbf{\model}(Ours) & \textbf{2} & \textbf{1} & \textbf{\checkmark} & \textbf{\checkmark} & \xmark \\
\bottomrule
\end{tabular}

\end{table}

\section{Relationship with Prior Methods}
\label{app:prior_methods}
Tab.~\ref{tab:method_comparison} compares representative distillation methods by the number of networks and optimizers they need, and by whether they require real data, trajectory simulation, or an adversarial objective. DMD2, SiD-LSG, SiDA, FreeFlow, and CTM train a third auxiliary network, either a fake-score model or a discriminator, with a second optimizer. BOOT is data-free but simulates trajectories during training. FACM and the flow map models (MF, CM, IMM) use only two networks and one optimizer, but they construct noisy states from real data. \model is the only method in the table that is both data-free and simulation-free while keeping only the teacher, the student, and a single optimizer. The subsections below relate \model to FreeFlow, flow maps, and DMD in more detail.

\subsection{FreeFlow}
\label{app:rel_freeflow}
FreeFlow~\citep{tong2025flow} observed that many existing distillation methods can suffer from \emph{Teacher-Data Mismatch}: the student is trained on intermediate states sampled from data-noising distributions $\tilde p_t$, whereas the teacher follows its own generative distributions
\begin{equation}
    \hat p_t := \big(\vf(\cdot,1,t)\big)_{\#}p_1 ,
\end{equation}
where $\big(\vf(\cdot,1,t)\big)_{\#}p_1$ denotes the pushforward of $p_1$ under the teacher flow map $\vz\mapsto \vf(\vz,1,t)$; equivalently, if $\vz\sim p_1$, then $\vf(\vz,1,t)\sim \hat p_t$. This mismatch becomes problematic when the teacher distribution differs from the available data distribution, \eg, due to generalization, guidance, post-training, or unavailable proprietary training data. FreeFlow~\citep{tong2025flow} circumvents this issue by sampling only from the prior, since the noising process and the teacher generative process are guaranteed to coincide at $t=1$, \ie,
$\tilde p_1 = \hat p_1 = p_1$.

In particular, FreeFlow~\citep{tong2025flow} adopts a prior-anchored framework, which can be viewed as the converse of consistency models~\citep{scm,cm, Luo2023LatentCM}. Let $\vz\sim p_1$ and define the target time $r=1-\delta$, where $\delta\in[0,1]$ is the duration of a single reverse-time transition from the prior time $1$ to the target time $r$. Intuitively, FreeFlow learns to predict intermediate states directly from noise, rather than matching intermediate states obtained from the data-noising process. Using our notation, $\vu_\vtheta(\vz,1,r)$ denotes the average velocity associated with this prior-anchored transition, with $\delta=1-r$. The student flow map from time $1$ to time $r$ is written as
\begin{equation}
\label{eq:same_noise_ff}
    \hat{\vx}_r
    =
    \vf_\vtheta(\vz,1,r)
    =
    \vz - (1-r)\vu_\vtheta(\vz,1,r),
    \qquad r\in[0,1].
\end{equation}
Differentiating with respect to the jump duration $\delta$ gives
\begin{equation}
    \partial_{\delta}\vf_{\vtheta}(\vz,1,1-\delta)
    =
    \vu_{\vtheta}(\vz,1,r)
    -
    (1-r)\,\partial_r\vu_{\vtheta}(\vz,1,r),
    \qquad r=1-\delta .
\end{equation}
Matching this generating velocity to the teacher velocity evaluated at the student-predicted state yields the FreeFlow prediction objective with only teacher and student involved
\begin{equation}
    \mathcal{L}_{\text{FreeFlow w.o. DMD}}(\vtheta)
    =
    \mathbb{E}_{\vz\sim p_1,r}
    \left[
        \left\|
        \vu_{\vtheta}(\vz,1,r)
        +
        \mathrm{sg}\!\left(
            \vv(\hat{\vx}_r,r)
            -
            (1-r)\,\partial_r\vu_{\vtheta}(\vz,1,r)
        \right)
        \right\|_2^2
    \right],
    \label{eq:freeflow_prediction_loss}
\end{equation}
where $\mathrm{sg}(\cdot)$ denotes stop-gradient. This stems from \eqref{eq:gt-mean-flow} by setting $t=1$.

Since this prediction objective is self-referential, approximation errors in $\vf_{\vtheta}$ may accumulate along the generated trajectory. FreeFlow therefore introduces an additional correction term that aligns the noising velocity induced by the student-generated distribution with the teacher velocity. This correction is closely related to the variational score distillation objective~\citep{wang2023prolificdreamer}. 
If $\vv_{\text{N}}(\hat{\vx}_r,r)$ denotes the marginal noising velocity induced by the student-generated distribution $\hat p_0$, the correction signal aligns $\vv_{\text{N}}$ with the teacher velocity through the surrogate gradient
\begin{equation}
    \nabla_{\vtheta}\mathcal{L}_{\text{FreeFlow-VSD}}(\vtheta)
    =
    \nabla_{\vtheta}
    \mathbb{E}_{\vz,\veps,r}
    \left[
        \vu_{\vtheta}(\vz,1,0)^{\top}
        \mathrm{sg}\!\left(
            \vv_{\text{N}}(\hat{\vx}_r,r)-\vv(\hat{\vx}_r,r)
        \right)
    \right],
    \label{eq:freeflow_correction_gradient}
\end{equation}
where $\hat{\vx}_r$ is obtained by first generating $\hat{\vx}_0=\vz-\vu_{\vtheta}(\vz,1,0)$ with $\vz\sim p_1$ and then applying the noising process,
\begin{equation}
\label{eq:new_noise_ff}
    \hat{\vx}_r
    =
    (1-r)\cdot\hat{\vx}_0+r\cdot\veps
    =
    (1-r)\bigl(\vz-\vu_{\vtheta}(\vz,1,0)\bigr)+r\veps,
    \qquad \veps\sim p_1,\quad r\in[0,1].
\end{equation}

In practice, $\vv_{\text{N}}$ is estimated by an auxiliary proxy FM model trained on student-generated samples, using either fine-tuning or LoRA adaptation. FreeFlow thus combines a trajectory-consistency prediction term with a distribution-level correction term, while requiring only prior samples from $p_1$. This prior-anchored formulation introduces a trade-off: it simplifies distillation by avoiding supervision over arbitrary timestep pairs, but confines training to trajectories initialized at $t=1$. As a result, the student is optimized primarily as a one-step noise-to-data sampler, rather than as a general flow map capable of modeling denoising transitions between arbitrary timesteps $(t,r)$ with $t>r$. Consequently, it reduces training complexity at the cost of limiting inference-time scaling and not naturally preserving the multi-step scaling behavior of the original flow model.

In the notation of FreeFlow~\citep{tong2025flow}, $\vF_{\vtheta}$ denotes the forward-time velocity, whereas our $\vu(\cdot)$ denotes the corresponding backward-time velocity. Therefore, the two are related by $\vF_{\vtheta}=-\vu(\cdot)$. Beyond this notational difference, FreeFlow relies on pseudo-gradients propagated through student-generated trajectories, typically over intervals such as $[0,1]$ or $[t,1]$. Consequently, its updates depend on the quality of the intermediate student states $\hat{\vx}_t$, which may introduce additional sensitivity during training. In contrast, our method avoids this rollout dependence by providing supervision directly on arbitrary local intervals $[r,t]$. This enables training across the denoising process without explicitly simulating intermediate student trajectories.

\subsection{Flow maps}
\label{app:rel_flowmaps}
For a ground-truth velocity field $\vv(\vx_t,t)$, there exists a flow map~\citep{albergo2023building, Boffi2024FlowMM}
$\vf : \mathbb{R}^D \times [0,1] \times [0,1] \to \mathbb{R}^D$ 
that maps a state at time $t\in[0,1]$ to its corresponding state at time $r\in[0,1]$. Throughout this work, we primarily consider the backward map, \ie a denoiser, with $0\leq r \leq t \leq 1$ for simplicity. More precisely, the flow map defines the ODE trajectory following $\vv(\vx_\tau,\tau)$ for all $\tau\in [r, t]$, \ie, 
\begin{equation}\label{eq:integral-fm}
	\vf(\vx_t,t,r):=\vx_t + \int_t^r \vv(\vx_\tau,\tau)\,\dd{\tau}.
\end{equation}
When $r<t$, the integral is taken in the backward direction and $\vf(\vx_t,t,r)$ returns the state at the earlier time $r$.
In practice, sampling is carried out by numerically integrating the learned ODE using standard solvers such as Euler's method.

By viewing the marginal $\vv(\vx_t,t)$ as the instantaneous velocity field, MeanFlow and its variants~\citep{geng2025mean,geng2025improved,zhang2026alphaflow} construct an average velocity field between times $0\leq r<t \leq 1$ as $$\vu(\vx_t,t,r)
:=
\frac{\vx_t-\vf(\vx_t,t,r)}{t-r}
=
\frac{1}{t-r}\int_r^t \vv(\vx_\tau,\tau)\,\dd{\tau},$$
where $\vf(\vx_t,t,r)$ denotes the flow map from time $t$ to time $r$ so that these one-step models can be trained from scratch.
Equivalently, we have 
\begin{equation}\label{eq:avg-vel}
	\int_r^t \vv(\vx_\tau,\tau)\,\dd{\tau}=(t-r)\cdot\vu(\vx_t,t,r).
\end{equation}
Note that the flow map can also be recovered from the average velocity field via
$\vf(\vx_t,t,r)=\vx_t-(t-r)\vu(\vx_t,t,r)$. We define $\vu(\vx_t,t,t):=\lim_{r\to t}\vu(\vx_t,t,r)=\vv(\vx_t,t)$,
where we assume the limit exists and is equal to the velocity field at time $t$.
Now, differentiating both sides of \eqref{eq:avg-vel} with respect to $t$ yields
\begin{equation}\label{eq:gt-mean-flow}
	\vv(\vx_t,t)=\vu(\vx_t,t,r)+(t-r)\cdot\frac{\dd}{\dd t}\,\vu(\vx_t,t,r),
\end{equation}
where $r$ is held fixed and $\dd/\dd t$ denotes the total derivative along the ODE trajectory $t\mapsto \vx_t$.
Motivated by the equation above, improved MeanFlow~\citep{geng2025improved} parameterizes the student average velocity field with a neural network $\vu_\vtheta(\vx_t,t,r)$ and defines the corrected prediction $
\vv_\vtheta(\vx_t,t,r)
:=
\vu_\vtheta(\vx_t,t,r)
+
(t-r)\cdot\frac{\dd}{\dd t}\,\vu_\vtheta(\vx_t,t,r).$

By design, matching this corrected prediction to the instantaneous velocity yields an objective analogous to the standard CFM objective:
\begin{equation}\label{eq:L-avg}
	\mathcal{L}_{\mathrm{Flow Map}}(\vtheta)
	:=
	\mathbb{E}_{\vx_t,t,r}
	\left[
	\left\|
	\vv_\vtheta(\vx_t,t,r)-\vv(\vx_t,t)
	\right\|_2^2
	\right].
\end{equation}

\subsection{Equivalence with original DMD updates}
\label{app:dmd_equivalence}

We show that the velocity-form distribution matching gradient used in the main text is equivalent to the original DMD update up to a time-dependent scalar weight. The difference is only a choice of parametrization: DMD is commonly written in score- or denoiser-prediction form, whereas our presentation writes the same update in velocity form.

Let the student clean sample be
\[
\hat{\vx}_0=\vf_{\vtheta}(\vz,1,0),\qquad \vz\sim p_1,
\]
and let the perturbed sample be generated by the affine forward path
\[
\hat{\vx}_t=\alpha_t\hat{\vx}_0+\sigma_t\veps,
\qquad \veps\sim p_1,\quad t\in[0,1].
\]
We use Gaussian perturbation noise $\veps$ independent of $\vz$, and hence independent of $\hat{\vx}_0=\vf_{\vtheta}(\vz,1,0)$. This independence is required for the denoising identity below. Let $q_t$ denote the marginal distribution of $\hat{\vx}_t$ induced by the student, and let $p_t$ denote the corresponding teacher marginal.

For a generic clean distribution $p_0$, let $\tilde p_t$ denote the perturbed marginal induced by
\[
\vx_t=\alpha_t\vx_0+\sigma_t\veps,\qquad \vx_0\sim p_0,\quad \veps\sim p_1.
\]
Let
\[
s_{\tilde p}(\vx_t,t):=\nabla_{\vx_t}\log \tilde p_t(\vx_t)
\]
be the score of the perturbed marginal, and let
\[
\vmu_{\tilde p}(\vx_t,t):=
\E_{\tilde p}\!\left[\vx_0\mid \vx_t\right]
\]
be the optimal clean estimator. The standard denoising identity gives
\begin{equation}
\label{eq:score_clean_identity}
s_{\tilde p}(\vx_t,t)
=
\frac{\alpha_t\vmu_{\tilde p}(\vx_t,t)-\vx_t}{\sigma_t^2}.
\end{equation}
Indeed,
\[
\nabla_{\vx_t}\tilde p_t(\vx_t)
=
\int
-\frac{\vx_t-\alpha_t\vx_0}{\sigma_t^2}
q(\vx_t\mid \vx_0)p_0(\vx_0)\,\dd \vx_0,
\]
and dividing by $\tilde p_t(\vx_t)$ yields \eqref{eq:score_clean_identity}.

Now define the marginal velocity field associated with $\tilde p_t$ as
\[
\vv_{\tilde p}(\vx_t,t)
:=
\E_{\tilde p}\!\left[
\dot{\alpha}_t\vx_0+\dot{\sigma}_t\veps
\mid \vx_t
\right].
\]
Since
\[
\E[\veps\mid \vx_t]
=
\frac{\vx_t-\alpha_t\vmu_{\tilde p}(\vx_t,t)}{\sigma_t},
\]
we obtain
\begin{align}
\vv_{\tilde p}(\vx_t,t)
&=
\dot{\alpha}_t\vmu_{\tilde p}(\vx_t,t)
+
\dot{\sigma}_t
\frac{\vx_t-\alpha_t\vmu_{\tilde p}(\vx_t,t)}{\sigma_t} \\
&=
\frac{\dot{\sigma}_t}{\sigma_t}\vx_t
+
\left(
\dot{\alpha}_t
-
\frac{\alpha_t\dot{\sigma}_t}{\sigma_t}
\right)
\vmu_{\tilde p}(\vx_t,t).
\end{align}
Substituting \eqref{eq:score_clean_identity}, i.e.
\[
\vmu_{\tilde p}(\vx_t,t)
=
\frac{\vx_t+\sigma_t^2s_{\tilde p}(\vx_t,t)}{\alpha_t},
\]
gives
\begin{equation}
\label{eq:velocity_score_relation}
\vv_{\tilde p}(\vx_t,t)
=
a_t\vx_t+b_ts_{\tilde p}(\vx_t,t),
\end{equation}
where
\[
a_t
=
\frac{\dot{\sigma}_t}{\sigma_t}
+
\frac{1}{\alpha_t}
\left(
\dot{\alpha}_t
-
\frac{\alpha_t\dot{\sigma}_t}{\sigma_t}
\right),
\qquad
b_t
=
\frac{\sigma_t^2}{\alpha_t}
\left(
\dot{\alpha}_t
-
\frac{\alpha_t\dot{\sigma}_t}{\sigma_t}
\right).
\]
The term $a_t\vx_t$ is independent of the perturbed marginal. Therefore, for the fake and real marginals,
\begin{equation}
\label{eq:score_velocity_difference}
\vv_{\mathrm{fake}}(\vx_t,t)
-
\vv_{\mathrm{real}}(\vx_t,t)
=
b_t
\left(
s_{\mathrm{fake}}(\vx_t,t)
-
s_{\mathrm{real}}(\vx_t,t)
\right).
\end{equation}
Thus the score difference used by DMD and the velocity difference used in our presentation are equivalent up to the scalar $b_t$.

Consider the reverse-KL objective over perturbed marginals,
\[
\mathcal{L}_{\mathrm{rKL}}(\vtheta)
=
\E_t
\left[
D_{\mathrm{KL}}\left(q_t\,\|\,p_t\right)
\right].
\]
The DMD pathwise gradient has the form
\begin{equation}
\label{eq:rkl_score_gradient}
\nabla_{\vtheta}\mathcal{L}_{\mathrm{rKL}}(\vtheta)
=
\E_{t,\vz,\veps}
\left[
\left(
s_{\mathrm{fake}}(\hat{\vx}_t,t)
-
s_{\mathrm{real}}(\hat{\vx}_t,t)
\right)^\top
\pdv{\hat{\vx}_t}{\vtheta}
\right].
\end{equation}
Using \eqref{eq:score_velocity_difference}, we can rewrite this as
\begin{equation}
\label{eq:rkl_velocity_gradient}
\nabla_{\vtheta}\mathcal{L}_{\mathrm{rKL}}(\vtheta)
=
\E_{t,\vz,\veps}
\left[
\frac{1}{b_t}
\left(
\vv_{\mathrm{fake}}(\hat{\vx}_t,t)
-
\vv_{\mathrm{real}}(\hat{\vx}_t,t)
\right)^\top
\pdv{\hat{\vx}_t}{\vtheta}
\right].
\end{equation}
The scalar $1/b_t$ depends only on the noise schedule and can be absorbed into the time-dependent weighting function. Therefore,
\begin{equation}
\label{eq:dmd_velocity_equiv}
\nabla_{\vtheta}\mathcal{L}_{\mathrm{rKL}}(\vtheta)
\propto
\E_{t,\vz,\veps}
\left[
w_t
\left(
\vv_{\mathrm{fake}}(\hat{\vx}_t,t)
-
\vv_{\mathrm{real}}(\hat{\vx}_t,t)
\right)^\top
\pdv{\hat{\vx}_t}{\vtheta}
\right],
\end{equation}
which is the velocity-form gradient used in the main text.

For the linear interpolation path $\alpha_t=1-t$ and $\sigma_t=t$, we have
\[
\vv_{\tilde p}(\vx_t,t)
=
\E[\veps-\vx_0\mid \vx_t],
\]
and \eqref{eq:score_velocity_difference} becomes
\begin{equation}
\label{eq:linear_score_velocity_relation}
\vv_{\mathrm{fake}}(\vx_t,t)
-
\vv_{\mathrm{real}}(\vx_t,t)
=
-\frac{t}{1-t}
\left(
s_{\mathrm{fake}}(\vx_t,t)
-
s_{\mathrm{real}}(\vx_t,t)
\right).
\end{equation}
Hence, under the linear schedule, the two gradients differ only by the scalar factor $(t-1)t^{-1}$, which is absorbed into $w_t$ and depends on the sign convention for velocity definition.

Finally, we show that the fake velocity model trained in the main text is equivalent to the fake score model in DMD. Original DMD trains a fake denoiser by
\[
\mathcal{L}_{\mathrm{denoise}}(\vphi;\vtheta)
=
\E_{t,\vz,\veps}
\left[
\left\|
\vmu_\vphi(\hat{\vx}_t,t)-\hat{\vx}_0
\right\|_2^2
\right],
\]
with the generator $\vtheta$ frozen. The population minimizer is $\vmu_\vphi^\star(\vx_t,t) = \E[\hat{\vx}_0\mid \hat{\vx}_t].$
Equivalently, if we parameterize the fake critic as a velocity field and train it with the flow-matching objective
\[
\mathcal{L}_{\mathrm{fake}}(\vphi;\vtheta)
=
\E_{t,\vz,\veps}
\left[
\left\|
\vv_\vphi(\hat{\vx}_t,t)
-
\left(
\dot{\alpha}_t\hat{\vx}_0+\dot{\sigma}_t\veps
\right)
\right\|_2^2
\right],
\]
then its population minimizer is $\vv_\vphi^\star(\vx_t,t)=\E\left[\dot{\alpha}_t\hat{\vx}_0+\dot{\sigma}_t\veps\mid\hat{\vx}_t\right]=\vv_{\mathrm{fake}}(\vx_t,t).$

According to \eqref{eq:velocity_score_relation}, $\vmu_\vphi^\star$, $s_{\mathrm{fake}}$, and $\vv_{\mathrm{fake}}$ are related by deterministic affine transformations at each time $t$. Therefore, training a fake velocity model by the flow-matching objective is equivalent to training the fake score model used in DMD, up to parametrization. Consequently, the velocity-form update in \eqref{eq:dmd_velocity_equiv} optimizes the same distribution matching objective as the original DMD update.

\section{Implementation Details}
\label{app:implementation_details}
This section gives the training, architecture, and evaluation details for class-conditional ImageNet (Appendix~\ref{app:impl_imagenet}), the analytic 2D benchmarks behind Fig.~\ref{fig:synthetic_diagnostics} (Appendix~\ref{app:impl_synthetic}), and the SANA-1.6B text-to-image setting (Appendix~\ref{app:impl_t2i}).

\subsection{Class-conditional ImageNet}
\label{app:impl_imagenet}

\begin{figure}[t]
    \centering

    \begin{minipage}[t]{0.45\linewidth}
        \vspace{8pt}
        \centering
        \includegraphics[width=\linewidth]{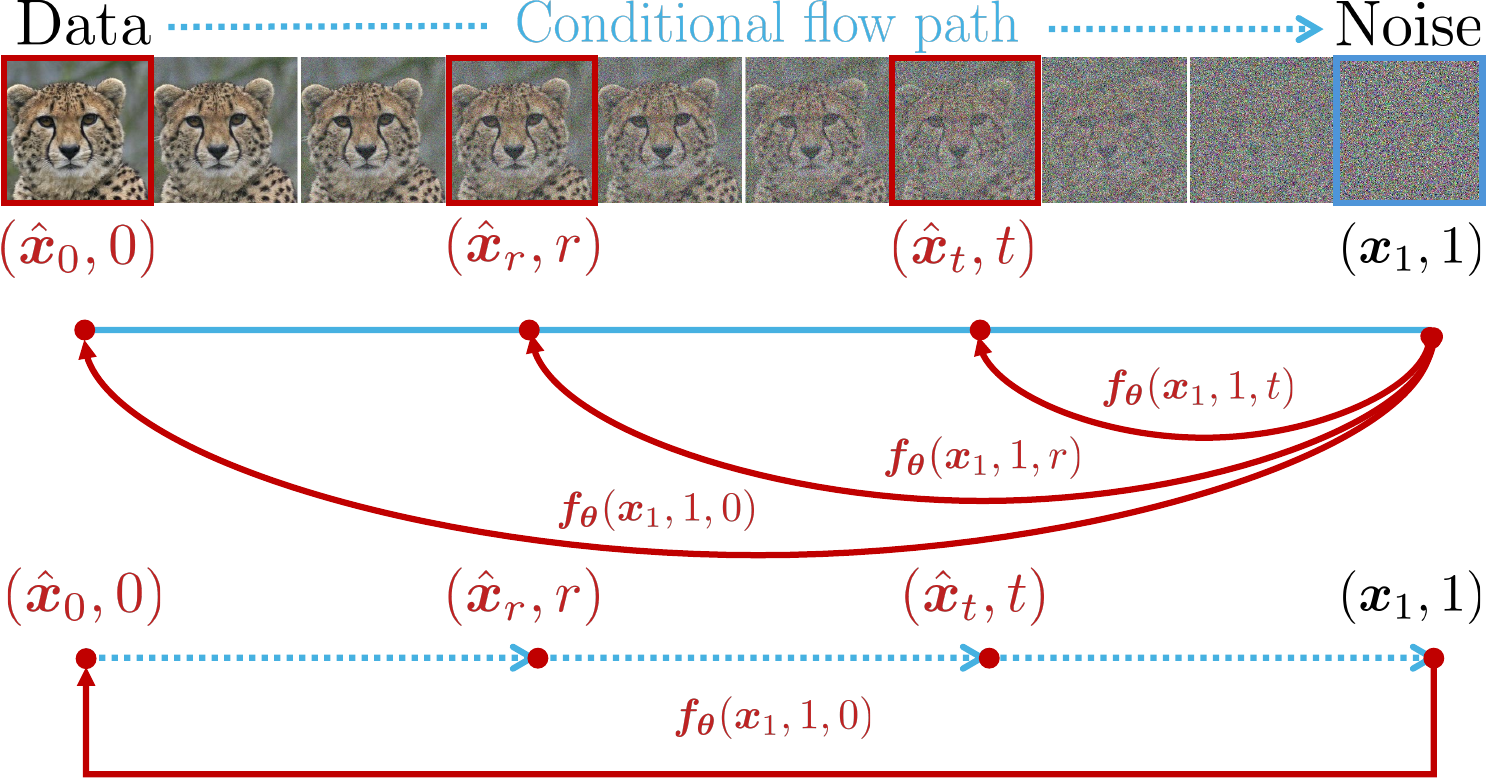}\\[1.25em]
        % \vspace{0.3em}
        \textbf{(a)}
        \label{fig:student_xt}
    \end{minipage}%
    \hfill
    \begin{minipage}[t]{0.55\linewidth}
        \vspace{0pt}
        \centering
        \resizebox{\linewidth}{!}{%
            \begin{tikzpicture}[x=0.75pt,y=0.75pt,yscale=-1,xscale=1]
%uncomment if require: \path (0,681); %set diagram left start at 0, and has height of 681

%Straight Lines [id:da6931784206990724] 
\draw [line width=1]    (37.29,240.15) -- (521.29,241) ;
\draw [shift={(34.29,240.14)}, rotate = 0.1] [fill={rgb, 255:red, 0; green, 0; blue, 0 }  ][line width=0.08]  [draw opacity=0] (10.72,-5.15) -- (0,0) -- (10.72,5.15) -- (7.12,0) -- cycle    ;

%Curve Lines [id:da040013134244750836] 
\draw [line width=1]    (123,54) .. controls (409.57,39.57) and (194.29,186.57) .. (468.29,98.57) ;

%Curve Lines [id:da20510438099953465] 
\draw [line width=1]  [dash pattern={on 4.5pt off 4.5pt}]  (123,128) .. controls (412.29,115.57) and (194.29,260.57) .. (468.29,172.57) ;

%Straight Lines [id:da2525434932913624] 
\draw [color={rgb, 255:red, 155; green, 155; blue, 155 }  ,draw opacity=1 ][line width=1]  [dash pattern={on 0.84pt off 2.51pt}]  (220.29,23.57) -- (220.29,239.57) ;

%Straight Lines [id:da7340117263342695] 
\draw [color={rgb, 255:red, 155; green, 155; blue, 155 }  ,draw opacity=1 ][line width=1]  [dash pattern={on 0.84pt off 2.51pt}]  (340.29,23.57) -- (340.29,239.57) ;

%Shape: Circle [id:dp9720296828546516] 
\draw  [fill={rgb, 255:red, 0; green, 0; blue, 0 }  ,fill opacity=1 ] (338.14,126.43) .. controls (338.14,125.25) and (339.1,124.29) .. (340.29,124.29) .. controls (341.47,124.29) and (342.43,125.25) .. (342.43,126.43) .. controls (342.43,127.61) and (341.47,128.57) .. (340.29,128.57) .. controls (339.1,128.57) and (338.14,127.61) .. (338.14,126.43) -- cycle ;

%Shape: Circle [id:dp5923887465396338] 
\draw  [fill={rgb, 255:red, 0; green, 0; blue, 0 }  ,fill opacity=1 ] (218.14,131.57) .. controls (218.14,130.39) and (219.1,129.43) .. (220.29,129.43) .. controls (221.47,129.43) and (222.43,130.39) .. (222.43,131.57) .. controls (222.43,132.75) and (221.47,133.71) .. (220.29,133.71) .. controls (219.1,133.71) and (218.14,132.75) .. (218.14,131.57) -- cycle ;

%Shape: Circle [id:dp9534113113245862] 
\draw  [fill={rgb, 255:red, 0; green, 0; blue, 0 }  ,fill opacity=1 ] (338,201.14) .. controls (338,199.96) and (338.96,199) .. (340.14,199) .. controls (341.33,199) and (342.29,199.96) .. (342.29,201.14) .. controls (342.29,202.33) and (341.33,203.29) .. (340.14,203.29) .. controls (338.96,203.29) and (338,202.33) .. (338,201.14) -- cycle ;

%Straight Lines [id:da2447687249392233] 
\draw [color={rgb, 255:red, 155; green, 155; blue, 155 }  ,draw opacity=1 ]   (338.29,127.43) -- (289.27,123.12) ;

\draw [shift={(286.29,122.86)}, rotate = 5.02] [fill={rgb, 255:red, 155; green, 155; blue, 155 }  ,fill opacity=1 ][line width=0.08]  [draw opacity=0] (5.36,-2.57) -- (0,0) -- (5.36,2.57) -- (3.56,0) -- cycle    ;

%Straight Lines [id:da6126340347959711] 
\draw [color={rgb, 255:red, 255; green, 0; blue, 0 }  ,draw opacity=1, line width=1pt ]   (225,133.12) -- (338,201.14) ;

\draw [shift={(222.43,131.57)}, rotate = 31.05] [fill={rgb, 255:red, 208; green, 2; blue, 27 }  ,fill opacity=1 ][line width=0.08]  [draw opacity=0] (5.36,-2.57) -- (0,0) -- (5.36,2.57) -- (3.56,0) -- cycle    ;

% Vertical Gaussian density for prior p_1
% with gradient-filled area + gradient boundary
\foreach \i in {0,...,119}{
    \pgfmathsetmacro{\ta}{\i/120}
    \pgfmathsetmacro{\tb}{(\i+1)/120}

    \pgfmathsetmacro{\ya}{50.29 + (229.29-50.29)*\ta}
    \pgfmathsetmacro{\yb}{50.29 + (229.29-50.29)*\tb}

    \pgfmathsetmacro{\xa}{474.71 + 35*exp(-((\ya-140)^2)/(2*35^2))}
    \pgfmathsetmacro{\xb}{474.71 + 35*exp(-((\yb-140)^2)/(2*35^2))}

    % blue -> yellow -> blue
    \pgfmathtruncatemacro{\mix}{round(100*abs(2*\ta-1))}

    % filled area between baseline x=474.71 and the curve
    \fill[
        fill=curveblue!\mix!curveyellow,
        fill opacity=0.28,
        draw opacity=0
    ]
    (474.71,\ya) -- (\xa,\ya) -- (\xb,\yb) -- (474.71,\yb) -- cycle;

    % boundary curve
    \draw[
        draw=curveblue!\mix!curveyellow,
        line width=1.5,
        line cap=round
    ] (\xa,\ya) -- (\xb,\yb);
}

% Smooth left density curve p_0
% high density = yellow, low density = blue
% This version avoids exp(...) to prevent "Dimension too large" errors.

% Smooth left density curve p_0
% high density = yellow, low density = blue
% with gradient-filled area + gradient boundary

\foreach \i in {0,...,399} {
    \pgfmathsetmacro{\ta}{\i/400}
    \pgfmathsetmacro{\tb}{(\i+1)/400}

    % y-coordinates
    \pgfmathsetmacro{\ya}{47.43 + (229.43-47.43)*\ta}
    \pgfmathsetmacro{\yb}{47.43 + (229.43-47.43)*\tb}

    % ----- x at ya -----
    \pgfmathsetmacro{\zA}{(\ya-69)/24}
    \pgfmathsetmacro{\zB}{(\ya-105)/18}
    \pgfmathsetmacro{\zC}{(\ya-155)/25}
    \pgfmathsetmacro{\zD}{(\ya-197)/30}

    \pgfmathsetmacro{\bA}{max(0,1-\zA*\zA)}
    \pgfmathsetmacro{\bB}{max(0,1-\zB*\zB)}
    \pgfmathsetmacro{\bC}{max(0,1-\zC*\zC)}
    \pgfmathsetmacro{\bD}{max(0,1-\zD*\zD)}

    \pgfmathsetmacro{\xa}{
        50.29
        + 42*\bA*\bA
        + 68*\bB*\bB
        + 26*\bC*\bC
        + 38*\bD*\bD
    }

    % ----- x at yb -----
    \pgfmathsetmacro{\zA}{(\yb-69)/24}
    \pgfmathsetmacro{\zB}{(\yb-105)/18}
    \pgfmathsetmacro{\zC}{(\yb-155)/25}
    \pgfmathsetmacro{\zD}{(\yb-197)/30}

    \pgfmathsetmacro{\bA}{max(0,1-\zA*\zA)}
    \pgfmathsetmacro{\bB}{max(0,1-\zB*\zB)}
    \pgfmathsetmacro{\bC}{max(0,1-\zC*\zC)}
    \pgfmathsetmacro{\bD}{max(0,1-\zD*\zD)}

    \pgfmathsetmacro{\xb}{
        50.29
        + 42*\bA*\bA
        + 68*\bB*\bB
        + 26*\bC*\bC
        + 38*\bD*\bD
    }

    % farther right = higher density = more yellow
    \pgfmathsetmacro{\xmid}{0.5*(\xa+\xb)}
    \pgfmathsetmacro{\density}{max(0,min(1,(\xmid-50.29)/(119.29-50.29)))}

    % curveblue!100!curveyellow = blue
    % curveblue!0!curveyellow   = yellow
    \pgfmathtruncatemacro{\mix}{round(100*(1-\density))}

    % filled area between baseline x=50.29 and the curve
    \fill[
        fill=curveblue!\mix!curveyellow,
        fill opacity=0.28,
        draw opacity=0
    ]
    (50.29,\ya) -- (\xa,\ya) -- (\xb,\yb) -- (50.29,\yb) -- cycle;

    % boundary curve
    \draw[
        draw=curveblue!\mix!curveyellow,
        line width=1.5,
        line cap=round,
        line join=round
    ] (\xa,\ya) -- (\xb,\yb);
}

%Straight Lines [id:da7497937335863495] 
\draw [color={rgb, 255:red, 155; green, 155; blue, 155 }  ,draw opacity=1 ]   (338.29,201.43) -- (289.27,197.12) ;

\draw [shift={(286.29,196.86)}, rotate = 5.02] [fill={rgb, 255:red, 155; green, 155; blue, 155 }  ,fill opacity=1 ][line width=0.08]  [draw opacity=0] (5.36,-2.57) -- (0,0) -- (5.36,2.57) -- (3.56,0) -- cycle    ;

%Rounded Rect [id:dp19406183892770534] 
\draw  [color={rgb, 255:red, 128; green, 128; blue, 128 }  ,draw opacity=1 ][fill={rgb, 255:red, 155; green, 155; blue, 155 }  ,fill opacity=0.37 ][line width=1.5]  (356,48.8) .. controls (356,43.94) and (359.94,40) .. (364.8,40) -- (448.49,40) .. controls (453.35,40) and (457.29,43.94) .. (457.29,48.8) -- (457.29,75.2) .. controls (457.29,80.06) and (453.35,84) .. (448.49,84) -- (364.8,84) .. controls (359.94,84) and (356,80.06) .. (356,75.2) -- cycle ;

%Curve Lines [id:da8627812203166537] 
\draw    (361.29,53) .. controls (392.29,41) and (370.29,69) .. (397.29,50) ;

%Curve Lines [id:da4940963326737282] 
\draw  [dash pattern={on 4.5pt off 4.5pt}]  (360.29,72) .. controls (391.29,60) and (369.29,88) .. (396.29,69) ;

% Text Node
\draw (216,245.4) node [anchor=north west][inner sep=0.75pt][font=\Large]    {$r$};
% Text Node
\draw (337,245.4) node [anchor=north west][inner sep=0.75pt][font=\Large]    {$t$};
% Text Node
\draw (280,126.4) node [anchor=north west][inner sep=0.75pt]  [font=\large,color={rgb, 255:red, 155; green, 155; blue, 155 }  ,opacity=1 ]  {$\vv(\hat{\vx}_{t} ,t)$};
% Text Node
\draw (280,201.83) node [anchor=north west][inner sep=0.75pt]  [font=\large,color={rgb, 255:red, 155; green, 155; blue, 155 }  ,opacity=1 ]  {$\vv_\vtheta(\hat{\vx}_{t} ,t)$};
% Text Node
\draw (228,251) node [anchor=north west][inner sep=0.75pt] [font=\large]   {$ \color{red}{\hat{\vx}_{t}-\vf_{\vtheta}(\hat{\vx}_{t} ,t,r)}$};
% Text Node
\draw (63,26.4) node [anchor=north west][inner sep=0.75pt][font=\Large]    {$p_{0}$};
% Text Node
\draw (43,7.4) node [anchor=north west][inner sep=0.75pt][font=\Large]    {$\text{Teacher}$};
% Text Node
\draw (459,7.4) node [anchor=north west][inner sep=0.75pt][font=\Large]    {$\text{Prior}$};
% Text Node
\draw (470,26.4) node [anchor=north west][inner sep=0.75pt][font=\Large]    {$p_{1}$};
% Text Node
\draw (404,46.4) node [anchor=north west][inner sep=0.75pt]    {$\text{Teacher}$};
% Text Node
\draw (404,65.4) node [anchor=north west][inner sep=0.75pt]    {$\text{Student}$};

\draw (222,172) node [anchor=north west][inner sep=0.75pt]  [font=\large]  {$\color{red}{\vu_\vtheta(\hat{\vx}_t,t,r)}$};

\end{tikzpicture}
        }
        % \vspace{0.3em}
        \textbf{(b)}
        \label{fig:trajectory}
    \end{minipage}
    \caption{Overview of our method. Panel (a) illustrates two data-free constructions of $\hat{\vx}_t$ shown by the red curves. In (b), we demonstrate how the student velocity matches the target teacher velocity. Note that teacher velocity is the only supervision signal.}
    \label{fig:main_pipeline_specific}
\end{figure}

\paragraph{Training-state construction.}
Fig.~\ref{fig:main_pipeline_specific}(a) shows the two data-free constructions of $\hat{\vx}_t$ in \eqref{eq:bootstrapped_states}. In the upper construction ($s=t$), the student jumps from the prior sample $\vx_1$ directly to each intermediate time, \eg, $\hat{\vx}_t=\vf_\vtheta(\vx_1,1,t)$, so the training state lies on the student path. In the lower construction ($s=0$), the student first produces a clean sample $\hat{\vx}_0=\vf_\vtheta(\vx_1,1,0)$, which is then re-noised along the conditional flow path with fresh Gaussian noise. Each construction costs one student evaluation and no teacher rollout. Fig.~\ref{fig:main_pipeline_specific}(b) shows the supervision on such a state: the student average velocity $\vu_\vtheta(\hat{\vx}_t,t,r)$ defines the jump $\hat{\vx}_t-\vf_\vtheta(\hat{\vx}_t,t,r)$ over $[r,t]$, and its corrected velocity is regressed onto the teacher velocity $\vv(\hat{\vx}_t,t)$. In Alg.~\ref{alg:train}, \texttt{z1} and \texttt{z2} are the $s=0$ and $s=t$ states, respectively. They are mixed with ratio $m$ (Tab.~\ref{tab:m_ablation}) and detached before the teacher is queried.

\paragraph{Training.}
 The backbone architectures follow SiT, which is based on DiT~\citep{peebles2023scalable} with adaLN-Zero~\citep{peebles2023scalable} for conditioning. To condition the student model on the additional variables, we use a two-layer SiLU-activated MLP whose hidden dimension matches the model's hidden dimension. We add new embeddings for $r$ and $w$, and adopt adaLN-Zero for conditioning. The resulting conditioning signal is fed to each transformer block. No further architectural changes are necessary for stable and effective training. Note that SiT teacher and SiT+VA-REPA have different configurations on what the time in FM represents. SiT adopts the convention when $\vx_1\sim p_{\text{data}}, \vx_0\sim p_{\text{prior}}$ which is the opposite of the notion we adopt in this paper. Our convention on FM time aligns with SiT+VA-REPA~\citep{yang2026stablevelocityvarianceperspective} or SiT+REPA~\citep{yu2024representation} in their implementation. Our training algorithm strictly follows~\ref{alg:train}. In addition, we use Flash-Attention to reduce GPU memory usage and accelerate training. Plus, we customize the Flash-Attention kernel to support JVP computation, following sCM~\citep{scm}.

All models are trained using the PyTorch framework. The detailed training hyperparameters are provided in Tab.~\ref{tab:training_config}.
\begin{table}[t]
\centering
\caption{Training hyperparameters and model configuration. The listed distributions are the initial proposals.
$\ddagger$: Teacher CFG is $1.75$ for $t\in[0.125,1]$
and $1$ otherwise.
The Lightning-DiT run uses post-hoc EMA for evaluation. The listed decay is the configured student EMA decay. We do not apply student guidance and flexible guidance approaches~\citep{geng2025improved} to Lightning-DiT-XL/2 teacher.}
\label{tab:training_config}
\small
\setlength{\tabcolsep}{6pt}
\begin{tabular}{l c c c}
\toprule
\textbf{\textit{Task}} & & & \\
\midrule
teacher model & SiT-XL/2+VA-REPA & SiT-XL/2 & Lightning-DiT-XL/2 \\
resolution & $256\times256$ & $256\times256$ & $256\times256$ \\
\midrule

\multicolumn{4}{l}{\textbf{\textit{General}}} \\
\midrule
iterations & 100K & 100K & 100K \\
batch size & 1024 & 1024 & 1024 \\
optimizer & Adam & Adam & AdamW \\
optimizer betas & $(0.9, 0.99)$ & $(0.9, 0.99)$ & $(0.9, 0.999)$ \\
optimizer eps & $1\times10^{-8}$ & $1\times10^{-8}$ & $1\times10^{-8}$ \\
weight decay & 0.0 & 0.0 & 0.0 \\
dropout & 0.0 & 0.0 & 0.0 \\
learning rate & $3\times10^{-5}$ & $3\times10^{-5}$ & $1\times10^{-4}$ \\
EMA decay & 0.9995 & 0.9995 & 0.999 \\
\midrule

\multicolumn{4}{l}{\textbf{\textit{Network}}} \\
\midrule
params (M) & \multicolumn{3}{c}{683} \\
depth & \multicolumn{3}{c}{28} \\
endpoint conditioning depth & 0 & 0 & 10 \\
hidden dim & \multicolumn{3}{c}{1152} \\
heads & \multicolumn{3}{c}{16} \\
patch size & \multicolumn{3}{c}{$2\times2$} \\
change from teacher
  & \multicolumn{3}{c}{additional input for $r$ and $w$} \\
\midrule

\multicolumn{4}{l}{\textbf{\textit{Training}}} \\
\midrule
$t$ sampling
  & LogitNormal$(-0.4, 1.0)$
  & LogitNormal$(0.4, 1.0)$
  & LogitNormal$(-0.4, 1.6)$ \\
$r$ sampling
  & LogitNormal$(-0.4, 1.0)$
  & LogitNormal$(0.4, 1.0)$
  & LogitNormal$(1.2, 1.6)$ \\
\% of $r=t$ & \multicolumn{2}{c}{10\%} & 50\% \\
$\alpha$ sampling
  & \multicolumn{2}{c}{$\mathcal{U}[0, 0.875]$}
  & N/A \\
Class dropout & \multicolumn{2}{c}{10\%} & 0\% \\
teacher CFG $w$
  & \multicolumn{2}{c}{$[1,2]$}
  & $1.75^{\ddagger}$ \\
$m$ ratio
  & \multicolumn{3}{c}{0.5} \\
\bottomrule
\end{tabular}

\end{table}

\paragraph{Architecture.} Inspired by DMF~\citep{lee2026decoupled}, we investigate delaying endpoint conditioning to the later DiT blocks, as illustrated in Fig.~\ref{fig:cdmd_transformer_architecture}. It is important to note that our architecture also differs from DMF~\citep{lee2026decoupled}: rather than decoupling the $t$ and $r$ signals, we preserve the original time-conditioning branch throughout the network and inject the additional endpoint signal only into the later DiT blocks. We find this design to be more effective for data-free distillation. Empirically, we find that decoupling strategy from DMF can accelerate FID convergence during early training, but does not improve the final performance and can occasionally degrade it. We thereby ablate both the form of the conditioning signal and the layer at which endpoint-related information is introduced in the data-free setting. Our results show that conditioning on the interval $t-r$, as in MeanFlow, consistently performs best. Moreover, for our 28-block student, injecting this signal from an intermediate block performs much better than injecting it in every block, with the best FID at block $8$ under a 25K-iteration budget (Tab.~\ref{tab:endpoint_depth}); our full LightningDiT-XL/2 run uses block $10$ (Tab.~\ref{tab:training_config}). Appendix~\ref{app:endpoint_conditioning} discusses this ablation in detail. Importantly, the original timestep $t$ remains available to all DiT blocks, while the additional $t-r$ conditioning is introduced only from block $10$ onward. This design therefore differs from DMF: rather than fully decoupling the timestep and endpoint signals, we retain global conditioning on $t$ and selectively inject the interval signal into later layers. 

\paragraph{Evaluation.}
For evaluation, we use the same PyTorch sampling and evaluation pipeline for all reported models to ensure a consistent comparison with baseline methods. Unless otherwise specified, we evaluate the $256\times256$ class-conditional ImageNet generation performance of the student model distilled from the SiT-XL/2+VA-REPA teacher. Because the model is trained with teacher guidance strengths sampled from $w \sim \mathcal{U}(1,2)$, we sweep the guidance strength within this range at evaluation time and report the best-performing setting. The final results are reported in Tab.~\ref{tab:imagenet_generation}.
For completeness, Tab.~\ref{tab:imagenet_reference} lists ImageNet 256$\times$256 results of representative 1-NFE and multi-NFE diffusion/flow models trained from scratch, GANs, autoregressive/masking models, and data-based distillation methods. These methods use real training data and different training budgets, so they serve as reference points rather than as baselines for the data-free comparison in Tab.~\ref{tab:imagenet_generation}.

For the SiT-XL/2 teacher setting, we currently specify only the corresponding $t$ and $r$ sampling distributions, as shown in Tab.~\ref{tab:training_config}. The remaining settings and evaluation results are left empty until stable training outcomes are obtained.

\subsection{Synthetic 2D Benchmarks}
\label{app:impl_synthetic}

\begin{figure}[!ht]
    \centering
    \includegraphics[width=\textwidth]{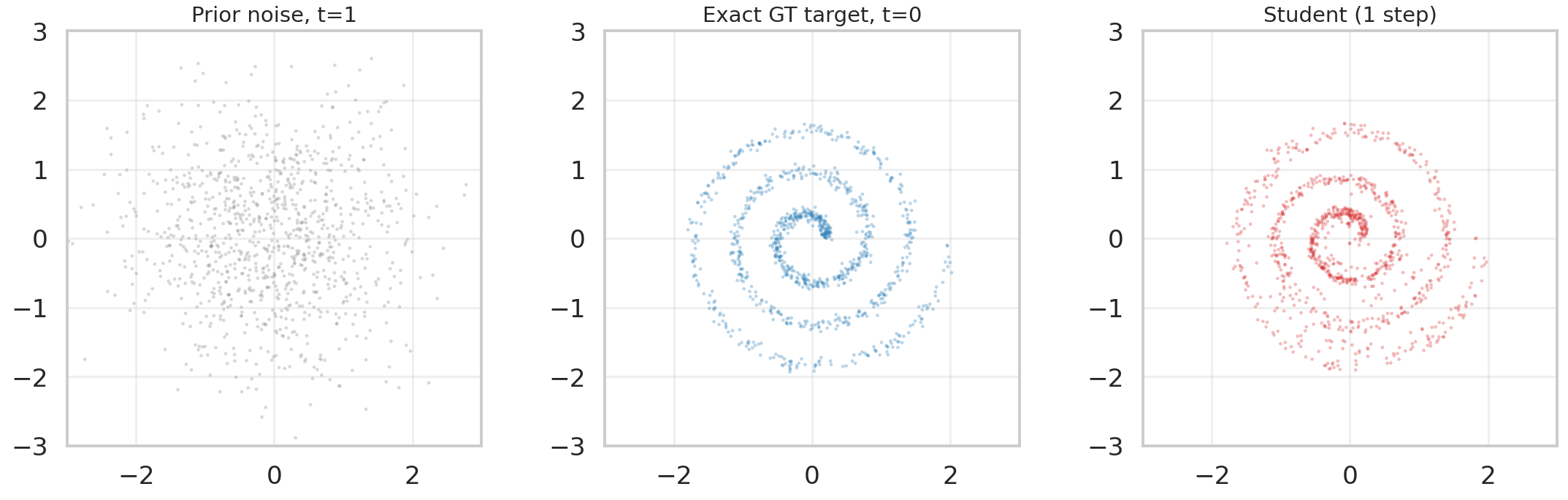}
    \caption{
    \textbf{Analytic 2D spiral benchmark.}
    Left: Gaussian prior noise at $t=1$.
    Middle: exact ground-truth spiral target at $t=0$.
    Right: one-step student samples after distillation.
    }
    \label{fig:spiral_samples}
\end{figure}
% We evaluate on an analytic 2D spiral benchmark where the teacher velocity and
% terminal distribution can be computed without a learned teacher checkpoint. As
% shown in Fig.~\ref{fig:spiral_samples}, samples are transported from a Gaussian
% prior at $t=1$ to a spiral target distribution at $t=0$. This setting provides a
% controlled low-dimensional test bed where we can inspect both the final marginal
% distribution and the learned transport behavior.

% Both methods use the same one-step student architecture and
% evaluation protocol. We compare our training objective against the FreeFlow objective in particular.
% These results suggest that matching the final synthetic distribution alone is
% insufficient to assess distillation quality. The proposed objective learns a more
% accurate average velocity field and a better coupled transport map, even when the
% final marginal Wasserstein score is essentially tied.

We evaluate on an analytic 2D spiral benchmark, where the teacher velocity field and target distribution are available in closed form, without relying on a learned teacher checkpoint. As shown in Fig.~\ref{fig:spiral_samples}, samples are transported from a Gaussian prior at $t=1$ to a spiral target distribution at $t=0$. This controlled low-dimensional setting allows us to evaluate not only the terminal marginal distribution but also the learned transport behavior. 

Both methods use the same one-step student architecture and evaluation protocol. We compare our training objective directly against the FreeFlow objective. The converged-state results in Table~\ref{tab:spiral_benchmark} show that matching the terminal marginal alone is insufficient to characterize distillation quality: although both methods achieve essentially the same final $W_2$ score, our objective learns a substantially more accurate average velocity field and a better coupled transport map.
Concretely, \model reduces the error to the ground-truth average velocity from $0.317$ to $0.070$ and the coupled $t=1$ endpoint error from $0.0316$ to $0.0040$, while both methods reach $W_2=0.190$. In addition to the spiral, we use a 2D checkerboard target with the same protocol for the JVP diagnostics below.
\begin{table}[t]
    \centering
    \caption{Convergence-state evaluation results on the analytic 2D spiral benchmark.}
    \label{tab:spiral_benchmark}
    \begin{adjustbox}{max width=\textwidth}
    \begin{tabular}{lccc}
        \toprule
        Method & $W_2 \downarrow$ &
        GT avg.-velocity MSE $\downarrow$ &
        $t=1$ endpoint MSE $\downarrow$ \\
        \midrule
        FreeFlow & $0.190$ & $0.317$ & $0.0316$ \\
        Ours & $0.190$ & $0.070$ & $0.0040$ \\
        \bottomrule
    \end{tabular}
    \end{adjustbox}
    
\end{table}

\begin{figure*}[t]
    \centering

    \begin{minipage}[t]{0.98\textwidth}
        \centering
        \includegraphics[
            width=0.92\linewidth
        ]{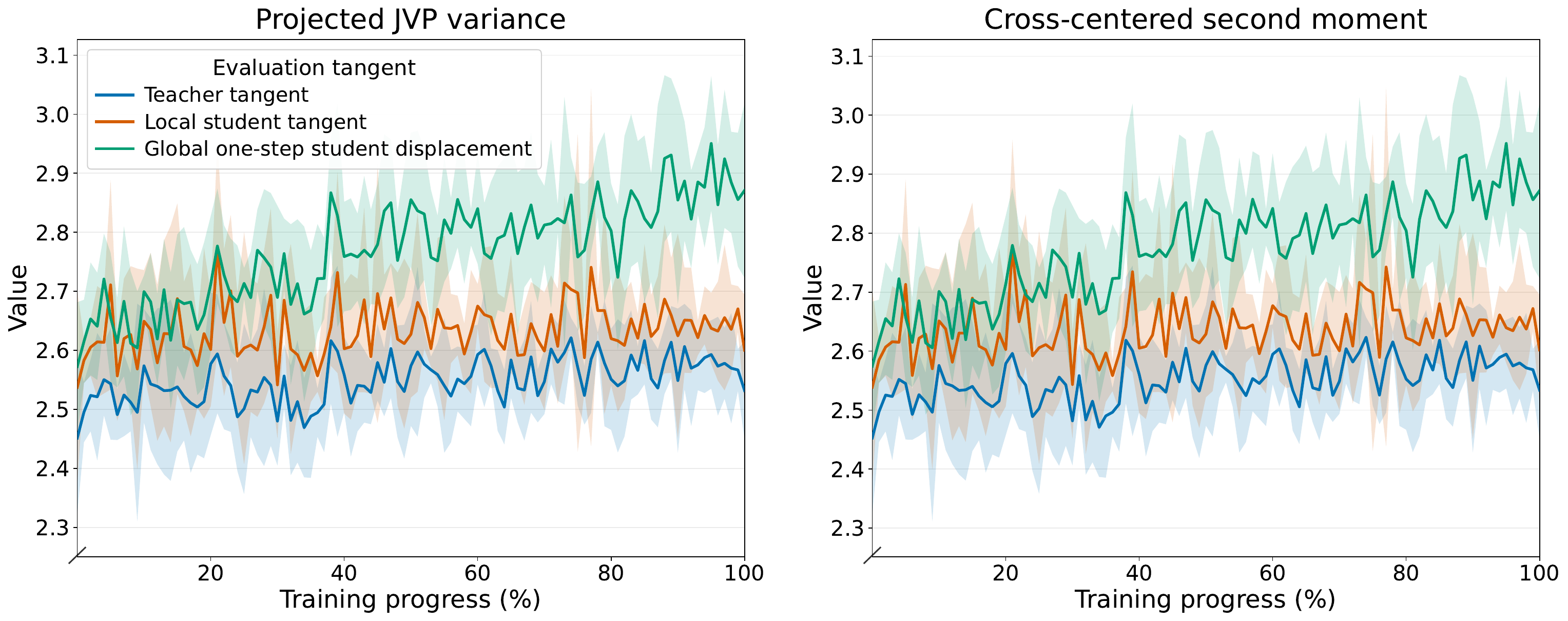}
        \subcaption{Results on the 2D checkerboard dataset.}
        \label{fig:jvp-variance-checkerboard}
    \end{minipage}

    \vspace{0.3em}

    \begin{minipage}[t]{0.98\textwidth}
        \centering
        \includegraphics[
            width=0.92\linewidth
        ]{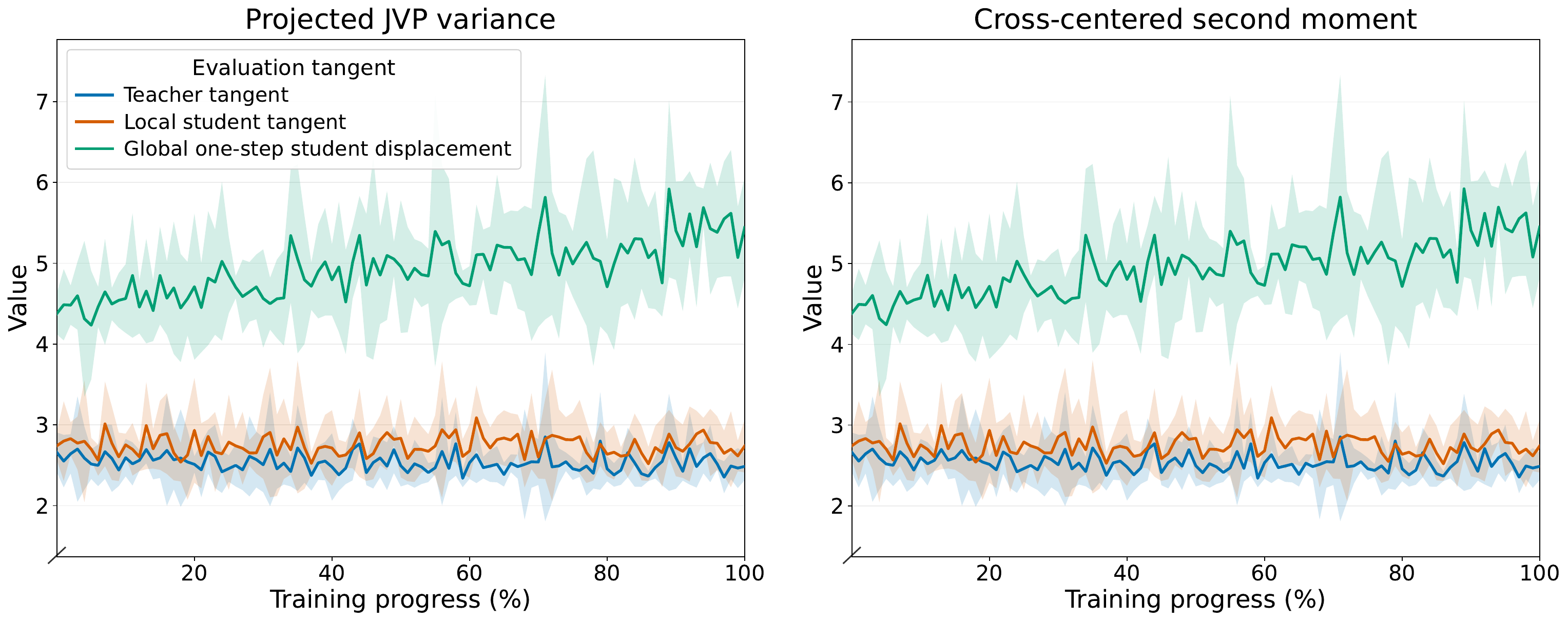}
        \subcaption{Results on the 2D spiral dataset.}
        \label{fig:jvp-variance-spiral}
    \end{minipage}

    \caption{
        \textbf{Effect of the JVP tangent direction on the stochasticity of
        the detached correction.}
        We compare three candidate JVP input directions on the checkerboard
        dataset in~\subref{fig:jvp-variance-checkerboard} and the spiral
        dataset in~\subref{fig:jvp-variance-spiral}: the teacher tangent
        $\vv(\hat{\vx}_t,t)$, the local student tangent
        $\vu_\vtheta(\hat{\vx}_t,t,t)$, and the global one-step student
        displacement $\vu_\vtheta(\vz,1,0)$.
        For each dataset, the left panel reports the projected JVP variance
        $\mathcal{V}_{\mathrm{JVP}}$, while the right panel reports the
        cross-centered second moment
        $\mathcal{M}_{\mathrm{JVP}}^{q,p}$.
        Across both datasets, the teacher tangent produces the smallest and
        most stable statistics, whereas the global one-step displacement
        produces the largest variability, particularly on the spiral dataset.
    }
    \label{fig:jvp-variance-tangent-comparison}
\end{figure*}

Figure~\ref{fig:jvp-variance-tangent-comparison} compares the projected JVP variance $\mathcal{V}_{\mathrm{JVP}}$ and the cross-centered second moment $\mathcal{M}_{\mathrm{JVP}}^{q,p}$ for the three candidate tangent directions on the 2D checkerboard (Figure~\ref{fig:jvp-variance-checkerboard}) and spiral (Figure~\ref{fig:jvp-variance-spiral}) datasets. Across both datasets, evaluating the JVP along the teacher tangent, $\dd \hat{\vx}_t/\dd t=\vv(\hat{\vx}_t,t)$, consistently yields the smallest and most stable correction statistics throughout training. The local student tangent, $\vu_\vtheta(\hat{\vx}_t,t,t)$, results in a modest increase in both quantities, whereas the global one-step student displacement, $\vu_\vtheta(\vz,1,0)$, exhibits substantially larger variability and an increasing trend over training. This difference is particularly pronounced on the spiral dataset, where the global displacement remains well separated from the two state-local tangent choices. Moreover, the projected variance and cross-centered second moment exhibit nearly identical trends and rankings, suggesting that the cross-distribution contribution is relatively small in these experiments and that the observed differences are primarily driven by the intrinsic stochasticity of the JVP correction. These results support using the teacher velocity $\vv(\hat{\vx}_t,t)$ as the JVP input direction when constructing the detached correction target.

\subsection{Text-to-image Dataset}
\label{app:impl_t2i}
To adapt our method on text-to-image dataset at a larger scale, some aspects need to be altered. First, we need to anchor on the teacher trajectory to prevent drifting effect. Similar to the baselines, we use 4 step during our inference. To control the noise added during training, we adopt the TDM~\citep{luo2025tdm} method in our data-free setting. Every time we construct the intermediate states we start with the nearest earlier teacher checkpoint and we add noise according to the forward path. We decrease the variance of our target which leads to a better final performance.
Fig.~\ref{fig:cdmd_t2i_pipeline} illustrates this construction. The student generator first moves the prior sample backward along a coarse schedule, $\vx_1\to\hat{\vx}_s\to\hat{\vx}_r\to\hat{\vx}_0$, and caches every intermediate state. A training state at time $t$ is then obtained by re-noising the nearest cached state that is cleaner than $t$ along the forward conditional path: $\hat{\vx}_0$ for $t\in[0,r)$, $\hat{\vx}_r$ for $t\in[r,s)$, and $\hat{\vx}_s$ for $t\in[s,1]$. Each re-noising step therefore covers a short interval, so the training states stay close to the 4-step trajectory used at inference. In this case, we observe more diversed background and better detailed being preserved from the teacher model. With the same hardware and matching configuration, our \model requires approximately 7 GPU
hours and 18 GB of peak memory\textcolor{darkred}{~(Tab.~\ref{tab:sana_training_cost})}. Its cost is higher than CD but lower than the
reported costs of Diff-Instruct, DMD2, and MFD. Note that our method performs significantly better than CD according to Table~\ref{tab:sana_t2i}.
\begin{table}[htbp]
    \centering
    \caption{\textbf{Training cost on SANA-1.6B.} Approximate wall-clock cost
    and peak memory on one NVIDIA RTX PRO 6000 GPU.}
    \label{tab:sana_training_cost}
    \small
    \setlength{\tabcolsep}{10pt}
    \begin{tabular}{@{}>{\raggedright\arraybackslash}p{0.58\linewidth}rr@{}}
        \toprule
        Method & GPU hours & Peak memory (GB) \\
        \midrule
        Diff-Instruct (VSD)~\citep{luo2023diffinstruct} & $\sim$22 & $\sim$20 \\
        Consistency Distillation (CD)~\citep{cm} & $\sim$4 & $\sim$15 \\
        DMD2~\citep{yin2024improved} & $\sim$30 & $\sim$50 \\
        MFD~\citep{zhao2026meanflow} & $\sim$25 & $\sim$20 \\
        \midrule
        \textbf{CDMD (ours)} & $\sim$7 & $\sim$18 \\
        \bottomrule
    \end{tabular}
\end{table}
\begin{figure}[t]
    \centering  \includegraphics[width=\linewidth]{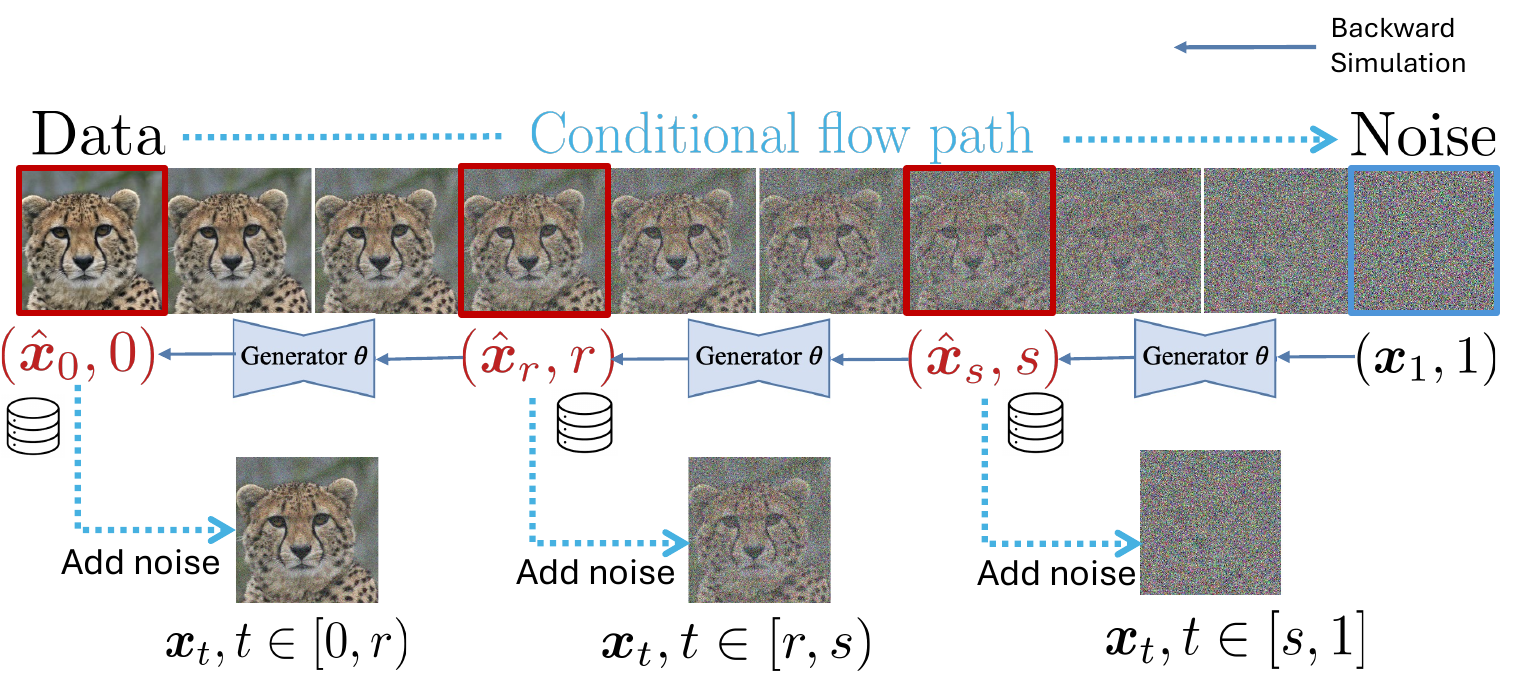}
    \caption{CDMD on Text to Image pipeline. The student generator $\vtheta$ simulates the prior sample backward to cached states $\hat{\vx}_s$, $\hat{\vx}_r$, and $\hat{\vx}_0$. Training states $\vx_t$ are obtained by adding noise to the nearest cached state that is cleaner than $t$.}
    \label{fig:cdmd_t2i_pipeline}
    \vspace{-1em}
\end{figure}
\begin{figure*}[t]
\centering

% =========================================================
% Left: Training algorithm
% =========================================================
\begin{minipage}[t]{0.60\textwidth}
\vspace{0pt}

\captionsetup{
    type=algorithm,
    font={small,bf},
    labelfont={small,bf},
    textfont={small,bf},
    labelsep=space,
    justification=raggedright,
    singlelinecheck=false,
    skip=2pt
}

\hrule height 0.7pt
\vspace{2pt}

\captionof{algorithm}{\model data-free algorithm: training.}
\label{alg:train}

\vspace{-1pt}
\noindent\scriptsize

\vspace{3pt}
\hrule height 0.4pt
\vspace{3pt}

{\ttfamily\scriptsize
\setlength{\tabcolsep}{0pt}
\renewcommand{\arraystretch}{1.15}

\begin{tabular}{@{}l@{}}

t, r = \codekw{sample\_t\_r}() \\
m = \codekw{sample\_m}() \\
e, e' = \codekw{randn}() \\[1em]

\algline{x = fn(e, 1, 0)}
        {fn(z, t, r): student flow map} \\

z1 = (1 \codeop{-} t) \codeop{*} x
     \codeop{+} t \codeop{*} e' \\

z2 = fn(e, 1, t) \\

z = m \codeop{*} z1
    \codeop{+} (1 \codeop{-} m) \codeop{*} z2 \\

z\_sg = \codekw{stopgrad}(z) \\[1em]

\algline{vt = gn(z\_sg, t)}
        {gn(z, t): teacher velocity} \\

\algline{Vt = fc(z\_sg, t, r)}
        {fc: corrected velocity on fn} \\

% u, dudt = \codekw{jvp}(
%     gn,
%     (z.\codekw{detach}(), t, r),
%     (vt, 1, 0)
% ) \\
%
% Vt = u \codeop{+}
%      (t \codeop{-} r) \codeop{*}
%      \codekw{stopgrad}(dudt) \\[0.7em]

error = Vt \codeop{-} \codekw{stopgrad}(vt) \\

loss = \codekw{metric}(error)

\end{tabular}

\vspace{6pt}
\hrule height 0.7pt
}

\vspace{-6pt}

\end{minipage}
\hfill
% =========================================================
% Right: Endpoint conditioning depth ablation
% =========================================================
\begin{minipage}[t]{0.33\textwidth}
\vspace{0pt}

\captionsetup{
    type=table,
    font=small,
    labelfont=bf,
    labelsep=space,
    justification=raggedright,
    singlelinecheck=false,
    skip=5pt
}

\vspace{2pt}

\centering
\small

\setlength{\tabcolsep}{7pt}
\renewcommand{\arraystretch}{1.15}

\begin{tabular}{ccc}
\toprule
\textbf{Depth}
&
\textbf{FID} $\downarrow$
&
\textbf{IS} $\uparrow$
\\
\midrule
0  & 17.60 &154.19\\
5  & 2.80 &276.35\\
8 & \textbf{2.21} & \textbf{284.46}\\
10 & 2.47 & 277.19\\
15 & 2.68 & 273.75\\
20 & 2.65 & 280.04\\
25 & 3.84 & 259.32\\
\bottomrule
\end{tabular}
\captionof{table}{
Effect of the DiT block depth at which the gap $(t-r)$ is injected as a conditioning signal on final performance, under a fixed training budget of 25K iterations.
}
\label{tab:endpoint_depth}
\end{minipage}

\end{figure*}
\begin{table*}[t]
\centering
\caption{
\textbf{Reference results on class-conditional ImageNet256$\times$256.}
\textbf{Left}: Representative 1-NFE and multi-NFE diffusion/flow models.
\textbf{Right}: Representative GANs and autoregressive/masking models.
All numbers are reported with CFG when applicable, and $\times 2$ in NFE indicates that CFG doubles the number of function evaluations at inference time.
}
\label{tab:imagenet_reference}

\noindent
\begin{minipage}[t]{0.52\linewidth}
\vspace{0pt}
\centering
\vspace{2pt}

\resizebox{\linewidth}{!}{%
\setlength{\tabcolsep}{2pt}
\renewcommand{\arraystretch}{1.08}
\small
\begin{tabular}{@{}lccc@{}}
\toprule
\textbf{Method}
& \textbf{\# Params}
& \textbf{NFE}
& \textbf{FID$\downarrow$} \\
\midrule

\multicolumn{4}{@{}l}{
\textit{\textbf{1-NFE Diffusion / Flow from Scratch}}
} \\
\midrule

Shortcut-XL/2~\citep{Frans2024shortcut}
    & 675M & 1 & 10.60 \\
IMM-XL/2~\citep{zhou2025inductive}
    & 675M & $1{\times}2$ & 7.77 \\
MeanFlow-XL/2~\citep{geng2025mean}
    & 676M & 1 & 3.43 \\
TVM~\citep{zhou2025terminal}
    & 678M & 1 & 3.29 \\
SoFlow-XL/2~\citep{luo2026soflow}
    & 675M & 1 & 2.96 \\
DMF-XL/2~\citep{lee2026decoupled}
    & 675M & 1 & 2.16 \\
iMF-XL/2~\citep{geng2025improved}
    & 675M & 1 & 1.72 \\

\midrule
\multicolumn{4}{@{}l}{
\textit{\textbf{Multi-NFE Diffusion / Flow}}
} \\
\midrule

ADM-G~\citep{dhariwal2021diffusion}
    & 554M & 250$\times$2 & 4.59 \\
LDM-4-G~\citep{rombach2021highresolution}
    & 400M & 250$\times$2 & 3.60 \\
DiT-XL/2~\citep{peebles2023scalable}
    & 675M & 250$\times$2 & 2.27 \\
SiT-XL/2~\citep{ma2024sit}
    & 675M & 250$\times$2 & 2.06 \\
SiT-XL/2+REPA~\citep{yu2024representation}
    & 675M & 250$\times$2 & 1.42 \\
SiD2~\citep{hoogeboom2025simpler}
    & -- & 512$\times$2 & 1.38 \\
DDT-XL/2~\citep{wang2025ddt}
    & 675M & 250$\times$2 & 1.26 \\
LightningDiT~\citep{yao2025reconstruction}
    & 675M & \makecell{250\\250$\times$2} & \makecell{2.05\\1.35} \\
SiT-XL/2+REPA-E~\citep{leng2025repae}
    & 675M & \makecell{250\\250$\times$2} & \makecell{1.69\\1.12} \\
\bottomrule
\end{tabular}
}
\end{minipage}
\hfill
\begin{minipage}[t]{0.47\linewidth}
\vspace{0pt}
\centering
\vspace{2pt}

\resizebox{\linewidth}{!}{%
\setlength{\tabcolsep}{2pt}
\renewcommand{\arraystretch}{1.08}
\small
\begin{tabular}{@{}lccc@{}}
\toprule
\textbf{Method}
& \textbf{\# Params}
& \textbf{NFE}
& \textbf{FID$\downarrow$} \\
\midrule

\multicolumn{4}{@{}l}{
\textit{\textbf{GANs}}
} \\
\midrule

BigGAN~\citep{brock2018large}
    & 112M & 1 & 6.95 \\
GigaGAN~\citep{kang2023scaling}
    & 569M & 1 & 3.45 \\
StyleGAN-XL~\citep{Sauer2021ARXIV}
    & 166M & 1 & 2.30 \\

\midrule
\multicolumn{4}{@{}l}{
\textit{\textbf{Autoregressive / Masking}}
} \\
\midrule

MaskGIT~\citep{chang2022maskgit}
    & 227M & 8 & 6.18 \\
RQ-Transformer~\citep{lee2022autoregressive}
    & 3.8B & 256$\times$2 & 3.80 \\
STARFlow~\citep{gu2025starflow}
    & 1.4B & 1024$\times$2 & 2.40 \\
LlamaGen-3B~\citep{sun2024autoregressive}
    & 3.1B & 256$\times$2 & 2.18 \\
VAR-$d30$~\citep{VAR}
    & 2B & 10$\times$2 & 1.92 \\
MAR-H~\citep{li2024autoregressive}
    & 943M & 256$\times$2 & 1.55 \\
RAR-XXL~\citep{yu2025randomized}
    & 1.5B & 256$\times$2 & 1.48 \\
    
\midrule
\multicolumn{4}{@{}l}{
\textit{\textbf{Data-based Distillation}}
} \\
\midrule

\rowcolor{gray!12}
\multicolumn{4}{@{}l}{
\textit{Teacher: SiT-XL/2 (FID = 2.06)}
} \\[1.5pt]

SDEI~\citep{liu2025learning}
    & 675M
    & 8
    & 2.46 \\

FACM~\citep{peng2025facm}
    & 675M
    & 2
    & 2.07 \\

\midrule

\rowcolor{gray!12}
\multicolumn{4}{@{}l}{
\textit{Teacher: SiT-XL/2+REPA (FID = 1.37)}
} \\[1.5pt]

$\pi$-Flow~\citep{chen2025pi}
    & 675M
    & \makecell{1\\2}
    & \makecell{2.85\\1.97} \\
\bottomrule
\end{tabular}
}
\end{minipage}
\end{table*}

\section{Ablation on Endpoint Conditioning}
\label{app:endpoint_conditioning}
Fig.~\ref{fig:cdmd_transformer_architecture} compares how three transformer designs receive time information. (a) The DiT teacher predicts the instantaneous velocity and conditions every block on the embeddings of $t$ and the class label $y$ through adaLN-Zero. (b) The MeanFlow transformer predicts the average velocity and adds an embedding of the second time variable to the same signal, so every block sees both time variables. (c) The \model transformer keeps the teacher conditioning for the all blocks and adds the endpoint embedding, implemented as the gap $t-r$, only to the last $L-D$ blocks and the final layer.

Tab.~\ref{tab:endpoint_depth} varies $D$ for the $L=28$ student under a fixed budget of 25K iterations. At depth $D$, the first $D$ blocks see only the teacher conditioning, and $D=0$ conditions every block on $(t-r)$, as in MeanFlow. Conditioning every block on the gap ($D=0$) gives an FID of 17.60, far worse than any delayed variant. Delaying the signal by five blocks already lowers FID to 2.80. All depths from 5 to 20 reach FID at most 2.80 and IS above 273, and $D=8$ performs best with FID 2.21 and IS 284.46. Delaying the signal too far ($D=25$) raises FID to 3.84, since only three blocks remain to separate different intervals that start from the same $(\vx_t,t)$.

We interpret this effect through teacher capacity. The first $D$ blocks receive exactly the conditioning of the teacher, so they can reproduce the teacher features for the instantaneous velocity at $(\vx_t,t)$, which every interval $[r,t]$ shares. The endpoint signal then only modulates the later blocks, which turn these shared features into the average velocity of a specific interval. With $D=0$, every block must handle both time variables from the start, and the student spends capacity on the endpoint before it reproduces the teacher. This matches our finding on the $t$ and $r$ schedule in Appendix~\ref{app:inference_ldit}: the conditioning depth and the fraction of $t=r$ pairs both control how much of the student stays dedicated to the teacher velocity, and preserving this teacher capacity is what keeps the distilled student close to the teacher.

\section{Ablation on Inference-time scaling}
\label{app:inference_scaling}
\subsection{Sampler Comparison with the SiT-XL/2+VA-REPA Teacher}
\label{app:sampler_varepa}
We further study inference-time scaling by comparing Euler, Heun, and DDIM samplers under different classifier-free guidance scales. Figure~\ref{fig:sampler_comparison} reports FID as a function of the number of sampling steps for $w=1.5$ and $w=2.0$. In some literatures~\citep{lee2026decoupled, cm}, DDIM method has also been referred as the Restart sampler.

The ablation suggests that increasing the inference-time sampling budget does not automatically improve our distilled model. DDIM gives the strongest low-step performance in both guidance settings and remains especially stable from one to four steps. Euler is competitive at one or two steps for $w=1.5$, but its FID increases as more steps are used and then plateaus. Heun is the least stable in the low-step regime: it exhibits a sharp FID spike at two steps and only recovers after additional steps. The effect is more pronounced under stronger guidance, where both Euler and Heun drift to noticeably worse FID than DDIM as the sampling budget grows.

These results give useful guidance for scaling inference time in our model. The interval-conditioned parameterization supports multi-step sampling, but the benefit depends strongly on the sampler and guidance scale. In this setting, DDIM is the most reliable choice for converting extra inference steps into stable sample quality, while higher-order or updates can amplify student-teacher mismatch at small step counts. We attribute this behavior primarily to these two factors. First, our training schedule assigns substantial probability mass to intervals with $t\neq r$, which may make the learned update less compatible with aggressive higher-order extrapolation at inference time. Second, the quality of the teacher, particularly its performance without CFG, has a pronounced effect on the distilled model. Empirically, we observe that differences in teacher performance without CFG translate into differences in the final student performance. 
Note that this student is trained with $t=r$ for only 10\% of the pairs (Tab.~\ref{tab:training_config}). Appendix~\ref{app:inference_ldit} tests the first factor with a student trained with $t=r$ for 50\% of the pairs.

\subsection{Inference-time Scaling with the LightningDiT-XL/2 Teacher}
\label{app:inference_ldit}
Fig.~\ref{fig:sampler_comparison_ldit} repeats the inference-time scaling study for the student distilled from LightningDiT-XL/2. We sample the same student with Euler and DDIM from 1 to 64 steps without retraining. Unlike the SiT-XL/2+VA-REPA student in Fig.~\ref{fig:sampler_comparison}, the Euler FID decreases from 2.04 at 1 step to 1.62 at 2 steps and 1.37 at 4 steps (Tab.~\ref{tab:imagenet_generation}), and reaches its minimum at 8 steps. From 8 to 64 steps, the Euler FID stays nearly flat and close to the teacher reference instead of drifting upward. The Euler IS rises from about 294 at 1 step to about 320 at 16 steps and then saturates. Additional steps thus refine the samples, as they do for the teacher FM model, and never cause the large degradation of the 10\% $t=r$ student.

The main difference between the two students is the training schedule. The LightningDiT student samples $t=r$ for 50\% of the pairs, whereas the SiT-XL/2+VA-REPA student in Fig.~\ref{fig:sampler_comparison} uses 10\% (Tab.~\ref{tab:training_config}). On $t=r$ pairs, the objective reduces to $\norm{\vu_\vtheta(\hat{\vx}_t,t,t)-\vv(\hat{\vx}_t,t)}^2$, so the student directly learns the teacher instantaneous velocity, which governs small sampling steps. This result matches Fig.~\ref{fig:ablation_tr_scaling_w2} in the main text, where raising the fraction of $t=r$ pairs from 10\% to 50\% substantially reduces the multi-step degradation (FID about 5.1 instead of 9.8 at 50 steps). Since the same behavior appears with a different teacher, the stability is a property of the $t$ and $r$ schedule rather than of a particular teacher: keeping $t=r$ for 50\% of the pairs is enough to preserve the multi-step behavior of the teacher as the number of sampling steps increases, while still giving the best 1-NFE FID in Tab.~\ref{tab:imagenet_generation}.

DDIM behaves differently. Its FID is close to Euler at 1 and 2 steps but rises steadily after 4 steps and exceeds 5 at 64 steps, while its IS grows monotonically to above 420. Rising IS together with rising FID indicates that DDIM increasingly trades sample diversity for class fidelity as the number of steps grows. For this student, Euler is therefore the preferred sampler for inference-time scaling.

\begin{figure}[t]
    \centering
    \begin{subfigure}{0.48\linewidth}
        \centering
        \includegraphics[width=\linewidth]{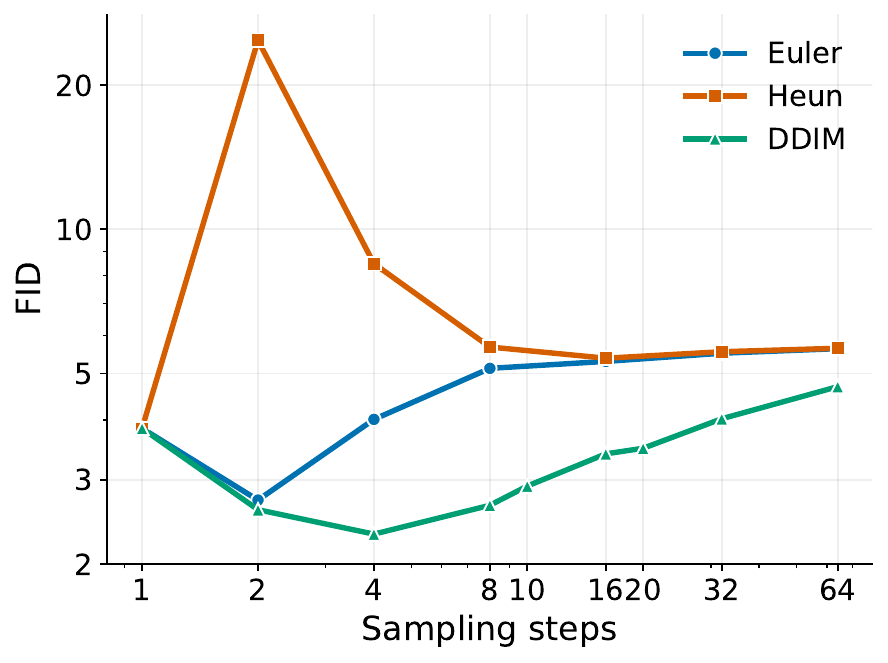}
        \caption{$w=1.5$}
        \label{fig:sampler_comparison_w15}
    \end{subfigure}
    \hfill
    \begin{subfigure}{0.48\linewidth}
        \centering
        \includegraphics[width=\linewidth]{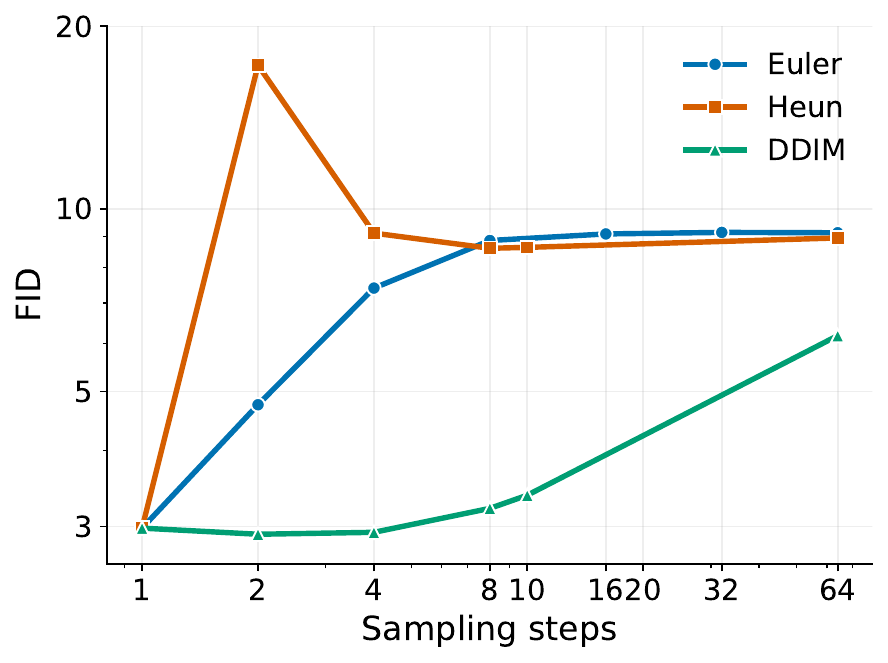}
        \caption{$w=2.0$}
        \label{fig:sampler_comparison_w20}
    \end{subfigure}

    \caption{
    Sampler comparison for inference-time scaling under classifier-free guidance scales $w=1.5$ and $w=2.0$. The student network is trained with SiT+VA-REPA teacher.
    We report FID across different numbers of sampling steps for Euler, Heun, and DDIM. DDIM gives the most stable low-step behavior, while Heun shows a sharp degradation at two steps and Euler becomes less favorable as the sampling budget increases.
    }
    \label{fig:sampler_comparison}
\end{figure}

\begin{figure}[t]
    \centering
    \begin{subfigure}{0.48\linewidth}
        \centering
        \includegraphics[width=\linewidth]{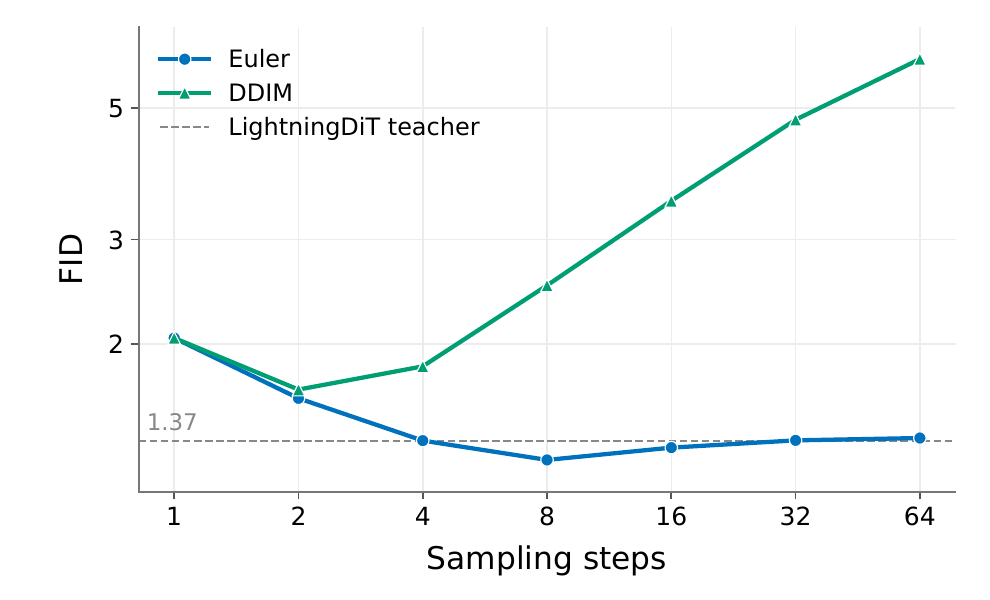}
        \caption{FID ($\downarrow$)}
        \label{fig:inf_scaling_fid_steps_ldit}
    \end{subfigure}
    \hfill
    \begin{subfigure}{0.48\linewidth}
        \centering
        \includegraphics[width=\linewidth]{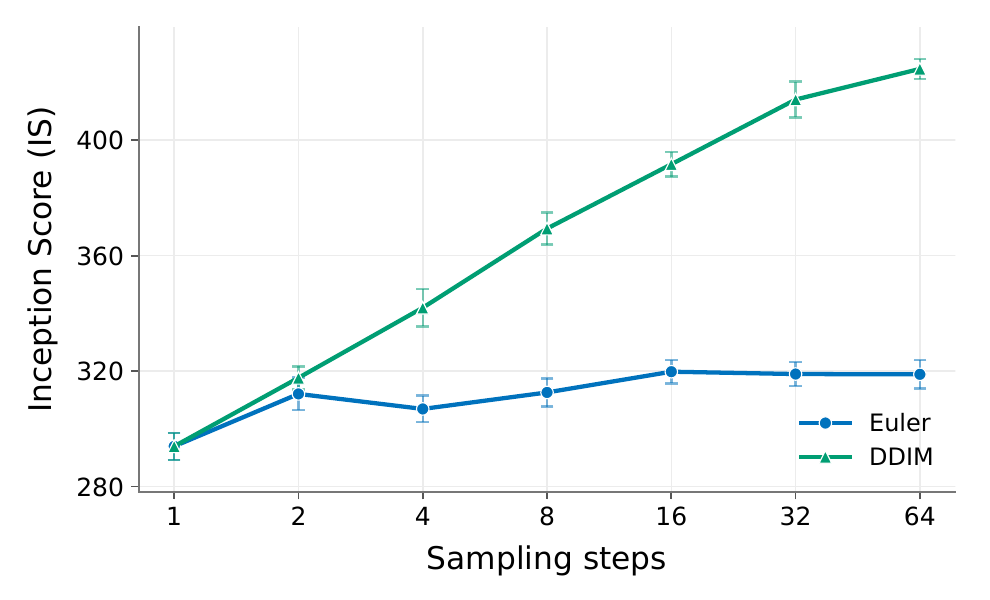}
        \caption{IS ($\uparrow$)}
        \label{fig:inf_scaling_is_steps_ldit}
    \end{subfigure}

    \caption{
    \textbf{Inference-time scaling with the LightningDiT-XL/2 teacher.} (a) FID and (b) IS against the number of sampling steps for Euler and DDIM, using the same distilled student trained with $t=r$ for 50\% of the pairs. Euler sampling improves up to 8 steps and stays close to the teacher reference (dashed) up to 64 steps, preserving the multi-step behavior of the teacher. DDIM raises IS but degrades FID beyond 4 steps.}
    \label{fig:sampler_comparison_ldit}
\end{figure}

\begin{figure}
    \centering
    \includegraphics[width=\linewidth]{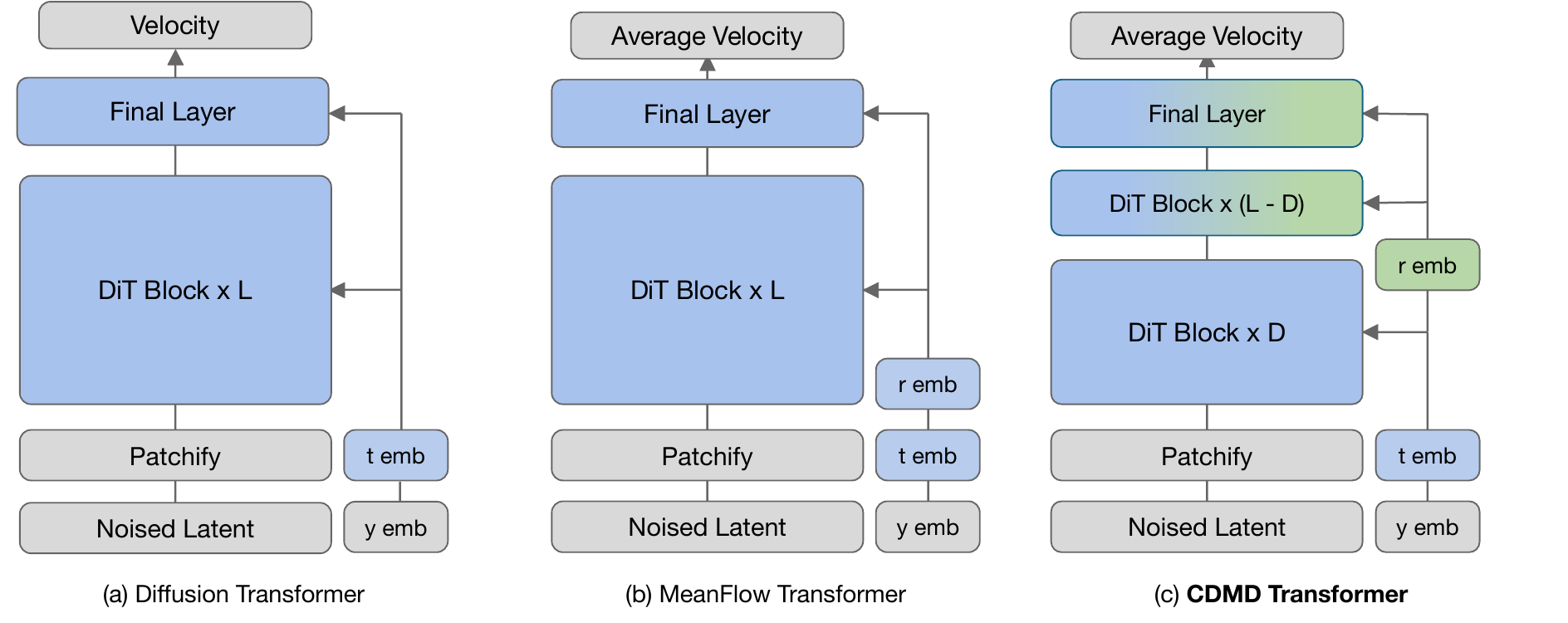}
    \caption{High level comparison of Diffusion Transformer architectures. (a) The DiT teacher conditions every block on $t$ and $y$. (b) MeanFlow conditions every block on both time variables. (c) \model keeps the teacher conditioning in \textbf{all} the blocks and injects the endpoint embedding only into the last $L-D$ blocks and the final layer (Tab.~\ref{tab:endpoint_depth}).}
    \label{fig:cdmd_transformer_architecture}
\end{figure}

\begin{figure}
    \vspace{-0.5em}
    \centering
    \includegraphics[width=\linewidth]{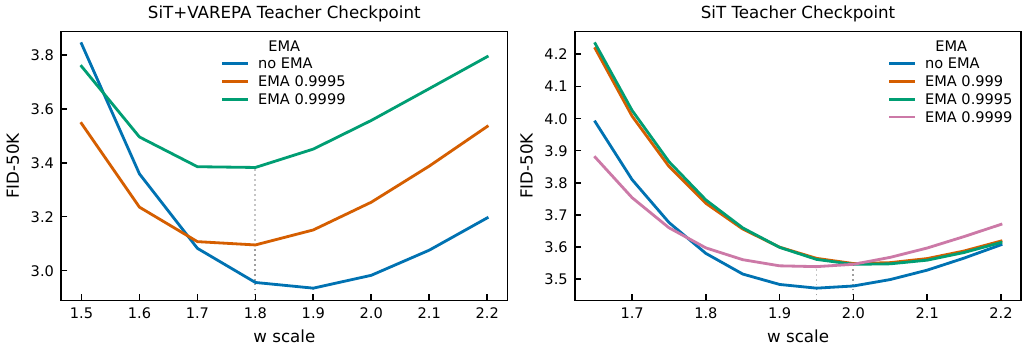}
    \caption{FID-50K versus classifier-free guidance scale $w$ condition for students distilled from two teacher checkpoints. 
Left: SiT+VAREPA teacher checkpoint. Right: SiT teacher checkpoint. Different curves compare various EMA settings. Note that we do not condition on guidance scale when we distill from Lightning-DiT teacher model.
}
    \label{fig:ablation_w_scale}
\end{figure}

\section{\model with Guidance}
\label{app:guidance}

Classifier-free guidance (CFG)~\citep{ho2022classifier} is significant to class-conditional image generation because it facilitates conditional generation by extrapolating from unconditional to conditional predictions. Since the most pre-trained diffusion baselines are evaluated with CFG, we formulate our distillation objective to match the guided teacher dynamics rather than the unguided velocity field alone.
Let $\vv(\vx_t, t \mid \vc)$ and $\vv(\vx_t, t \mid \varnothing)$ denote the conditional and unconditional teacher velocity fields, respectively. For a guidance scale $w \geq 1$, the guided velocity field is defined as $\vv^{\text{cfg}}(\vx_t, t \mid \vc)=(1 - w)\,\vv(\vx_t, t \mid \varnothing)+w\,\vv(\vx_t, t \mid \vc).$

In addition, we can reinforce this signal by incorporating student guidance with scale $\alpha \geq 0$ during training to support 1-NFE sampling~\citep{geng2025mean}. Compared with standard CFG sampling, which evaluates both conditional and unconditional branches at each solver step and therefore doubles the NFE count, this training-time guided target lets the distilled student absorb part of the conditional guidance signal before inference. Specifically, we define the enhanced guided velocity field as follows:
\begin{equation}
\vv^{\text{cfg++}}(\vx_t, t \mid \vc)
=
\vv^{\text{cfg}}(\vx_t, t \mid \vc)
+
\alpha
\Big(
\vu_\vtheta(\vx_t, t, t \mid \vc)
-
\vu_\vtheta(\vx_t, t, t \mid \varnothing)
\Big),
\end{equation}
where $\vu_\vtheta(\vx_t, t, t \mid \vc)$ approximates the ground-truth instantaneous marginal velocity field under condition $\vc$. This additional term further amplifies the conditional signal while retaining the stabilizing effect of the unconditional component. Following the flexible guidance in iMF~\citep{geng2025improved}, we allow the network to condition on the guidance term.
\vspace{-1em}

\paragraph{Guidance and tangent choice.} The teacher velocity enters the objective twice: as the regression target and as the tangent in the JVP. We find that using the CFG velocity $\vv^{\text{cfg}}(\vx_t, t \mid \vc)$ as the tangent and $\vv^{\text{cfg++}}(\vx_t, t \mid \vc)$ as the target performs best. Fig.~\ref{fig:ablation_w_scale} shows that the best $w$ depends on both the teacher checkpoint and the EMA rate. Moderate $w$ gives the best FID in most settings, while large $w$ or strong EMA smoothing degrades it.
For the SiT-XL/2+VA-REPA teacher, all three EMA settings reach their lowest FID at $w\in[1.8,1.9]$. The student without EMA attains the lowest FID-50K of about 2.93, whereas an EMA decay of 0.9999 raises the minimum to about 3.38. For the SiT-XL/2 teacher, the optimum shifts to $w\approx2.0$, and the student without EMA again performs best at about 3.47. We also observe that student guidance is difficult to tune with Lightning-DiT as the teacher, so we set $\alpha=0$. It occasionally yields a very high IS with exaggerated class features.
\section{Limitations and broader impact}
\label{app:limitations}
A key limitation of the current formulation is the absence of meaningful constraints on the total derivative term $\dv{t}\vu(\vx_t, t, r)=\vv(\vx_t, t)D_{\vx_t} \vu + \partial_t \vu$. To date, this component has not been rigorously analyzed. In prior work, it is typically detached from the computational graph and excluded from backpropagation, effectively treating it as a constant. However, empirical observations reveal that its norm is large and fluctuates substantially throughout training, suggesting that this simplification may be problematic. A plausible direction for regularization is to leverage a well-trained reference velocity field $\vv(\vx_t, t)$, encouraging the derivative term to exhibit similar, more stable behavior. 
In practice, not only is the dataset private, but the teacher model weights are also inaccessible. Hence, we cannot initialize the student model with the teacher model weights, which is a common practice in distillation work. This makes training more challenging and less stable. Moreover, we speculate that for different tasks, the way we bootstrapped our $x_t$ is of great importance. This can substantially influence the learning dynamics and overall effectiveness of distillation.

\section{Image Prompts}
\label{app:prompts}
We list the text prompts used for the 4-step SANA-1.6B samples in Fig.~\ref{fig:t2i_cdmd_sana_figs}.
\begin{enumerate}
\setlength{\itemsep}{5pt}

\item \textbf{Tram:} watercolor of Istanbul rainy night with the tramway

\item \textbf{Portrait:} portrait photo of a girl, photograph, highly detailed face, depth of field

\item \textbf{Neon bear:} a bear riding a skateboard, 4k, high resolution, with a Neon sign at the back saying "CDMD"

\item \textbf{Warrior:} Artstation Assassin S Creed Iv Black Flag Concept Art Martin Deschambault
\item \textbf{Shiba Inu:} a shiba inu wearing a beret and black turtleneck

\item \textbf{Llama:} A photo of llama wearing sunglasses standing on the deck of a spaceship with the Earth in the background.

\item \textbf{Pirate ship:} Pirate ship trapped in a cosmic maelstrom nebula, rendered in cosmic beach whirlpool engine, volumetric lighting, spectacular, ambient lights, light pollution, cinematic atmosphere, art nouveau style, illustration art artwork by SenseiJaye, intricate detail, 2k

\item \textbf{Astronaut:} Astronaut in a jungle, cold color palette, muted colors, detailed, 4k
\item \textbf{Autumn river:} Fascinating autumn forest scene with a river and foliage.

\item \textbf{Capybara:} a capybara made of voxels sitting in a grass field
\end{enumerate}
\clearpage
\section{Qualitative samples}
\label{app:qualitative}
\begin{figure}[h]
    \centering
    \includegraphics[width=\textwidth]{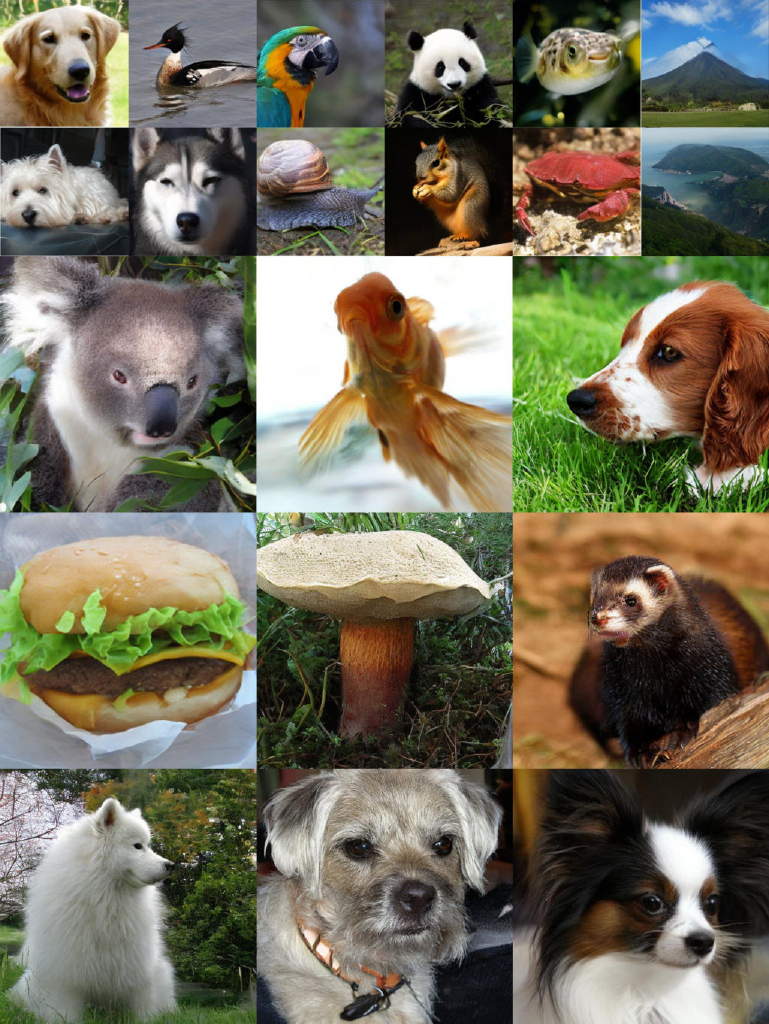}
    \caption{Qualitative results on ImageNet 256$\times$256 and 512$\times$512 datasets with 1-NFE sampling. Note that we sample the class index randomly.}
    \label{fig:more_qual_results}
\end{figure}

% Each class figure holds five 3.5:1 (2x7) panels. Panels are sized by height so that the
% largest possible grid fits on one page: \qualreserve is the vertical space left
% for the five subcaptions, the gaps and the figure caption.
\newlength{\qualreserve}
\setlength{\qualreserve}{96pt}
\newlength{\qualpanelh}
\setlength{\qualpanelh}{\dimexpr(\textheight-\qualreserve)/5\relax}
\newcommand{\qualsubfig}[2]{%
    \begin{subfigure}{\textwidth}
        \centering
        \includegraphics[width=\linewidth,height=\qualpanelh,keepaspectratio]{#1}
        \caption{#2}
    \end{subfigure}}
\newcommand{\qualfig}[2]{%
    \begin{figure}[p]
        \centering
        \captionsetup[subfigure]{skip=1pt}
        \qualsubfig{figs/qualitative_ldit/class#1/1nfe.png}{1-step}\\[1pt]
        \qualsubfig{figs/qualitative_ldit/class#1/2nfe_euler.png}{2-step Euler}\\[1pt]
        \qualsubfig{figs/qualitative_ldit/class#1/2nfe_ddim.png}{2-step DDIM}\\[1pt]
        \qualsubfig{figs/qualitative_ldit/class#1/4nfe_euler.png}{4-step Euler}\\[1pt]
        \qualsubfig{figs/qualitative_ldit/class#1/4nfe_ddim.png}{4-step DDIM}
        \caption{\textbf{Uncurated} samples generated from \model with LightningDiT-XL/2 with class id #1: \texttt{#2}.}
        \label{fig:qual_ldit_#1}
    \end{figure}}

\qualfig{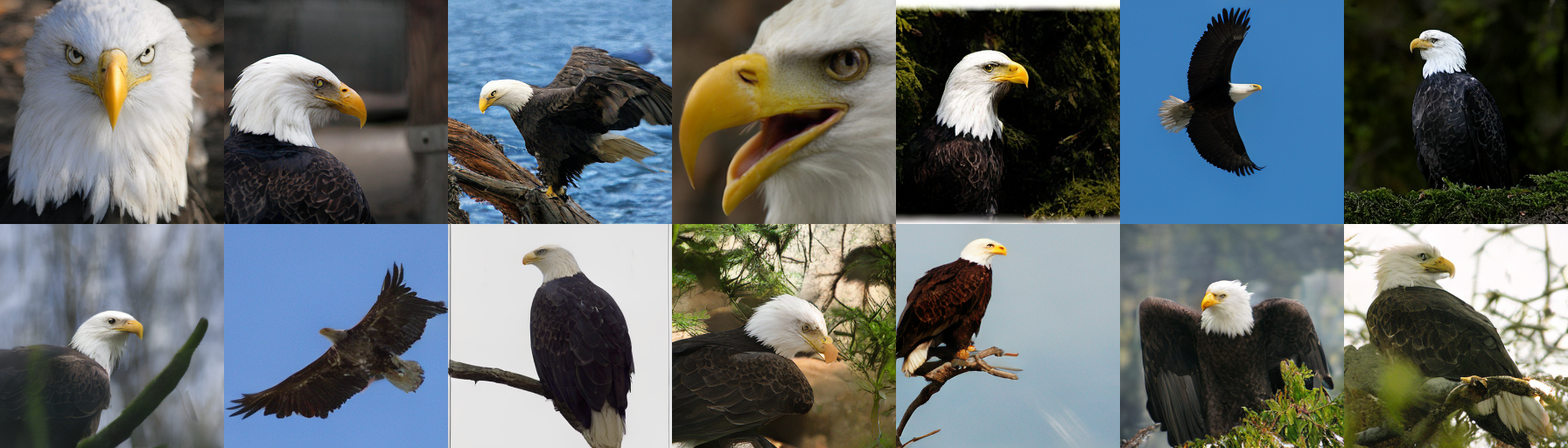}{bald eagle}
\qualfig{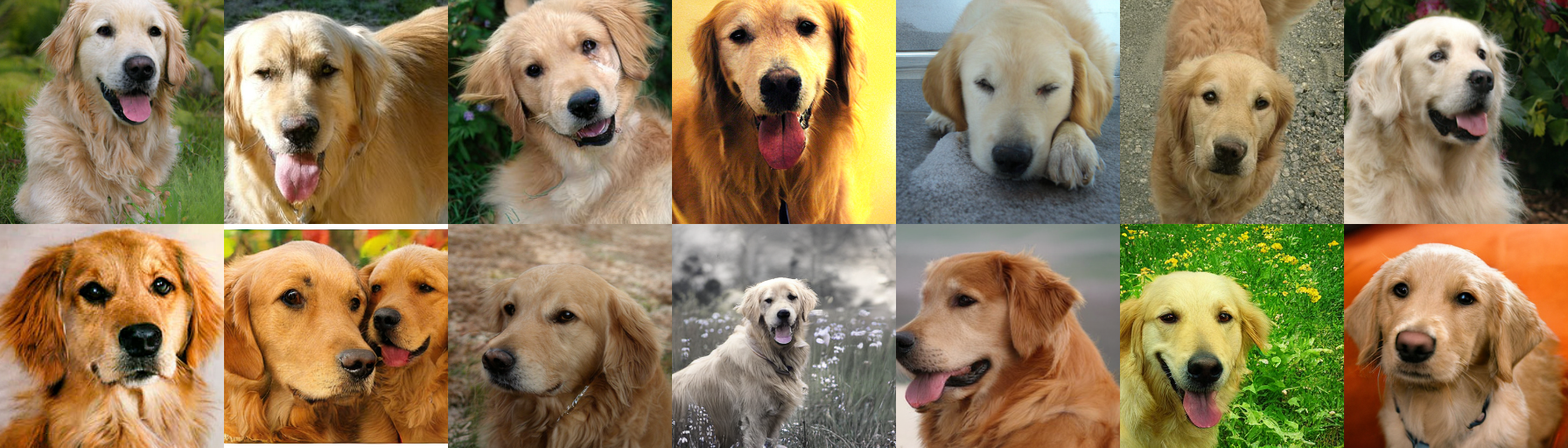}{golden retriever}
\qualfig{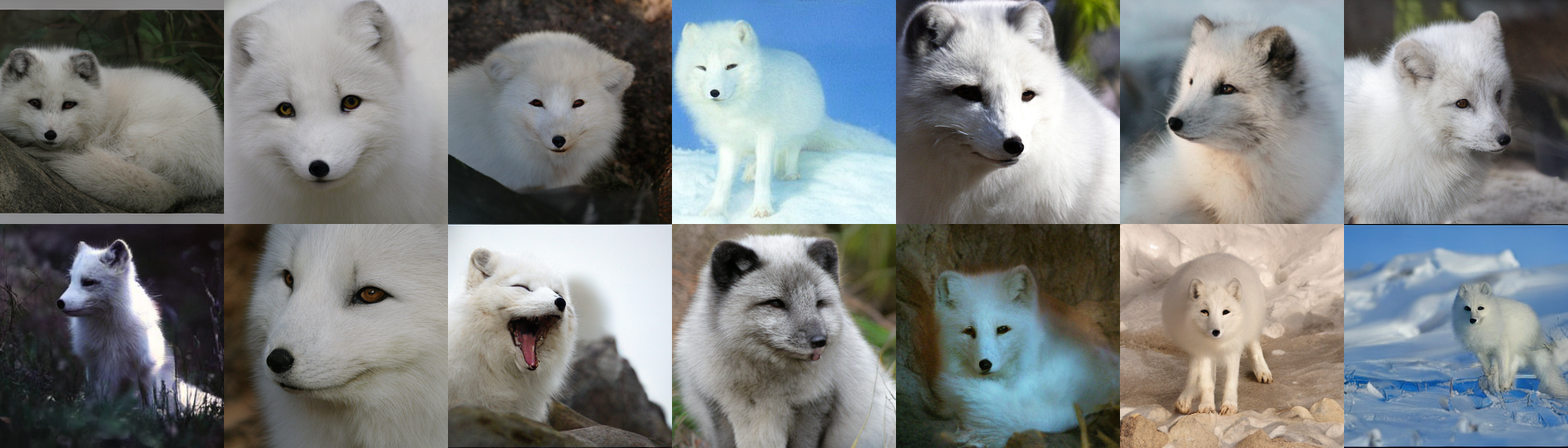}{Arctic fox}
\qualfig{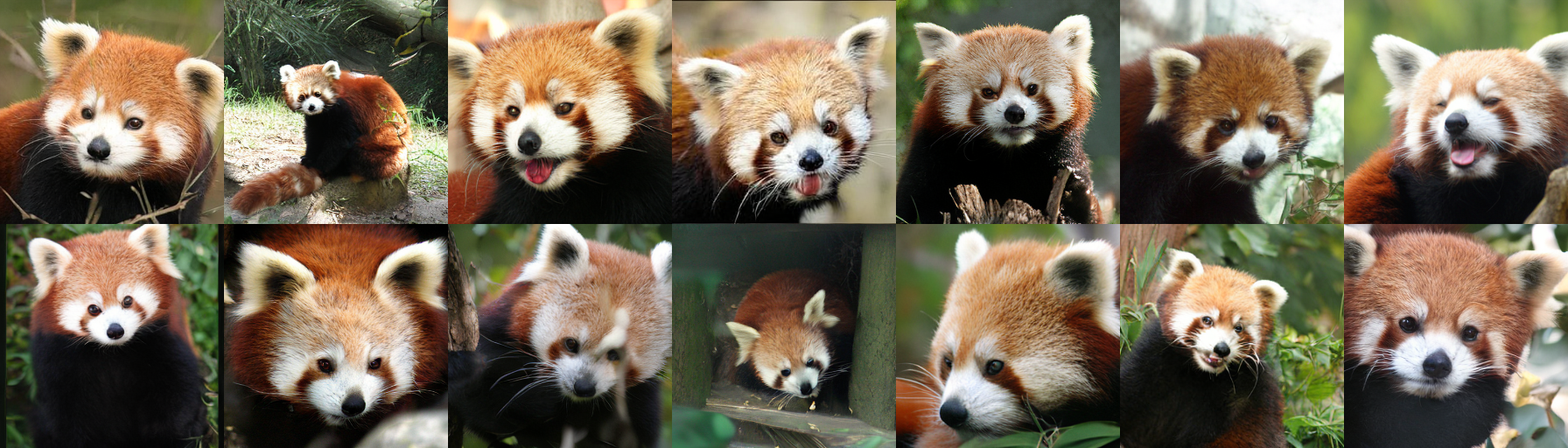}{lesser panda}
\qualfig{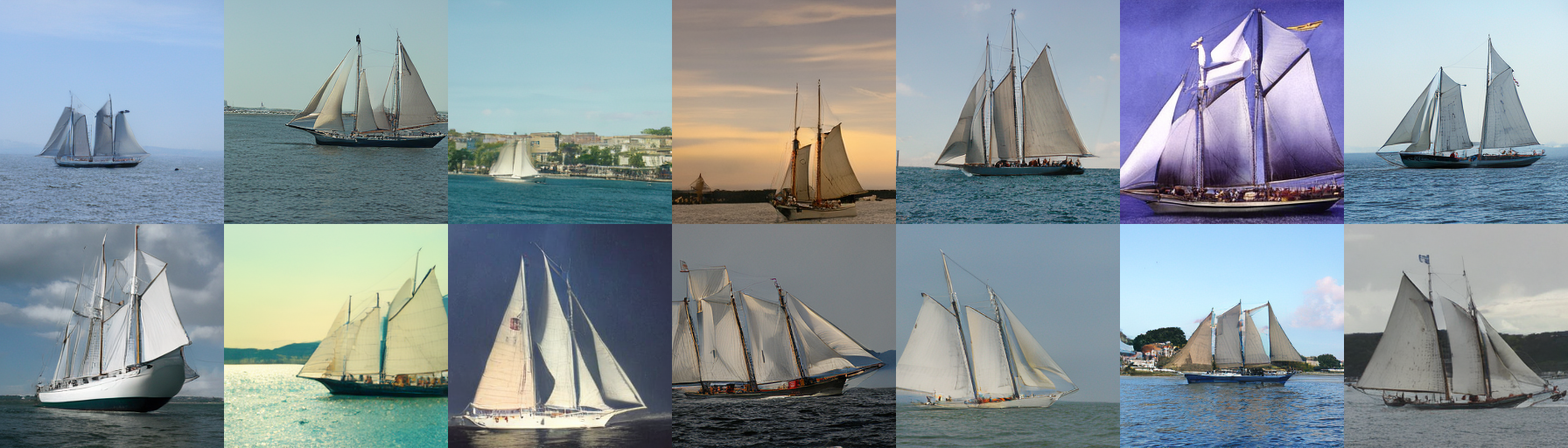}{schooner}
\qualfig{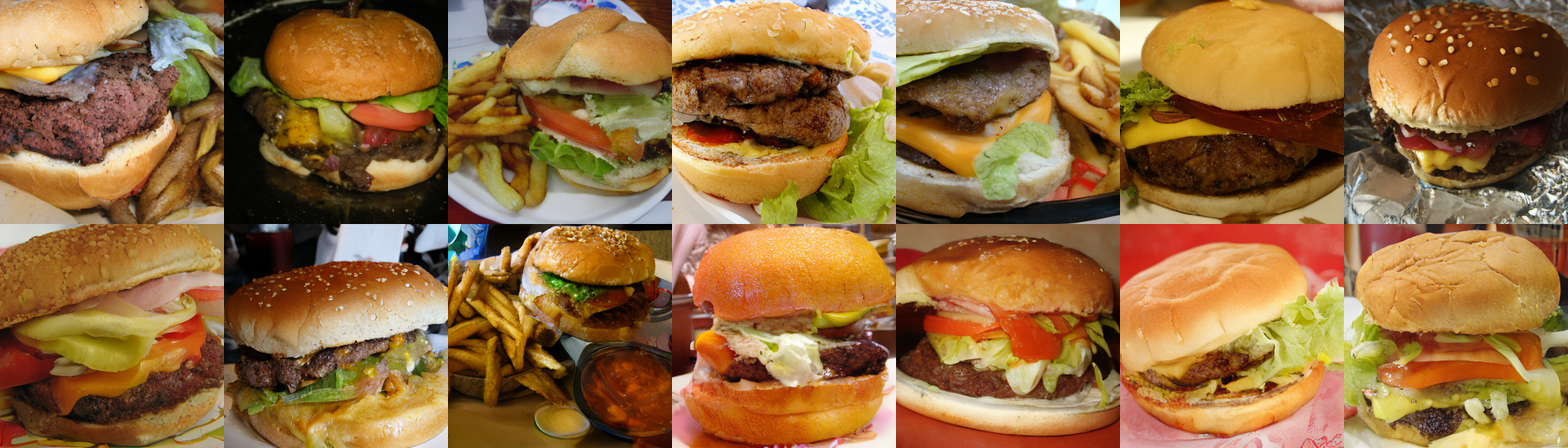}{cheeseburger}
\qualfig{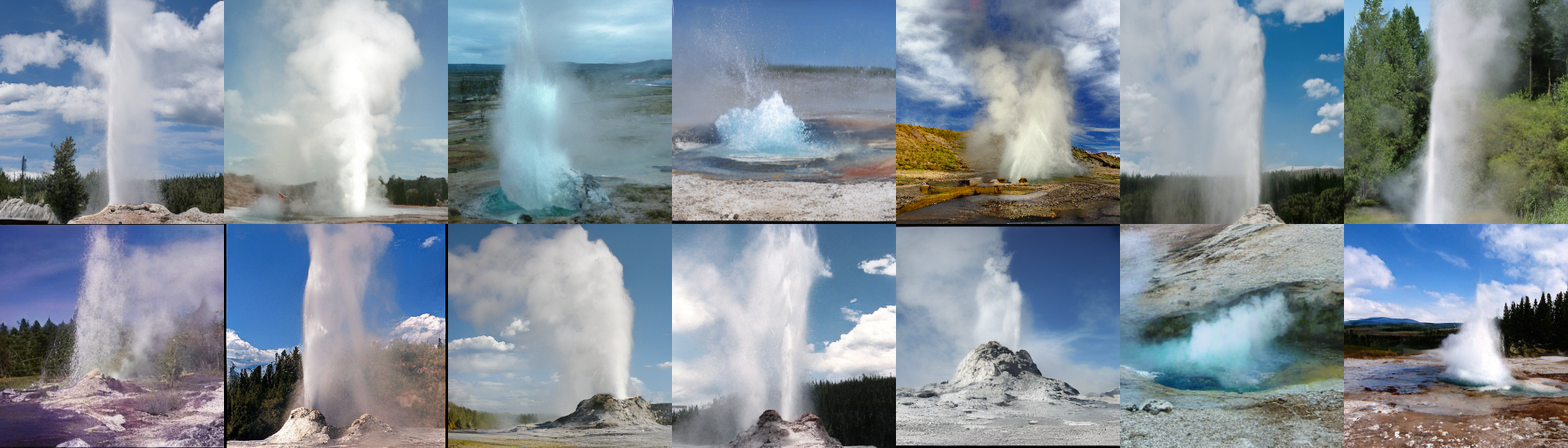}{geyser}
\clearpage

\end{document}